\documentclass[journal,10pt,onecolumn,draftclsnofoot,]{IEEEtran} %for supplimentary submission

% include all use package statements
\usepackage[pdftex]{graphicx}
\usepackage{amsmath}
\usepackage{amsfonts}
\usepackage{amssymb}
\usepackage{amsthm}
\usepackage{graphicx}
\usepackage{epstopdf}
\usepackage{algorithmic}
\usepackage{breqn}
\usepackage{epstopdf}
\usepackage{textcomp}
\usepackage{todonotes}
%\usepackage{array}
%\ifCLASSOPTIONcompsoc
%  \usepackage[caption=false,font=normalsize,labelfont=sf,textfont=sf]{subfig}
%\else
%  \usepackage[caption=false,font=footnotesize]{subfig}
%\fi
%\usepackage{fixltx2e}
%\usepackage{stfloats}
% \usepackage{dblfloatfix}
%\ifCLASSOPTIONcaptionsoff
%  \usepackage[nomarkers]{endfloat}
% \let\MYoriglatexcaption\caption
% \renewcommand{\caption}[2][\relax]{\MYoriglatexcaption[#2]{#2}}
%\fi
%\usepackage{url}

% correct bad hyphenation here
\hyphenation{op-tical net-works semi-conduc-tor} 

% define theorems and other numbered shit here

\newtheorem{definition}{Definition}[section]
\newtheorem{theorem}{Theorem}
\newtheorem{lemma}{Lemma}
\newtheorem*{remark}{Remark}

% define new commands here
\newcommand\norm[1]{\left\lVert#1\right\rVert}
\newcommand\abs[1]{\left|#1\right|}
\newcommand\sech{\textrm{sech}}
\newcommand\pder[2]{\frac{\partial #1}{\partial #2}}
\newcommand\pdder[2]{\frac{\partial^2 #1}{\partial #2^2}}

\begin{document}

\title{Controlling Parent Systems Through Swarms Using Abstraction}
\author{Kyle~L.~Crandall,~\IEEEmembership{Student~Member,~IEEE,}
        and~Adam~M.~Wickenheiser~\IEEEmembership{Member,~IEEE}% <-this % stops a space
\thanks{K.L. Crandall is with the Department of Mechanical and Aerospace Engineering, The George Washington University, Washington, DC, 20052 USA e-mail: crandallk@gwu.edu.}% <-this % stops a space
\thanks{A.M. Wickenheiser is with The University of Delaware.}% <-this % stops a space
\thanks{Manuscript received May 13, 2018; revised Jan. 18, 2019.}}

\markboth{IEEE Transactions on Control of Network Systems,~Vol.~X, No.~X, X~2019}%
{Crandall \MakeLowercase{\textit{et al.}}: Controlling Parent Systems Through Swarms Using Abstraction}
% The only time the second header will appear is for the odd numbered pages
% after the title page when using the twoside option.
% 
% *** Note that you probably will NOT want to include the author's ***
% *** name in the headers of peer review papers.                   ***
% You can use \ifCLASSOPTIONpeerreview for conditional compilation here if
% you desire.

% If you want to put a publisher's ID mark on the page you can do it like
% this:
%\IEEEpubid{0000--0000/00\$00.00~\copyright~2015 IEEE}
% Remember, if you use this you must call \IEEEpubidadjcol in the second
% column for its text to clear the IEEEpubid mark.

% use for special paper notices
%\IEEEspecialpapernotice{(Invited Paper)}

\maketitle
\begin{abstract}
This study considers the control of parent-child systems where a parent system is acted on by a set of controllable child systems (i.e. a swarm).  Examples of such systems include a swarm of robots pushing an object over a surface, a swarm of aerial vehicles carrying a large load, or a set of end effectors manipulating an object.  In this paper, a general approach for decoupling the swarm from the parent system through a low-dimensional abstract state space is presented.  The requirements of this approach are given along with how constraints on both systems propagate through the abstract state and impact the requirements of the controllers for both systems.  To demonstrate, several controllers with hard state constraints are designed to track a given desired angle trajectory of a tilting plane with a swarm of robots driving on top.  Both homogeneous and heterogeneous swarms of varying sizes and properties are considered to test the robustness of this architecture.  The controllers are shown to be locally asymptotically stable and are demonstrated in simulation.
\end{abstract}

% Note that keywords are not normally used for peerreview papers.
\begin{IEEEkeywords}
Swarms, Adaptive Control, Cascaded Systems, Robust Control.
\end{IEEEkeywords}

% For peer review papers, you can put extra information on the cover
% page as needed:
% \ifCLASSOPTIONpeerreview
% \begin{center} \bfseries EDICS Category: 3-BBND \end{center}
%\fi
%
% For peerreview papers, this IEEEtran command inserts a page break and
% creates the second title. It will be ignored for other modes.
\IEEEpeerreviewmaketitle

\section{Introduction}
% The very first letter is a 2 line initial drop letter followed
% by the rest of the first word in caps.
% 
% form to use if the first word consists of a single letter:
% \IEEEPARstart{A}{demo} file is ....
% 
% form to use if you need the single drop letter followed by
% normal text (unknown if ever used by the IEEE):
% \IEEEPARstart{A}{}demo file is ....
% 
% Some journals put the first two words in caps:
% \IEEEPARstart{T}{his demo} file is ....
% 
% Here we have the typical use of a "T" for an initial drop letter
% and "HIS" in caps to complete the first word.
\IEEEPARstart{T}{he} appeal of using multiple, cooperating agents to perform complicated tasks has been demonstrated in many applications \cite{tan2013}.  For example, when compared to a single robot, swarms can be designed to be more robust to failure, can take advantage of the inherent high degree of parallelism, and tend to be more economical with regard to reusability, maintenance, and scalability.

We consider a class of robot swarms that use multiple agents to affect a parent dynamical system.  While there are many examples of this type of system being considered, there has not been much work studying the problem as a general case of backstepping-like control; however, there do exist some frequently cited approaches such as caging.  This is where the swarm is arranged in such a way as to constrain the possible movement of the parent system.  This approach is particularly common in the case where a swarm of ground robots is manipulating a larger, passive object over a 2D surface.  Prior work has achieved caging by estimating the geometry of the object using contact with the swarm \cite{Habibi2015}, or by exploiting \textit{a priori} knowledge of the geometry \cite{Pereira2004}.  This knowledge can also be utilized to define a set of high-level behaviors for the swarm that can be achieved in a decentralized manner \cite{Fink2008}.  Robots can even be linked together to exert more force \cite{Gross2009}.  These caging approaches assume passive parent systems with at least marginally stable dynamics that are not incorporated into the design of the swarm behavior.  This precludes the evaluation of these methods' ability to handle disturbances or complex parent system dynamics.

Another similar object manipulation problem is using a swarm of multirotors to carry an object.  This can be approached using optimization, resulting in a feed forward controller based on full state feedback that takes into account a more complex model of the payload \cite{Goodarzi2015}, or by breaking the problem into a path following problem and a coordination problem \cite{Klausen2015}, resulting in a method similar to the caging methods discussed previously.  Both of these approaches isolate the dynamics of an individual member of the swarm and determine how it needs to move, either with the rest of the swarm or relative to the parent system.

This parent-child class of system we are considering includes other cases that would not traditionally be considered ``swarm'' problems, such as a walking robot, where the legs could be considered a swarm manipulating the body \cite{Hamed2017}.  An array of control surfaces could be considered a swarm acting on the wing of an aircraft \cite{Blower2012} or bridge \cite{Boberg2015}.  These problems are particularly challenging due to fluid-structure interactions with the parent systems.  In these cases, caging methods as discussed earlier would be much more difficult to apply.  While decentralized controllers for walking robots have been proposed \cite{Hamed2017}, the controllers for the arrays of flaps controlling the aerodynamics are both centralized approaches that require each element of the swarm to know the states of all the other elements.

In this study, we propose a general approach to designing controllers for this type of parent-child system.  This approach encapsulates the coupling between the dynamics of the parent and swarm systems within a lower-dimensional abstract state and the dynamics between the swarm and the abstract state within an auxiliary abstract state.  This creates a cascaded system to which we can apply a backstepping controller.  Specifically, we can design a controller for the parent system that specifies a desired abstract state of the swarm and a controller for the swarm that achieves the desired abstract state.  Due to the nature of the abstract state, this isolates the swarm and parent systems, allowing for a large degree of flexibility in the swarm size and composition at runtime.  While a centralized observer calculates the abstract states, the dimensionality of this estimation is fixed with respect to the size of the swarm.  Additionally, the response of the parent system can be used to estimate the abstract state, eliminating the need to sense and estimate it directly.

The proposed approach relies on finding an abstract state of the swarm similar to the shape parameters used in formation control \cite{Belta2004, Belta2005, Belta2007, Michael2006, Michael2008, Michael2008Aerial, Michael2010, Egerstedt2003}.  This body of work uses a low-dimensional abstraction of a swarm of robots to describe the overall geometry of the swarm.  For motion planning, this allows the distribution of the swarm to be specified rather than individual trajectories.  These studies consider only the swarm, however, and do not possess a parent system with which to interact.  Our method utilizes an abstraction depending on these interactions, facilitating the design of the swarm behavior based on the desired behavior of the parent system.

This paper formalizes the approach presented in \cite{Crandall2016} and considers constraints on the parent and swarm systems.  In section \ref{sec:method}, we formally define the parent-swarm class of problem and our proposed approach and examine the stability of the system under constraints.  In section \ref{sec:exCase} we present an example case, and we discuss simulation results in section \ref{sec:simExp}.

\section{Method}
\label{sec:method}
\begin{figure}[!t]
\centering
\includegraphics[width=6in]{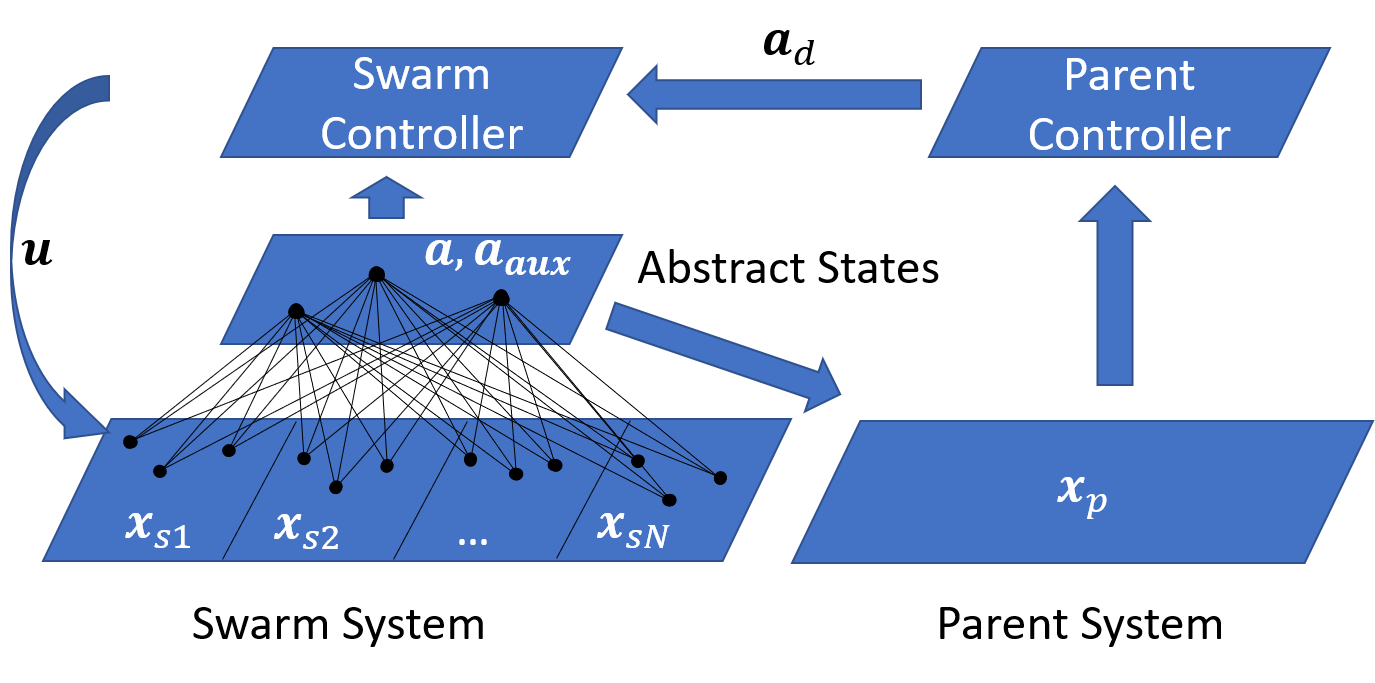}
\caption{Block Diagram depicting the the overall approach we are proposing}
\label{fig:archDiagram}
\end{figure}

We propose to control a parent-child system by finding an abstraction of the swarm's state that embodies the dynamic interactions between these two systems.  Such a system is so called because it can be reasonably segregated into two cascading systems: the parent system and a swarm of child systems.  The overall system must have the following properties:
\begin{itemize}
\item The parent system must contain the state that we seek to control.
\item The parent system's dynamics must be a function of the swarm's state.
\item The swarm's interaction with the parent system must be modular in way that it can be broken into a variable number of child systems.
\item The swarm's dynamics must be a function of the inputs to the system.
\end{itemize}
The typical dynamics for such a system are governed by the following:
\begin{align}
\dot{\mathbf{x}}_p = f_p\left(\mathbf{x}_p, \mathbf{x}_s, \mathbf{u}_s\right), &\ \textrm{where} \ f_p:X_p \times X_s \times U_s \rightarrow TX_p
\label{eq:defParentf} \\
\dot{\mathbf{x}}_s = f_s\left(\mathbf{x}_s, \mathbf{u}_s\right), &\  \textrm{where} \ f_s:X_s \times U_s \rightarrow TX_s
\label{eq:defSwarmf} \\
\dot{\mathbf{x}}_{si} = f_{si}\left(\mathbf{x}_{s}, \mathbf{u}_{si}\right), &\ \textrm{where} \  f_{si}:X_{s} \times U_{si} \rightarrow TX_{si}
\label{eq:defNodef}
\end{align}
Here, $\mathbf{x}_p$ is the state of the parent system, $\mathbf{x}_s$ is the state of the swarm system, and $\mathbf{x}_{si}$ is the state of the $i$th child system in the swarm.  We can define the spaces these states exist in as $\mathbf{x}_p \in X_p \subset \mathbb{R}^n$, $\mathbf{x}_{si} \in X_{si} \subset \mathbb{R}^{m_i}$, and $\mathbf{x}_s \in X_s = \prod_{i=1}^{N}X_{si}$ where $n$ is the dimension of the parent state, $m_i$ is the dimension of the $i$th child's state, and $N$ is the size of the swarm.  $\mathbf{u}_{s}$ is the input to the system and is a concatenation of inputs $\mathbf{u}_{si}$, which are the inputs to the $i$th child.  Thus, $\mathbf{u}_{si} \in U_{si} \subset \mathbb{R}^{p_i}$ and $\mathbf{u}_s \in U_s = \prod_{i=1}^{N}U_{si}$.  The notation $TX_p$ and $TX_s$ denote the tangent space of $X_p$ and $X_s$ respectively.

Figure \ref{fig:archDiagram} is a diagram of the overall architecture of the proposed approach.  Note that it is similar to a backstepping architecture, where the controller of the parent system specifies a desired intermediate state, and then the controller on the swarm uses that desired intermediate state to calculate how it should move to achieve it.  The innovation of our proposal is the introduction of the abstract state based on the interactions between the swarm and parent systems, allowing the dynamics of the swarm that are of little consequence to the parent system to be truncated.

\subsection{Abstraction of the Swarm}
The crux of our approach is the idea of abstracting the complex interactions of the swarm and the parent systems.  This idea is based on work by Belta and Kumar, where they propose using abstraction to control formations of robots \cite{Belta2004}.  They find an abstract state of the swarm that represents the formation's geometric properties such as position, orientation, and distribution.  Rather than specifying states that necessarily define the geometric features of the swarm, we use states that encompass the coupling dynamics between the swarm and the parent system.

\begin{definition}[Abstraction of a Swarm with Respect to a Parent System]
\label{def:swarmAbs}
An abstraction of the swarm is a mapping from the state of the swarm to an abstract state that encapsulates all the interactions between the swarm and the parent systems as described below in \eqref{eq:redefParentSysComposition}.  The dimension of this abstract state must be independent of the number of child systems in the swarm.
\end{definition}

This abstraction consists of an abstract state $\mathbf{a} \in A \subset \mathbb{R}^m$, and a mapping from the swarm state to the abstract state:

\begin{equation}
\mathbf{a} = \phi\left(\mathbf{x}_s, \mathbf{u}_s\right) \ \textrm{where} \ \phi : X_s \times U_s \rightarrow A
\label{eq:mapping}
\end{equation}
This mapping must be surjective, i.e. every abstract state must represent at least one swarm state.  We further require the dynamics of the parent system to be rewritten as
\begin{equation}
\dot{\mathbf{x}}_p = \hat{f}_p\left(\mathbf{x}_p, \mathbf{a}\right), \ \textrm{where} \ \hat{f}_p:X_p \times A \rightarrow TX_p
\label{eq:redefParentSys}
\end{equation}
and $f_p$ is a composition of $\hat{f}_p$ and $\phi$:
\begin{equation}
f_p\left(\mathbf{x}_p, \mathbf{x}_s, \mathbf{u}_s\right) = \hat{f}_p\left(\mathbf{x}_p, \phi\left(\mathbf{x}_s, \mathbf{u}_s\right)\right)
\label{eq:redefParentSysComposition}
\end{equation}
Since the dynamics of the parent state can be framed as dependent on this abstract state, the abstract state can be estimated based on the response of the parent system.  This allows for the estimation of the abstract state without a global observer, though one can still be used.

Once an abstraction of the swarm has been found, we can design a control law for the parent system in which the abstract state is considered to be the input and the swarm is considered to be independent of the parent (which follows from \eqref{eq:defNodef}).  In general, the swarm control law is a function of the parent and swarm states, i.e. $\mathbf{u}_s = \mathbf{u}_s(\mathbf{x}_p,\mathbf{x}_s)$.  We can rewrite this relationship in terms of the abstract state as follows:
\begin{equation}
\mathbf{u}_s = \mathbf{u}_s(\mathbf{a},\mathbf{a}_d,\mathbf{a}_{aux})
\label{eq:SwarmControlLaw}
\end{equation}
where $\mathbf{a}_d$ is the desired abstract state as determined by the parent controller, and $\mathbf{a}_{aux}$ is the auxiliary abstract state, which collects the remaining terms in the control law.

\begin{definition}[Auxiliary Abstraction of the Swarm]
\label{def:swarmAux}
Given an abstract state $\mathbf{a}$ from Definition \ref{def:swarmAbs}, an Auxiliary Abstract State $\mathbf{a}_{aux}$ embodies the interactions within the swarm in its control law.
\end{definition}

The auxiliary abstract state $\mathbf{a}_{aux} \in A_{aux} \subset \mathbb{R}^{m_{aux}}$ is mapped from the full swarm state $\mathbf{a}_{aux} = \phi_{aux}\left(\mathbf{x}_s\right)$, where $\phi_{aux}: X_s \rightarrow A_{aux}$.

\subsection{Constraint Considerations for Controller Design}
Any constraints on the parent and swarm systems must also be considered in the abstraction.  To do this, let $X_{sc} \subseteq X_s$ be the set of all swarm states that satisfy all constraints.  Because the abstract mapping is not necessarily injective, a given abstract state can correspond to a set of swarm states, some of which may lie outside $X_{sc}$.  Thus, we partition the set of abstract states into three disjoint sets: absolutely constrained abstract states, partially constrained abstract states, and unconstrained abstract states.

\begin{definition}[Absolutely Constrained Abstract State]
\label{def:absConstrained}
An abstract state is said to be absolutely constrained if the set it maps to in the swarm state space exists entirely within the constraints on the swarm.  The set of all absolutely constrained abstract states is the set $A_A = \left\lbrace \mathbf{a} \in A\ |\ \phi^{-1}\left[\mathbf{a}\right] \subseteq X_{sc}\right\rbrace$.
\end{definition}

\begin{definition}[Unconstrained Abstract State]
\label{def:unconstraind}
An abstract state is said to be unconstrained if the set it maps to in the swarm state violates at least one constraint on the swarm state at all points.  The set of all unconstrained abstract states is the set $A_U = \left\lbrace\mathbf{a} \in A\ |\ \phi^{-1}\left[\mathbf{a}\right] \subseteq X_s \setminus X_{sc}\right\rbrace$.
\end{definition}

\begin{definition}[Partially Constrained Abstract State]
\label{def:partConstrained}
An abstract state is said to be partially constrained if the set it maps to in the swarm state violates any of the constraints on the swarm at some points, but not all points.  The set of all partially constrained abstract states is thus $A_P = A \setminus \left( A_A \cup A_U \right)$
\end{definition}

We choose a domain $A_C$ on which we constrain the abstract state.
\begin{equation}
A_C \subseteq A_A \cup A_P
\label{eq:absConstraint}
\end{equation}
If $A_C \subseteq A_A$, the swarm controller does not have to consider the swarm constraints since all configurations of the swarm that correspond to that abstract state satisfy the constraints on the swarm.  If $A_C \cap A_P \neq \emptyset$, the swarm controller must ensure that it drives the swarm to a configuration that does not violate the constraints on the swarm.

\subsection{Stability}
Given Lyapunov functions for the parent and swarm subsystems, we can derive sufficient conditions for the stability of the closed loop system.  The closed-loop system is broken into two parts: $\dot{\mathbf{x}}_p = f_p^{\prime}\left(\mathbf{x}_p\right)$ is the closed-loop parent system, and $\dot{\mathbf{a}} = f_a^{\prime}\left(\mathbf{a}\right)$ is the closed-loop swarm system in the abstract space.  First, we consider the stability of systems without constraints.

\begin{theorem} 
\label{thm:GeneralStability}
Consider the following systems:
\begin{equation}
\dot{\mathbf{x}}_p = f_p^{\prime}\left(\mathbf{x}_p\right)
\label{eq:thm1Parent}
\end{equation}
\begin{equation}
\dot{\mathbf{a}} = f_a^{\prime}\left(\mathbf{a}\right)
\label{eq:thm1Abstract}
\end{equation}
Let $V_p\left(\mathbf{x}_p\right)$ be a Lyapunov candidate for the parent system, and let $V_a\left(\mathbf{a}\right)$ be a Lyapunov candidate for the abstract system.  Let $\mathbf{x}_p^T = \begin{bmatrix}\mathbf{z}_p^T & \mathbf{y}_p^T \end{bmatrix}$ and $\mathbf{a}^T = \begin{bmatrix}\mathbf{z}_a^T & \mathbf{y}_a^T \end{bmatrix}$ be partitioned such that
\begin{equation}
\dot{V}_p \leq \dot{V}_{p}^\prime\left(\mathbf{z}_p\right) + \dot{V}_{pc}\left(\mathbf{z}_p, \mathbf{z}_a\right)
\end{equation}
\begin{equation}
\dot{V}_a \leq \dot{V}_{a}^\prime\left(\mathbf{z}_a\right) + \dot{V}_{ac}\left(\mathbf{z}_p, \mathbf{z}_a\right)
\end{equation}
with domains $\mathbf{z}_p \in D_{zp}$, $\mathbf{z}_a \in D_{za}$, $\mathbf{y}_p \in D_{yp}$ and $\mathbf{y}_a \in D_{ya}$.  The terms $\dot{V}^{\prime}_{p}$ and $\dot{V}^{\prime}_{a}$ represent the terms of $\dot{V}_p$ and $\dot{V}_a$ that are solely dependent on the parent and abstract states, respectively.  The terms $\dot{V}_{pc}$ and $\dot{V}_{ac}$ represent the coupled terms.  Note that any of these terms can be $0$, i.e. the systems may not be coupled, or the coupling may be in one direction.

If $\dot{V}_{p}^\prime$ and $\dot{V}_{a}^\prime$ are negative definite, and
\begin{equation}
\abs{\dot{V}_{p}^\prime + \dot{V}_{a}^\prime} > \abs{\dot{V}_{pc} + \dot{V}_{ac}} \label{eq:inequality}
\end{equation}
when $\mathbf{z}_p \in D_{zp} \setminus \left\lbrace 0 \right\rbrace,\ \mathbf{z}_a \in D_{za} \setminus \left\lbrace 0 \right\rbrace$, and
\begin{equation}
\abs{\dot{V}_{pc}\left(0, 0\right) + \dot{V}_{ac}\left(0, 0\right)} = 0,
\label{eq:lyap_zero}
\end{equation}
then the origin of this system is locally stable with a region of attraction in the neighborhood of the origin such that if $\begin{bmatrix}\mathbf{z}_p^T & \mathbf{z}_a^T & \mathbf{y}_p^T & \mathbf{y}_a^T\end{bmatrix}^T  \in S \subset D_{zp} \times D_{za} \times D_{yp} \times D_{ya}$, then $\mathbf{z}_p \to 0$ and $\mathbf{z}_a \to 0$ as $t \to \infty$ and $\mathbf{y}_p$ and $\mathbf{y}_a$ remain bounded.
\end{theorem}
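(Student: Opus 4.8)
The plan is to use the sum $V = V_p + V_a$ as a composite Lyapunov function for the full coupled state $\mathbf{x} = \begin{bmatrix}\mathbf{z}_p^T & \mathbf{y}_p^T & \mathbf{z}_a^T & \mathbf{y}_a^T\end{bmatrix}^T$ and to extract both the $\mathbf{z}$-convergence and the $\mathbf{y}$-boundedness from a single invariance argument. First I would differentiate $V$ along the closed-loop trajectories of \eqref{eq:thm1Parent}--\eqref{eq:thm1Abstract} and invoke the two hypothesized bounds to obtain $\dot{V} \le \dot{V}_p^\prime(\mathbf{z}_p) + \dot{V}_a^\prime(\mathbf{z}_a) + \dot{V}_{pc}(\mathbf{z}_p,\mathbf{z}_a) + \dot{V}_{ac}(\mathbf{z}_p,\mathbf{z}_a) =: W(\mathbf{z}_p,\mathbf{z}_a)$, an upper bound depending only on the $\mathbf{z}$-block. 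Since $\dot{V}_p^\prime$ and $\dot{V}_a^\prime$ are negative definite, their sum is strictly negative whenever $(\mathbf{z}_p,\mathbf{z}_a)\ne 0$, so $\abs{\dot{V}_p^\prime + \dot{V}_a^\prime} = -(\dot{V}_p^\prime + \dot{V}_a^\prime)$; combining this with $\abs{\dot{V}_{pc}+\dot{V}_{ac}} \ge \dot{V}_{pc} + \dot{V}_{ac}$ and the hypothesis \eqref{eq:inequality} yields $W<0$ on the punctured region where $\mathbf{z}_p$ and $\mathbf{z}_a$ are both nonzero, hence $\dot{V} \le W < 0$ there.

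Second, I would fix the region of attraction by taking $S$ to be a compact sublevel set $\{V \le c\}$ with $c$ small enough that $S$ lies inside $D_{zp}\times D_{za}\times D_{yp}\times D_{ya}$. Because $V_p$ and $V_a$ are Lyapunov candidates for their respective full states, $V$ is positive definite in $\mathbf{x}$, so such a compact sublevel set exists; as $\dot{V}\le 0$ on it, $S$ is positively invariant, every trajectory launched in $S$ stays in $S$, and in particular $\mathbf{y}_p$ and $\mathbf{y}_a$ remain bounded. LaSalle's invariance principle then applies on the compact invariant set $S$, so trajectories approach the largest invariant set $M$ contained in $E = \{\mathbf{x}\in S : \dot{V}(\mathbf{x}) = 0\}$.

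The remaining work is to identify $M$, and this is where I expect the main obstacle. From $\dot{V} \le W \le 0$, any point of $E$ forces $W = 0$, and by the first step $W<0$ whenever both blocks are nonzero, so immediately $E \subseteq \{\mathbf{z}_p = 0\}\cup\{\mathbf{z}_a=0\}$. The delicate part is upgrading this to $E\subseteq\{\mathbf{z}_p=0,\ \mathbf{z}_a=0\}$, i.e.\ ruling out invariant motion on which exactly one $\mathbf{z}$-block vanishes while the other does not. Condition \eqref{eq:lyap_zero} together with continuity of the coupling terms is precisely what closes this gap: letting $\mathbf{z}_p\to 0$ in \eqref{eq:inequality} gives $-\dot{V}_a^\prime(\mathbf{z}_a) \ge \abs{\dot{V}_{pc}(0,\mathbf{z}_a)+\dot{V}_{ac}(0,\mathbf{z}_a)}$, and in the typical situation where the coupling terms vanish whenever either block is zero the right-hand side is $0$, so that $W = \dot{V}_a^\prime(\mathbf{z}_a) < 0$ for $\mathbf{z}_a\ne 0$ (and symmetrically with the roles reversed). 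This collapses $E$ to $\{\mathbf{z}=0\}$, whence $M\subseteq\{\mathbf{z}_p=0,\ \mathbf{z}_a=0\}$, and one should record exactly which structural assumption on $\dot{V}_{pc},\dot{V}_{ac}$ is being used here.

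Finally, LaSalle delivers $\mathbf{z}_p\to 0$ and $\mathbf{z}_a\to 0$ as $t\to\infty$, while $M$ -- the residual flow restricted to $\mathbf{z}=0$ -- carries only the $\mathbf{y}$-motion, which stays in the compact $S$ and is thus bounded, exactly as asserted. The leftover items are routine: assuming $f_p^\prime,f_a^\prime$ locally Lipschitz so solutions exist, are unique, and stay defined on $S$, and checking that $c$ keeps $S$ off the domain boundaries. If one prefers to avoid characterizing $M$ through the flow, an equivalent route is Barbalat's lemma: $\dot{V}\le W\le 0$ makes $V$ nonincreasing and bounded below, so $\int_0^{\infty}(-W)\,dt<\infty$, and boundedness of the state (hence of $\dot{\mathbf{z}}$, making $W(\mathbf{z}(t))$ uniformly continuous) forces $W(\mathbf{z}(t))\to 0$; negative definiteness of $W$ in $\mathbf{z}$ -- the same property established as the crux above -- then gives $\mathbf{z}\to 0$.
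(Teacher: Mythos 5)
Your proposal follows essentially the same route as the paper's proof: the composite candidate $V = V_p + V_a$, the bound $\dot{V} \leq \dot{V}_p^\prime + \dot{V}_a^\prime + \dot{V}_{pc} + \dot{V}_{ac}$ made negative via \eqref{eq:inequality}, a sublevel set of $V$ inside the domain as the region-of-attraction estimate, and a convergence argument for the $\mathbf{z}$-blocks with boundedness of the $\mathbf{y}$-blocks; indeed, the paper's citation of Theorem 8.4 in Khalil is precisely the Barbalat-type route you offer as an alternative to the LaSalle formulation, so even your fallback matches the paper's actual mechanism.

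The one substantive difference is the care you take on the set where exactly one of $\mathbf{z}_p$, $\mathbf{z}_a$ vanishes, and here your instinct is correct and exposes a real gap that the paper's own proof silently skips. The hypotheses control $\dot{V}$ only when both blocks are nonzero (via \eqref{eq:inequality}) and when both are zero (via \eqref{eq:lyap_zero}); on mixed points such as $\mathbf{z}_p = 0$, $\mathbf{z}_a \neq 0$, continuity of the stated strict inequality yields only $-\dot{V}_a^\prime\left(\mathbf{z}_a\right) \geq \abs{\dot{V}_{pc}\left(0,\mathbf{z}_a\right) + \dot{V}_{ac}\left(0,\mathbf{z}_a\right)}$, hence $\dot{V} \leq 0$ but not $\dot{V} < 0$, so the zero set of $\dot{V}$ may a priori contain points with one nonzero block, and neither LaSalle nor the Khalil-8.4/Barbalat argument can then conclude that \emph{both} blocks converge. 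The paper's proof simply asserts that $\dot{V} = 0$ "should only be true at the origin" and derives the two stated conditions, which reads more as a derivation of necessary conditions than a sufficiency proof; it never addresses the mixed set. Your proposed patch --- assuming the coupling terms vanish whenever either block vanishes, which restores strict negativity off the origin --- is exactly the structural condition the paper implicitly relies on in its applications: in Theorems \ref{thm:LQR_SI_sys}--\ref{thm:ARISE_noHomo_sys} the coupling is either identically zero or of the form $\rho_E\left(\norm{\mathbf{z}_p}\right)\norm{\mathbf{z}_p}\norm{\mathbf{z}_a}$, which vanishes when either factor does. So your proof is correct under the assumption you flagged, and recording that assumption, as you suggest, is a genuine improvement on the paper's argument rather than a deviation from it.
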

\begin{proof}
Let $V\left(\mathbf{z}_p, \mathbf{y}_p, \mathbf{z}_a, \mathbf{y}_a\right) = V_p\left(\mathbf{z}_p, \mathbf{y}_p\right) + V_a\left(\mathbf{z}_a, \mathbf{y}_a\right)$.  Because $V_p$ and $V_a$ are positive definite, $V$ is also positive definite.

We now take the derivative of $V$ with respect to time:
\begin{equation}
\dot{V} \leq \dot{V}_{p}^\prime\left(\mathbf{z}_p\right) + \dot{V}_{a}^\prime\left(\mathbf{z}_a\right) + \dot{V}_{pc}\left(\mathbf{z}_p, \mathbf{z}_a\right) + \dot{V}_{ac}\left(\mathbf{z}_p, \mathbf{z}_a\right) \leq -\abs{\dot{V}_{p}^\prime\left(\mathbf{z}_p\right) + \dot{V}_{a}^\prime\left(\mathbf{z}_a\right)} + \abs{\dot{V}_{pc}\left(\mathbf{z}_p, \mathbf{z}_a\right) + \dot{V}_{ac}\left(\mathbf{z}_p, \mathbf{z}_a\right)}
\end{equation}
Therefore, for $\dot{V}$ to be negative semi-definite,
\begin{equation}
\abs{\dot{V}_{p}^\prime + \dot{V}_{a}^\prime} \geq \abs{\dot{V}_{pc} + \dot{V}_{ac}}
\label{eq:Vleq}
\end{equation}

Next, we find where $\dot{V} = 0$.  This should only be true at the origin to satisfy the Lyapunov stability theorem (Theorem 4.1, \cite{Khalil2002}); therefore, we determine that, for $\mathbf{z}_p \neq 0$ and $\mathbf{z}_a \neq 0$,
\begin{equation}
\abs{\dot{V}_{p}^\prime\left(\mathbf{z}_p\right) + \dot{V}_{a}^\prime\left(\mathbf{z}_a\right)} \neq \abs{\dot{V}_{pc}\left(\mathbf{z}_p, \mathbf{z}_a\right) + \dot{V}_{ac}\left(\mathbf{z}_p, \mathbf{z}_a\right)}
\end{equation}
This combined with \eqref{eq:Vleq} gives us the inequality in \eqref{eq:inequality}.

Finally, we consider $\mathbf{z}_p = 0$ and $\mathbf{z}_a = 0$, which gives us \eqref{eq:lyap_zero}.  Given these conditions, we can say that $\dot{V}$ is negative semi-definite on the domain $D = D_p \times D_a \times \mathbb{R}^{dim\left(\mathbf{y}_p\right)} \times \mathbb{R}^{dim\left(\mathbf{y}_p\right)}$.  Thus, by Theorem 8.4 in \cite{Khalil2002}, the terms $\mathbf{z}_p$ and $\mathbf{z}_a$ are asymptotically stable with the region of attraction
\begin{equation}
S = \left\lbrace \mathbf{z} \in D\ \vert\ V\left(\mathbf{z}\right) \leq \min\left\lbrace V\left(\mathbf{b}\right) \vert \mathbf{b} \in bd\left(D\right)\right\rbrace\right\rbrace
\end{equation}
while $\mathbf{y}_p$ and $\mathbf{y}_a$ remain bounded; thus, $S$ is a closed neighborhood of the origin.  Note that $bd\left(D\right)$ is the boundary of the set $D$.
\end{proof}

\begin{remark}
Let $\phi\left(\mathbf{x}_s\right)$ be an abstraction of $\mathbf{x}_s$.  If $\phi^{-1}\left[D_{za}\right]$ is bounded, where $D_{za}$ is defined as in Theorem \ref{thm:GeneralStability}, then the swarm state is also bounded.  However, if this inverse mapping from an abstract state is unbounded, then we cannot make any inference about the boundedness of the swarm state based on the boundedness of the abstract state.
\end{remark}

We can now consider how the constraints on the system affect the region of attraction of the system.

\begin{theorem}
\label{thm:GeneralConstrainedStability}
Consider a parent-child system with constrained spaces $X_{pc}$ and $X_{sc}$ for the parent and swarm states, respectively, and let $A_C$ be a constrained abstract space that contains the origin.  Let $A_d \subseteq A_C$ be a region in which the desired abstract state is further constrained.  Let $\mathbf{w}_p\left(t\right) = \begin{bmatrix}\mathbf{z}_p^T\left(t\right) & \mathbf{y}_p^T\left(t\right)\end{bmatrix}^T$ and $\mathbf{w_a}\left(t\right) = \begin{bmatrix}\mathbf{z}_a^T\left(t\right) & \mathbf{y}_a^T\left(t\right)\end{bmatrix}^T$ be trajectories in the parent and abstract spaces that satisfy the conditions of Theorem \ref{thm:GeneralStability}. Let $D_p = D_{zp} \times D_{yp}$ and $D_a = D_{za} \times D_{ya}$.  If there exist nonempty sets
\begin{equation}
D_{pc} = \left\lbrace \mathbf{w}_p \in D_p \ \vert \ \mathbf{x}_{pd}\left(t\right) + \mathbf{b}\norm{\mathbf{w}_p} \in X_{pc}, \forall\ t \in \left[0, \infty\right), \mathbf{b} \in B^{dim\left(X_p\right)} \right\rbrace
\label{eq:constParentDomain}
\end{equation}
\begin{equation}
D_{ac} = \left\lbrace \mathbf{w}_a \in D_a \ \vert \ \mathbf{a}_d + \mathbf{b}\norm{\mathbf{w}_a} \in A_C,\ \forall\ \mathbf{a}_d \in A_d, \mathbf{b} \in B^{dim\left(A\right)} \right\rbrace
\label{eq:constSwarmDomain}
\end{equation}
where
\begin{equation}
B^n = \left\lbrace \mathbf{v} \in \mathbb{R}^n\ \vert\ \norm{\mathbf{v}} = 1 \right\rbrace
\end{equation}
and $\mathbf{x}_{pd}\left(t\right)$ is the desired trajectory of the parent state, then there exists a nonempty domain $S_c$ on which the origin of the system is asymptotically stable and $D_{pc}$ and $D_{ac}$ are forward time-invariant for all future time, where $S_c$ is given by
\begin{equation}
S_c = \left\lbrace \mathbf{z} \in S \cap D_c\ \vert\ V_p\left(\mathbf{z}\right) + V_a\left(\mathbf{z}\right) \leq  \min\left\lbrace V_p\left(\mathbf{b}\right) + V_a\left(\mathbf{b}\right) \mid b \in bd\left(S \cap D_c \right) \right\rbrace\right\rbrace
\end{equation}
and $D_c = D_{pc} \times D_{ac} \times \mathbb{R}^{dim\left(\mathbf{y}_p\right)} \times \mathbb{R}^{dim\left(\mathbf{y}_a\right)}$
\end{theorem}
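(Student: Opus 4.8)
The plan is to reuse the composite Lyapunov function $V = V_p + V_a$ from Theorem \ref{thm:GeneralStability} and to exploit the standard fact that a sublevel set of a Lyapunov function whose derivative is negative semi-definite is forward invariant. The whole argument reduces to carving out the largest such sublevel set that lies inside the constrained region $D_c$ and then invoking Theorem \ref{thm:GeneralStability} verbatim on that set.

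First I would observe that, by the set-builder definitions, $D_{pc} \subseteq D_p$ and $D_{ac} \subseteq D_a$, so $D_c = D_{pc} \times D_{ac} \times \mathbb{R}^{dim(\mathbf{y}_p)} \times \mathbb{R}^{dim(\mathbf{y}_a)}$ is contained in the domain $D$ on which Theorem \ref{thm:GeneralStability} establishes $\dot{V} \leq 0$; hence $\dot{V} \leq 0$ holds on $S \cap D_c$ as well. I would then read conditions \eqref{eq:constParentDomain}--\eqref{eq:constSwarmDomain} as tracking-error conditions: $\mathbf{w}_p$ is the deviation of the parent state from its desired trajectory $\mathbf{x}_{pd}(t)$, and demanding $\mathbf{x}_{pd}(t) + \mathbf{b}\norm{\mathbf{w}_p} \in X_{pc}$ for every unit vector $\mathbf{b}$ guarantees that every state within distance $\norm{\mathbf{w}_p}$ of the reference is feasible, regardless of the direction of the error. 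The same reading applies to $D_{ac}$ with $A_C$ in place of $X_{pc}$. Consequently, any trajectory that stays inside $D_c$ automatically satisfies the parent and (abstract) swarm constraints for all time.

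Next I would define $S_c$ as the largest sublevel set of $V$ contained in $S \cap D_c$, which is exactly the expression in the statement. The invariance argument then applies directly: since $\dot{V} \leq 0$ on $S \cap D_c$, the value of $V$ is non-increasing along any trajectory starting in $S_c$, so that trajectory cannot reach the level set $V = \min\{V(\mathbf{b}) \mid \mathbf{b} \in bd(S \cap D_c)\}$ and therefore never leaves $S \cap D_c \subseteq D_c$. This yields forward invariance of $S_c$ and, through the reading above, shows that $\mathbf{w}_p(t)$ remains in $D_{pc}$ and $\mathbf{w}_a(t)$ remains in $D_{ac}$ for all future time, so the constraints are never violated. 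Asymptotic stability of the origin on $S_c$ is then inherited from Theorem \ref{thm:GeneralStability} because $S_c \subseteq S$.

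The step I expect to be the main obstacle is showing that $S_c$ is nonempty, i.e. that the origin lies in the \emph{interior} of $S \cap D_c$ so that the boundary minimum of $V$ is strictly positive and the sublevel set is nontrivial. The origin already belongs to $S$ by Theorem \ref{thm:GeneralStability}, and $\mathbf{w}_a = 0$ lies in $D_{ac}$ since $A_d \subseteq A_C$ and $A_C$ contains the origin; the companion fact $\mathbf{w}_p = 0 \in D_{pc}$ requires the reference trajectory to be feasible, $\mathbf{x}_{pd}(t) \in X_{pc}$ for all $t$. To obtain an open neighborhood rather than just the point, I would need the reference to stay strictly interior to $X_{pc}$ (and $A_d$ strictly interior to $A_C$) with a margin uniform in $t$; under that condition the norm-shell conditions \eqref{eq:constParentDomain}--\eqref{eq:constSwarmDomain} hold for all sufficiently small $\norm{\mathbf{w}_p}$ and $\norm{\mathbf{w}_a}$, so $S \cap D_c$ contains a ball about the origin and $S_c$ is nonempty. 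This uniform-margin feasibility of the reference is the real assumption hidden behind the nonemptiness hypothesis on $D_{pc}$ and $D_{ac}$, and I would make it explicit.
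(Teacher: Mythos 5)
Your proposal is correct and follows essentially the same route as the paper's proof: both take $S_c$ to be the maximal sublevel set of the composite Lyapunov function $V = V_p + V_a$ inside $S \cap D_c$, obtain forward invariance of the constrained region from the non-increase of $V$ along trajectories (the paper cites Theorem 8.4 of Khalil), inherit asymptotic convergence from Theorem \ref{thm:GeneralStability} since $S_c \subseteq S$, and establish nonemptiness by placing the origin in $D_{pc}$ and $D_{ac}$ via feasibility of the desired trajectory and $A_d \subseteq A_C$. If anything you are more careful than the paper, which merely asserts that $S_c$ ``at least contains the origin'': your closing observation that a uniform-in-$t$ strict-interiority margin on $\mathbf{x}_{pd}(t)$ in $X_{pc}$ (and on $A_d$ in $A_C$) is what actually guarantees $S \cap D_c$ contains a ball about the origin, so that the boundary minimum of $V$ is strictly positive and $S_c$ is nontrivial, makes explicit an assumption the paper leaves implicit.
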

\begin{proof}
\begin{figure}
\centering
\includegraphics[width=6in]{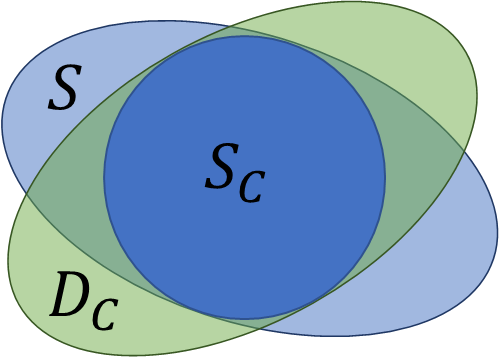}
\caption{Diagram of the sets defined in Theorems \ref{thm:GeneralStability} and \ref{thm:GeneralConstrainedStability}.  $S$ is the region of attraction of the origin of \eqref{eq:thm1Parent} and \eqref{eq:thm1Abstract}, $D_c$ is the subset of additive perturbation vectors that do not violate any constraints, and $S_c$ is the maximal set of contours of $V\left(\mathbf{z}\right)$ contained within $S \cap D_c$.}
\label{fig:setRelations}
\end{figure}
Let $\mathbf{z}_p \times \mathbf{z}_a \times \mathbf{y}_p \times \mathbf{y}_a \in S_c$.  Then $\mathbf{z}_p \in D_{pc}$, and so $\mathbf{x}_{pd}\left(t\right) + \mathbf{b} \norm{\mathbf{z}_p} \in X_p$.  By Theorem \ref{thm:GeneralStability}, $\mathbf{z}_p$ and $\mathbf{z}_a$ converge to 0 asymptotically with a region of attraction $S$. By Theorem 8.4 in \cite{Khalil2002}, if $\mathbf{z}_0 \in S_c$, then $V\left(\mathbf{z}\left(t\right)\right) \rightarrow 0$ and hence $\mathbf{z}\left(t\right) \in S_c$ as $t \rightarrow \infty$.  The relationships between these sets are illustrated in figure \ref{fig:setRelations}.

The sets $D_{pc}$ and $D_{ac}$ contain the origin since it is assumed that the desired trajectory of the parent system is feasible.  $S$ is the semilevel set of a positive definite Lyapunov function; therefore, the intersection $S \cap D_c$ must contain the origin.  The set $S_c$ is also a semilevel set of the same Lyapunov function, so it must also contain the origin.  Therefore, as long as the desired trajectory and desired abstract surface satisfy all constraints, the set $S_c$ at least contains the origin.
\end{proof}

Using these two theorems, we can build a stability proof for the whole system based on the individual Lyapunov functions for the controllers for the parent and swarm systems.  We can define a region of attraction that satisfies all constraints by mapping the constraints on the separate subsystems.  Note that if the system is not globally asymptotically stable, the estimate of the region of attraction tends to be quite conservative, which is exacerbated in the presence of constraints.  The key point to these theorems however lies in equation \eqref{eq:inequality}, which states that the stability of the overall closed-loop system is dependent on the coupling terms in the Lyapunov candidate being dominated by the non-coupling terms.

\section{Example Case}
\label{sec:exCase}
\begin{figure}[!t]
\centering
\includegraphics[width=6in]{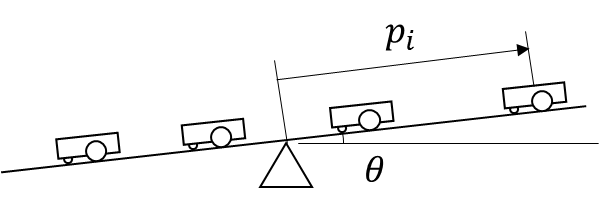}
\caption{Diagram of example system.  The generalized coordinates are shown and labeled.  $\theta$ is the angle the tilting plane makes with the ground, and $p_i$ is the distance robot $i$ is from the axis of rotation of the plane.}
\label{fig:systemDiagram}
\end{figure}

One of the key advantages of the system we propose is its modularity.  We demonstrate this by developing two different controllers for an example parent system based on techniques pulled from the literature and likewise swarm controllers for two different swarms.  We present simulation results of various combinations of these systems and controllers to demonstrate the modularity of this approach.  We show that the designed controllers can be used interchangeably, and that similar results for a given parent system controller can be achieved by any size of the two swarms considered, as well as by heterogeneous swarms consisting of members of both types of swarm.

To demonstrate our proposed controller architecture, we have chosen to apply it to a swarm of robots driving on top of a tilting plane.  Our controller drives the robots to balance the plane at a desired angle or track a desired tilt angle trajectory.  Figure \ref{fig:systemDiagram} is a diagram of the proposed system.  $\theta$ is the angle the surface of the tilting plane makes with the ground, and $p_i$ is the position of robot $i$ in the plane's coordinate system, where $p_i = 0$ at the axis of rotation of the plane.  Robot $i$ has mass $m_i$.  The robots are constrained to move in lanes to avoid collisions with each other; this is effective as the plane is only allowed to tilt about one axis.  This system was selected as it is simple enough to be easily modeled, and it demonstrates the advantages of our approach, yet it is complex enough that a control solution is not trivial.

In this system, the swarm is the group of $N$ robots driving on the plane, and the parent system is the plane itself.  To examine the interactions between the parent and swarm systems, we state the equation of motion of the parent system:
\begin{dmath}
\left(J + \displaystyle\sum_{i=1}^{N}m_i p^2_i\right)\ddot{\theta} + g\cos\left(\theta\right)\displaystyle\sum_{i=1}^{N}m_i p_i + 2\dot{\theta}\displaystyle\sum_{i=1}^{N}m_i p_i \dot{p}_i + f_f\left(\dot{\theta}\right) = 0
\label{eq:fullDynamics}
\end{dmath}
where $J$ is the plane's moment of inertia about its axis, $f_f\left(\dot{\theta}\right)$ is the Stribeck friction on the axis of rotation, and $g$ is the acceleration due to gravity.

Given these dynamics, we can identify the terms that embody the interaction between the parent system and the swarm.  We choose the abstract state given in \eqref{eq:abstractState} and rewrite the dynamics of the parent system in terms of this state in \eqref{eq:newDynamics}:

\begin{equation}
\mathbf{a} = \begin{bmatrix} \tau_s \\ J_s \\ \dot{J}_s\end{bmatrix} = \begin{bmatrix}g \displaystyle\sum_{i=1}^N m_i p_i & \displaystyle\sum_{i=1}^N m_i p_i^2 & \displaystyle\sum_{i=1}^N2m_ip_iv_i\end{bmatrix}^T
\label{eq:abstractState}
\end{equation}

\begin{equation}
\dot{\mathbf{x}}_p = \begin{bmatrix} \dot{\theta} \\ \ddot{\theta} \end{bmatrix} = \begin{bmatrix} \dot{\theta} \\ \frac{-\cos\left(\theta\right)\tau_s - \dot{\theta}\dot{J}_s - f_f\left(\dot{\theta}\right)}{J + J_s} \end{bmatrix}
\label{eq:newDynamics}
\end{equation}
In \eqref{eq:newDynamics}, $\tau_s$ and $J_s$ represent the torque exerted on the plane by the weight of the swarm and the moment of inertia of the swarm, respectively.  This abstraction satisfies Definition \ref{def:swarmAbs} for a valid abstraction: the dimension of the abstract state is not dependent on the size of the swarm, and the abstract state encapsulates the interactions between the parent and swarm systems according to \eqref{eq:redefParentSysComposition}.  Note that, while the abstract state in \eqref{eq:abstractState} does contain both $J_s$ and $\dot{J}_s$, we can use an abstract state that only contains $\tau_s$ and $J_s$ and show that if $J_s$ converges to a desired value, then so too does $\dot{J}_s$.  This simplifies the by eliminating the coupling between $J_s$ and $\dot{J}_s$, as well as making the abstract state solely dependant on the swarm state for all three swarms being considered next.

We now define the state equations for the swarm.  We consider three different swarm compositions: the first consists of robots that are one-dimensional single integrators controlled by velocity:
\begin{equation}
\mathbf{x}_s = \begin{bmatrix}p_1 & \cdots & p_N \end{bmatrix}^T
\end{equation}
\begin{equation}
\dot{\mathbf{x}}_s = \mathbf{u}_s
\label{eq:swarmModel}
\end{equation}

The second type of swarm consists of robots that are single-degree-of-freedom double integrators with passive linear damping  controlled with a force input.  This system has the same abstract state, but the state representation of the swarm dynamics is
\begin{equation}
\dot{\mathbf{x}}_s = \begin{bmatrix}\dot{\mathbf{p}}\\ \ddot{\mathbf{p}}\end{bmatrix} = \begin{bmatrix}0_{n\times n} & I \\ 0_{n\times n} & -M^{-1}C\end{bmatrix}\begin{bmatrix} \mathbf{p} \\ \dot{\mathbf{p}}\end{bmatrix} + \begin{bmatrix} 0_{n\times n} \\ M^{-1}\end{bmatrix}\mathbf{u}_s \label{eq:EOM_DI}
\end{equation}
where $M$ and $C$ are diagonal matrices of the mass and linear damping coefficients respectively for each robot, and $\mathbf{p}$ is a vector of the positions of each robot in the swarm.

The third swarm type is a non-homogeneous swarm consisting of members from both of the previous two types.

\subsection{Constraint Analysis}
Now that we have defined the abstraction of the system, we can consider the effects of constraints on both the parent and swarm system.  First, there is a limit on how far the plane can tilt before the robots lose their traction and slip off the plane, resulting in a constrained parent state space $X_p$ of the form

\begin{equation}
X_p = \{\mathbf{x}_p \in \mathbb{R}^2 \vert \abs{\theta} \leq \theta_{max} \}
\label{eq:parentConstraint}
\end{equation}
where $\theta_{max}$ is the maximum tilt angle.

Concerning the child systems, we consider constraints on their position designed to prevent the robots from driving off the edge of the plane, giving

\begin{equation}
X_{si} = \left\{\mathbf{x}_{si} \in \mathbb{R}\ \middle\vert\ \abs{p_i} \leq \frac{L}{2} \right\}
\label{eq:swarmStateConstraint}
\end{equation}

Furthermore, we need to describe the absolutely and partially constrained regions.  We can see that $\phi^{-1}\left(\tau_s,J_s\right)$ is a hyperplane when $\tau_s$ is held constant, and   it is a hyperellipsoid when $J_s$ is held constant.  The surface that a given abstract state maps to is the intersection of these two surfaces.  We can then check the resulting intersection against the constraints given in \eqref{eq:swarmStateConstraint}.  Figure \ref{fig:mappingEx} illustrates these inverse mappings for a 3-robot swarm.  Figure \ref{fig:posMapping} shows a Monte Carlo approximation of the abstract spaces and the regions within.

\begin{figure}
\centering
\includegraphics[width=6in]{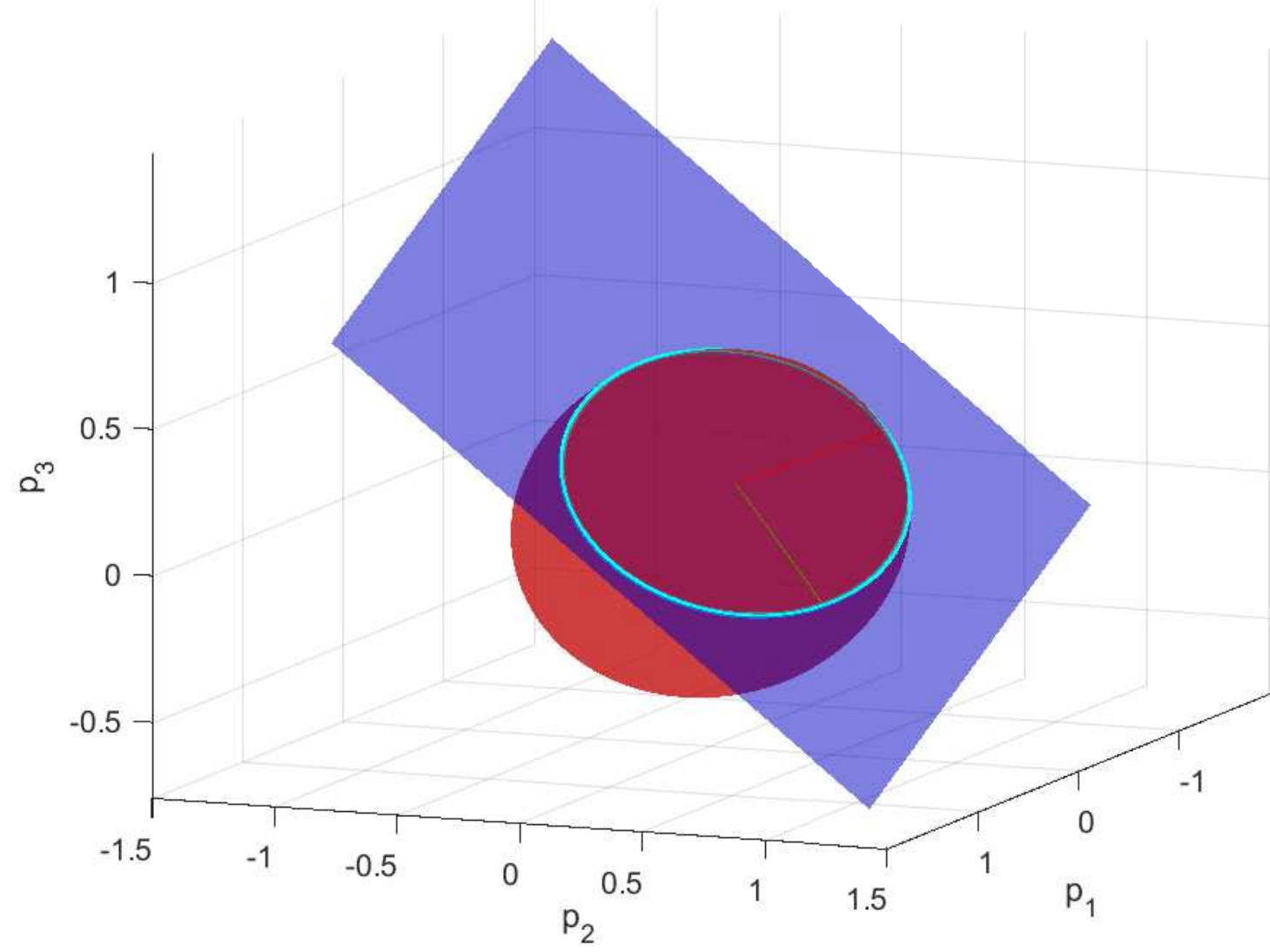}
\caption{Visualization of inverse mapping of an abstract state.  The  plane is all swarm states such that $\tau_s = 10N \cdot m$ and the red ellipsoid is all swarm such states such that $J_s = 1kg \cdot m^2$ for a three member swarm where $m_1 = 2kg$, $m_2 = 3kg$, and $m_3 = 3kg$.  The cyan ellipse on the plane is the pre-image of $\mathbf{a}=\left[ 10\ 1 \right]^T$.}
\label{fig:mappingEx}
\end{figure}

\subsection{Parent Controller}
Given the constraints found in the previous section, we can now design a controller for the parent system.  This controller must have a region of attraction contained within  $X_{pc}$.  It must also control the system while ensuring that the desired abstract state satisfies all of its constraints.  Trying to control a single degree of freedom, $\theta$, using both $\tau_s$ and $J_s$ independently would result in an over-actuated system.  To remedy this, we specify a manifold that couples $\tau_s$ and $J_s$.  We shape this manifold to exist within the interior of the absolutely constrained region of the abstract space.  The distance between the manifold and the boundary of the absolutely constrained space is exploited by Theorem \ref{thm:GeneralConstrainedStability} to show that the system is both stable and satisfies given constraints.

For the parent system, we apply two different controllers: a PD controller and an Adaptive Robust Integral on the Sign of the Error (ARISE) controller based on the design proposed by Xian et al. \cite{Xian2004} with a feed-forward adaptive term as proposed by Patre et al. \cite{Patre2006}\cite{Patre2008}. These controllers specify a desired torque $\tau_{sd}$, which is treated as the input to the parent system.  The desired swarm moment of inertia $J_{sd}$ can be determined by the position of $\tau_{sd}$ on the constraint manifold and therefore be treated as part of the dynamics of the parent system.

The PD controller regulates the plane to a zero angle with zero velocity.  Before designing the PD controller, we set $J_{sd} = J_{sd,0}$, a constant, and linearize the damping as $c$.  We then linearize the dynamics about $\mathbf{x}_p = 0$ and $\tau_{sd} = 0$, to arrive at
\begin{equation}
\dot{\mathbf{x}}_p = \begin{bmatrix}0 & 1 \\ 0 & \frac{c}{J + J_{sd,0}}\end{bmatrix}\mathbf{x}_p + \begin{bmatrix} 0 \\ \frac{-1}{J + J_{sd,0}}\end{bmatrix}\tau_{sd}
\end{equation}

With this linearization, we can now use a linear control law:
\begin{equation}
\tau_{sd} = K_{pd}\mathbf{x}_p
\label{eq:ControlLawPD}
\end{equation}
where $K_{pd} \in \mathbb{R}^{1\times2}$ is the matrix of control gains.  This matrix is set so that this system is a Linear Quadratic Regulator (LQR) that minimizes the following cost function:
\begin{equation}
\displaystyle\int_{t_0}^{\infty}\mathbf{x}_p^TQ\mathbf{x}_p\left(\tau\right) + R\tau_{sd}^2\left(\tau\right)d\tau
\end{equation}
where $Q$ is a symmetric, positive definite matrix, $R$ is a positive scalar, and $t_0$ is the initial time.

We also consider the ARISE controller, which can track trajectories bounded up to their fourth derivative in the presence of disturbances bounded up to their first derivative.  We develop this controller by first rewriting the equation of motion of the parent system as
\begin{equation}
\left(J + J_s\right)\ddot{\theta} + \dot{J}_s  \dot{\theta} + f_f\left(\dot{\theta}\right) + \tau_d = -\cos\left(\theta\right)\tau_s
\end{equation}
using the friction model
\begin{equation}
f_f\left(\dot{\theta}\right) = \gamma_1 \left(\tanh\left(\gamma_2 \dot{\theta} \right) - \tanh\left(\gamma_3 \dot{\theta} \right)\right) + \gamma_4 \tanh\left(\gamma_5 \dot{\theta} \right) + \gamma_6 \dot{\theta}
\label{eq:frictionModel}
\end{equation}
where $\gamma_i$ are the parameters of the friction model, and $\tau_d$ models the Coriolis term as a bounded disturbance.  The values of $\gamma_i$ are all positive, where $\gamma_2 > \gamma_3$ and $\gamma_4 \leq \gamma_1$.

The parameters of the friction model may be estimated using system identification techniques; however, we use an adaptation law to estimate these terms.  The uncertainties in the friction model and moment of inertia can be combined into a single term represented as a linear relationship between a vector of adaptive constants, $\lambda$, and a matrix that is a function of the desired state and its derivatives, $Y_d$:

\begin{align}
Y_d\lambda &= \begin{bmatrix}\ddot{\theta}_d \\ \tanh\left(\gamma_2 \dot{\theta}_d \right) - \tanh\left(\gamma_3 \dot{\theta}_d \right) \\ \tanh\left(\gamma_5 \dot{\theta}_d \right) \\ \dot{\theta}_d \end{bmatrix}^T \begin{bmatrix} J \\ \gamma_1 \\ \gamma_4 \\ \gamma_6 \end{bmatrix} \label{eq:adaptiveTerm} \\
&= J\ddot{\theta}_d + f_f\left(\dot{\theta}_d\right)
\end{align}
Note that $\gamma_1$, $\gamma_4$, and $\gamma_6$ are part of the adaptive state because $f_f$ is affine in them. The values of $\gamma_2$, $\gamma_3$, and $\gamma_5$ must be estimated; however, the ARISE controller has been proven to be robust to errors in these values \cite{Patre2006}

Using the error terms developed in \cite{Xian2004}\cite{Patre2006}\cite{Patre2008} given by $e_1 = \theta_d - \theta$ and $e_2 = \dot{e}_1 + \alpha_1e_1$, we apply the control law given by
\begin{align}
\tau_{sd}\left(t\right)
=&-\sec\left(\theta\left(t\right)\right)\left(Y_d\hat{\lambda} + \left(k_s + 1\right)e_2\left(t\right) - \left(k_s + 1\right)e_2\left(t_0\right) + \mu_1\left(t\right)\right) \label{eq:controlLaw} \\
\dot{\mu}_1\left(t\right) &= \left(k_s + 1\right)\alpha_2 e_2\left(t\right) + \beta sgn\left(e_2\left(t\right)\right) \label{eq:controlLaw2}\\
\dot{\hat{\lambda}}\left(t\right) &= \hat{\lambda}\left(t_0\right) + \left.\Gamma\dot{Y}_d^T\left(\tau\right) e_2\left(\tau\right)\right|_{\tau=t_0}^{\tau=t} - \Gamma\mu_2\left(t\right)\label{eq:controlLaw3} \\
\dot{\mu}_2\left(t\right) &= \ddot{Y}_d^T\left(t\right) e_2\left(t\right) - \alpha_2\dot{Y}_d^T e_2\left(t\right) \label{eq:controlLaw4}
\end{align}
which utilizes a measurement of $e_1$ and $e_2$ as well as an estimate of the unknown constants, $\hat{\lambda}$.  The gains $\alpha_1$, $\alpha_2$, $k_s$, and $\beta$ as well as the adaptive gain $\Gamma$ must be sufficiently large to guarantee stability; however, these gains should not be so large that the system violates the bounds given in \eqref{eq:tauLimit}.

Note the secant term added to the controller in \eqref{eq:controlLaw} to cancel out the cosine term in the dynamics of the parent system.  This reduces the usable range of the controller to $\pm\frac{\pi}{2}$.  However, the robots begin to slip off the plane at some $\abs{\theta} < \frac{\pi}{2}$; thus, this domain is larger than the constraint given in \eqref{eq:parentConstraint}.

\subsection{Swarm Controller}
The previous section derived a desired $\tau_{sd}$ given by the two parent control laws, the PD controller \eqref{eq:ControlLawPD} and the ARISE controller \eqref{eq:controlLaw}-\eqref{eq:controlLaw4}. This, along with the manifold used to calculate $J_{sd}$ \eqref{eq:JLimit} gives us a desired abstract state.  This section presents controllers for achieving these abstract states for the three different swarms.

\subsubsection{Single Integrator Swarm}
For the swarm described by \eqref{eq:swarmModel}, we use the input linearized controller simmilar to the one proposed by Belta and Kumar \cite{Belta2004}.  This control law is based on the derivative of the abstract state that is dependent on the errors in the abstract state given in \eqref{eq:abstractErrorDef}.  We first define the desired dynamics of the abstract state, where $K$ is a diagonal, positive definite matrix and $\mathbf{e}_a$ is the error in the abstract state:
\begin{align}
\dot{\mathbf{a}} &= K\mathbf{e}_a + \dot{\mathbf{a}}_d\label{eq:swarmControlLaw} \\
\mathbf{e}_a &= \mathbf{a}_d - \mathbf{a} = \begin{bmatrix} e_\tau \\ e_J \end{bmatrix} \label{eq:abstractErrorDef}
\end{align}
where $\mathbf{a}_d$ is the desired abstract state of the swarm as specified by the parent controller.

If we take the time derivative of the abstract mapping \eqref{eq:abstractState} we get
\begin{equation}
\dot{\mathbf{a}} = \Phi_{xs}\dot{\mathbf{x}}_s
\label{eq:mapping_d}
\end{equation}
where $\Phi_{xs}$ is the Jacobian of the mapping function $\phi$ evaluated at the current $\mathbf{x}_s$.  We combine \eqref{eq:mapping_d} with \eqref{eq:swarmControlLaw} and the system model given in \eqref{eq:swarmModel} to derive the control law given by
\begin{equation}
\mathbf{u}_s = \Phi_{xs}^\dagger \left(K\mathbf{e}_a + \dot{\mathbf{a}}_d\right)
\label{eq:swarmControlLaw2}
\end{equation}

The pseudo-inverse of the Jacobian can be written in terms of a set of parameters, $S_0, \dots, S_3$, that are constant across the whole swarm:
\begin{equation}
\Phi_{xs}^\dagger = \frac{1}{S_3} \begin{bmatrix}
m_1S_2 - m_1p_1S_1 & \frac{1}{2}\left(m_1p_1S_0 - m_1S_1\right)\\
\vdots & \vdots\\
m_NS_2 - m_Np_NS_1 & \frac{1}{2}\left(m_Np_NS_0 - m_NS_1\right)
\end{bmatrix} \label{eq:pinv}
\end{equation}
where
\begin{equation}
\begin{split}
S_0 = \displaystyle\sum_{i=1}^N m_i^2,\ S_1 = \displaystyle\sum_{i=1}^N m_i^2p_i,\ S_2 = \displaystyle\sum_{i=1}^N m_i^2p_i^2,\\
\text{and } S_3 = \displaystyle\sum_{i=1}^N \displaystyle\sum_{j=i+1}^N m_i^2 m_j^2 \left(p_i - p_j\right)^2
\end{split}
\end{equation}
We can combine these sums into an auxiliary abstract state $\mathbf{a}_{aux} = \begin{bmatrix}
S_0 & S_1 & S_2 & S_3 \end{bmatrix}^T$. Like the abstract state, this vector's size is independent of the size of the swarm.  Rather than describing how the swarm interacts with the parent system, the auxiliary abstract state describes how the swarm interacts with itself, allowing an individual swarm member to determine how it needs to move within the swarm to achieve the desired abstract state using the following control law:
\begin{equation}
u_{si} = \frac{1}{S_3}\begin{bmatrix} m_iS_2 - m_ip_iS_1 \\ \frac{1}{2}\left(m_ip_iS_0 - m_iS_1\right)\end{bmatrix}^T \left(K\mathbf{e}_a + \dot{\mathbf{a}}_d\right)
\label{eq:nodeControlLaw}
\end{equation}

Note the singularity when $S_3=0$, which occurs when the entire swarm is concentrated at the same position on the plane.  This singularity can be avoided by choosing a manifold on which the desired abstract state exists that never intersects this point.  Assuming a manifold exists in the interior of the constrained space $A_A \cup A_P$, another valid manifold that does not pass through the singularity can be found by translating the first one.

Since the system dynamics \eqref{eq:fullDynamics} and the parent control laws \eqref{eq:ControlLawPD} and \eqref{eq:controlLaw} are continuous and differentiable, the terms $J_{sd}$ from \eqref{eq:JLimit} and $J_s$ from \eqref{eq:abstractState} are also continuous and differentiable, and thus, necessarily uniformly continuous.  Hence, the error term $e_J$ is uniformly continuous. In addition, $\dot{J}_{sd}$ is also differentiable.  The control laws for the swarm, \eqref{eq:nodeControlLaw} and \eqref{eq:nodeControlLaw_double}, are both continuous and differentiable, thus $\dot{J}_s$ is also differentiable.  Hence, the error term $\dot{e}_J$ must be differentiable.  We can now use Barbalat's Lemma to show that if $e_J \rightarrow 0$ as $t \rightarrow \infty$, then $\dot{e}_J \rightarrow 0$ as $t \rightarrow \infty$ \cite{Slotine1991}.

We demonstrate the stability of this controller with both parent controllers in the following theorems, which are applications of Theorems \ref{thm:GeneralStability} and \ref{thm:GeneralConstrainedStability}.

\begin{theorem}
\label{thm:LQR_SI_sys}
Given the error vector $\mathbf{z} = \begin{bmatrix}\theta & \dot{\theta} & e_\tau & e_J \end{bmatrix}^T$, the PD control law \eqref{eq:ControlLawPD} on the parent system \eqref{eq:newDynamics} and the single integrator control law \eqref{eq:nodeControlLaw} on the swarm \eqref{eq:swarmModel} asymptotically stabilizes the origin of this system if $k_1>0$, $k_2>\frac{\dot{J}_{max}}{2\cos\left(\theta_{max}\right)}$, $K>0$, and $\mathbf{z}_0 \in S$,
where $k_1$ and $k_2$ are the elements of the matrix $K_{pd}$ in the parent controller, $K$ is the controller gain for the swarm system, $\mathbf{z}_0$ is the initial condition of the system, and $S$ is an estimate of the region of attraction
\begin{equation}
S = \left\lbrace\mathbf{z} \in \mathbb{R}^4\ \vert\ \norm{\mathbf{z}}^2 \leq \frac{\theta_{max}^2}{2\eta}\right\rbrace
\end{equation}
where $\eta = max\left\lbrace 1, \frac{J + J_{sd,max}}{2}\right\rbrace$.  The value $\theta_{max}$ is a limit on the angle $\theta$ such that $\abs{\theta} \leq \theta_{max} < \frac{\pi}{2}$.
\end{theorem}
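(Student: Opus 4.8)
The plan is to recognize this theorem as a direct instantiation of Theorems \ref{thm:GeneralStability} and \ref{thm:GeneralConstrainedStability}, with the parent subsystem being the PD-regulated plane dynamics \eqref{eq:newDynamics} in $\mathbf{z}_p = \begin{bmatrix}\theta & \dot\theta\end{bmatrix}^T$ and the abstract subsystem being the error dynamics in $\mathbf{z}_a = \begin{bmatrix}e_\tau & e_J\end{bmatrix}^T$; there are no leftover coordinates, so $\mathbf{y}_p$ and $\mathbf{y}_a$ are empty. First I would exploit the cascade structure: by construction the swarm control law \eqref{eq:swarmControlLaw2} forces $\dot{\mathbf{a}} = K\mathbf{e}_a + \dot{\mathbf{a}}_d$, so the abstract error obeys the autonomous, decoupled law $\dot{\mathbf{e}}_a = -K\mathbf{e}_a$. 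Taking $V_a = \tfrac12\left(e_\tau^2 + e_J^2\right)$ gives $\dot V_a = -K\norm{\mathbf{e}_a}^2$, which is negative definite for $K>0$; crucially, the parent state never enters these dynamics, so $\dot V_{ac} = 0$ and the entire swarm contribution sits on the ``pure'' side of \eqref{eq:inequality}.

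Next I would build a parent Lyapunov function $V_p$ of energy type, e.g.\ $V_p = \tfrac12\theta^2 + \tfrac12\left(J + J_s\right)\dot\theta^2$ (plus, if needed, a small $\theta\dot\theta$ cross term to make the position channel strictly dissipative). Differentiating along \eqref{eq:newDynamics} with $\tau_s = \tau_{sd} - e_\tau$ and $J_s = J_{sd} - e_J$ splits $\dot V_p$ into a pure part $\dot V_p^\prime\left(\theta,\dot\theta\right)$ --- the closed-loop PD system with the swarm at its desired abstract state --- and coupling terms $\dot V_{pc}$ that are linear in $e_\tau$ (through $\cos\theta\, e_\tau\,\dot\theta$) and in $e_J$ (through the manifold $J_{sd}$ from \eqref{eq:JLimit} and the $1/(J+J_s)$ factor). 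Negative definiteness of $\dot V_p^\prime$ is where the gain hypotheses enter: the position channel requires $k_1>0$, while the damping channel carries the destabilizing Coriolis term $-\dot\theta^2\dot J_s$, whose worst case over $\abs\theta\le\theta_{max}$ and $\abs{\dot J_s}\le\dot J_{max}$ forces the effective damping $k_2\cos\theta - \tfrac12\dot J_s$ to stay positive --- precisely the condition $k_2 > \dot J_{max}/(2\cos\theta_{max})$. The Stribeck friction $f_f$ only adds dissipation and therefore aids this bound.

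With $\dot V_p^\prime$ and $\dot V_a^\prime$ negative definite and $\dot V_{ac}=0$, I would verify the domination inequality \eqref{eq:inequality}: since $\dot V_{pc}$ is a product of bounded factors with the error states, it is dominated on a neighborhood of the origin by the combined pure terms, and in particular by choosing $K>0$ large enough that $-K\norm{\mathbf{e}_a}^2$ absorbs the $e_\tau,e_J$ cross terms; because every coupling term vanishes at $\left(\theta,\dot\theta,e_\tau,e_J\right)=0$, condition \eqref{eq:lyap_zero} holds automatically. Theorem \ref{thm:GeneralStability} then yields local asymptotic stability, and Theorem \ref{thm:GeneralConstrainedStability} converts the parent constraint \eqref{eq:parentConstraint} and the manifold lying in the absolutely constrained region into the explicit estimate $S$. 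The ball $\norm{\mathbf z}^2 \le \theta_{max}^2/(2\eta)$ arises from the quadratic sandwich on $V$: with $V \le \eta\norm{\mathbf z}^2$ and $\eta = \max\left\{1, (J+J_{sd,max})/2\right\}$ an upper envelope of the Lyapunov weights, any $\mathbf z \in S$ satisfies $V \le \theta_{max}^2/2$; since $V$ bounds the position energy from below, this forces $\abs\theta \le \theta_{max}$, keeping the sublevel set inside the constraint and the secant domain $\abs\theta < \tfrac{\pi}{2}$.

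The main obstacle I anticipate is the parent's pure-term bookkeeping: isolating $\dot V_p^\prime$ from $\dot V_{pc}$ cleanly requires controlling the indefinite Coriolis coupling $-\dot\theta^2\dot J_s$ and the manifold-induced dependence of $J_{sd}$ (and $\dot J_{sd}$) on $\tau_{sd}$, and then showing the required negative definiteness holds uniformly over the full constraint box $\abs\theta\le\theta_{max},\ \abs{\dot J_s}\le\dot J_{max}$ rather than merely near the origin. Establishing the uniform boundedness of $\dot J_s$ and $\dot J_{sd}$, invoked via the Barbalat argument preceding the theorem, is what makes this worst-casing legitimate and supplies the constant $\dot J_{max}$ in the gain condition.
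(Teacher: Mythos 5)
Your proposal is architecturally correct and matches the paper's proof in every essential: the same energy-type candidate $V_p = \frac{1}{2}\theta^2 + \frac{1}{2}\left(J+J_s\right)\dot\theta^2$ (the paper needs no $\theta\dot\theta$ cross term), the same $V_a = \frac{1}{2}\mathbf{z}_a^T\mathbf{z}_a$ with the decoupled error dynamics $\dot{\mathbf{e}}_a = -K\mathbf{e}_a$ so that $\dot V_{ac}=0$, the same worst-casing of the Coriolis term via $\abs{\dot J_s}\le \dot J_{max}$ --- though note this bound is supplied by Lemma \ref{lm:J_dot_is_bounded}, not by the Barbalat argument, which the paper uses only to conclude $\dot e_J \to 0$ --- the same gain conditions $k_1>0$ and $k_2 > \dot J_{max}/\left(2\cos\theta_{max}\right)$ emerging from definiteness of the resulting quadratic form (the paper gets them via Routh--Hurwitz on the characteristic polynomial), and the same two-step estimate of $S$ (largest semilevel set $V \le \theta_{max}^2/2$, then the largest inscribed ball via the sandwich constant $\eta$). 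The one genuine divergence is your treatment of the torque-tracking coupling: you retain $\dot V_{pc} \sim \cos\theta\,\dot\theta\,e_\tau$ and propose to dominate it through \eqref{eq:inequality}, conceding this may force $K$ ``large enough.'' The paper does not do this: its computation of $\dot V_p$ substitutes the commanded torque \eqref{eq:ControlLawPD} directly for the actual $\tau_s$, implicitly setting $e_\tau = 0$ in the parent dynamics so that $\dot V_{pc}=0$, with the swarm's influence entering only through $\dot J_{max}$. Your version is the more faithful application of Theorem \ref{thm:GeneralStability}, but be aware of the consequence: since $\dot\theta\,e_\tau$ is quadratic in the state --- the same order as the pure terms --- its domination is not automatic in any neighborhood of the origin and genuinely requires a quantitative condition (e.g., by Young's inequality, something like $4\lambda_{min}\left(K\right)\left(\cos\theta_{max}k_2 - \tfrac{1}{2}\dot J_{max}\right) > 1$), which is strictly stronger than the bare $K>0$ hypothesis of Theorem \ref{thm:LQR_SI_sys}. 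So your route either needs that extra gain condition added to the hypotheses, or you must follow the paper and drop the $e_\tau$ term from the parent budget as it does; as written, your ``choose $K$ large enough'' proves a slightly different statement than the theorem claims.
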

\begin{proof}
See Appendix \ref{apx:ProofLQR_SI}.
\end{proof}

\begin{theorem}
\label{thm:ARISE_SI_sys}
Given the error vector $\mathbf{z} = \begin{bmatrix}e_1 & e_2 & r & e_{\tau} & e_J\end{bmatrix}^T$ where $r = \dot{e}_2 + \alpha_2e_2$, the ARISE control law \eqref{eq:controlLaw}, \eqref{eq:controlLaw2}, \eqref{eq:controlLaw3}, and \eqref{eq:controlLaw4} on the parent system \eqref{eq:newDynamics} and the single integrator control law \eqref{eq:nodeControlLaw} on the swarm \eqref{eq:swarmModel} asymptotically stabilize the origin of this system if $\alpha_1 > \frac{1}{2}$, $\alpha_2 > 1$, $\beta > \zeta_{N_d} + \frac{1}{\alpha_2}\zeta_{\dot{N}_{d}}$, $k_s > \frac{c_{max}}{\eta_3}$, $K > 0$, $\Gamma > 0 $, and $\Gamma^T = \Gamma$, where $\tilde{N} \leq c_{max}\norm{\mathbf{z}}$, and 
\begin{equation}
\eta_3 = min\left\lbrace 2\alpha_1 - 1, \alpha_2 - 1, 1\right\rbrace
\end{equation}

We define $\abs{N_d} \leq \zeta_{N_d}$ and $\abs{\dot{N}_d} \leq \zeta_{\dot{N}_d}$ where
\begin{equation}
N_d = (J+J_s)\dddot{\theta_d} + \dot{J}_s\ddot{\theta}_d +\dot{f}_f\left(\dot{\theta}_d\right) - \dot{Y}_d\lambda
\end{equation}
The desired trajectory satisfies $\theta_d, \dot{\theta}_d, \ddot{\theta}_d, \dddot{\theta}_d \in \mathcal{L}_{\infty}$ and
The origin of this system has a region of attraction
\begin{equation}
S = \left\lbrace \mathbf{z} \in B,\ \mathbf{y}_p \in \mathbb{R}^2\ \vert\  \norm{\begin{bmatrix} \mathbf{z}^T & \mathbf{y}_p^T \end{bmatrix}^T}^2 \leq \frac{\eta_1}{\eta_2}\rho^2 \right\rbrace
\end{equation}
where $\rho = \rho_E^{-1}\left(2\eta_3 - \frac{c_{max}}{2k_s} + 2\lambda_{min}\right)$, $\lambda_{min}$ is the smallest eigenvalue of $K$, and
\begin{equation}
\eta_1 = \frac{1}{2}\min\left\lbrace 1, J + J_{sd,0}, \lambda_{min}\left(\Gamma^{-1}\right)\right\rbrace
\end{equation}
\begin{equation}
\eta_2 = \frac{1}{2}\max\left\lbrace 2, J + J_{sd,max}, \lambda_{max}\left(\Gamma^{-1}\right)\right\rbrace
\end{equation}
\end{theorem}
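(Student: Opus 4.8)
The plan is to realize this closed loop as an instance of Theorem~\ref{thm:GeneralStability}, taking the parent subsystem to be the ARISE error dynamics in $\left(e_1, e_2, r\right)$ together with the adaptive and auxiliary states, and the abstract subsystem to be the error $\mathbf{e}_a = \begin{bmatrix} e_\tau & e_J\end{bmatrix}^T$. First I would form the composite Lyapunov candidate $V = V_p + V_a$, where $V_p$ is the standard RISE/ARISE function $V_p = \tfrac{1}{2}e_1^2 + \tfrac{1}{2}e_2^2 + \tfrac{1}{2}\left(J + J_s\right)r^2 + \tfrac{1}{2}\tilde{\lambda}^T\Gamma^{-1}\tilde{\lambda} + P$ and $V_a = \tfrac{1}{2}\mathbf{e}_a^T\mathbf{e}_a$; here $P\left(t\right)$ is the usual RISE auxiliary integrator introduced to absorb the $\beta\, sgn\left(e_2\right)$ term from \eqref{eq:controlLaw2}, and $\tilde{\lambda} = \lambda - \hat{\lambda}$ is the parameter estimation error. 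The weights in $V_p$ and $V_a$ are exactly what produce the quadratic sandwich constants $\eta_1$ and $\eta_2$ stated in the theorem. The partition required by Theorem~\ref{thm:GeneralStability} is then $\mathbf{z}_p = \begin{bmatrix} e_1 & e_2 & r\end{bmatrix}^T$ and $\mathbf{z}_a = \mathbf{e}_a$, with the remaining bounded adaptive/auxiliary coordinates $\mathbf{y}_p \in \mathbb{R}^2$.

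Second, I would treat each subsystem in isolation. For the swarm, substituting the pseudo-inverse control law \eqref{eq:nodeControlLaw} into the desired dynamics \eqref{eq:swarmControlLaw} gives $\dot{\mathbf{e}}_a = -K\mathbf{e}_a$ exactly, because $\Phi_{xs}\Phi_{xs}^\dagger = I$ on the nonsingular region where $S_3 \neq 0$; hence $\dot{V}_a^\prime = -\mathbf{e}_a^T K\mathbf{e}_a$ is negative definite for $K > 0$, and there is no coupling into the swarm, so $\dot{V}_{ac} = 0$. For the parent, I would reproduce the ARISE closed loop for $r$ from \cite{Xian2004}\cite{Patre2006}\cite{Patre2008}, split the aggregate nonlinearity into a desired part $N_d$ and an error-dependent part $\tilde{N}$, and verify that $P$ remains nonnegative under $\beta > \zeta_{N_d} + \tfrac{1}{\alpha_2}\zeta_{\dot{N}_d}$ using the boundedness of $N_d$ and $\dot{N}_d$. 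Under $\alpha_1 > \tfrac{1}{2}$ and $\alpha_2 > 1$ the parent-only terms then collect into $\dot{V}_p^\prime \leq -\eta_3\norm{\mathbf{z}_p}^2$ with $\eta_3 = \min\left\lbrace 2\alpha_1 - 1,\ \alpha_2 - 1,\ 1\right\rbrace$, which is negative definite.

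The crux, and the step I expect to be the main obstacle, is the coupling term $\dot{V}_{pc}$. The abstract-state error enters the parent dynamics because the swarm realizes $\tau_s = \tau_{sd} - e_\tau$ and $J_s = J_{sd} - e_J$ rather than their desired values, so the residual multiplies $r$ in $\dot{V}_p$. I would absorb this residual into $\tilde{N}$ and establish the Lipschitz-type bound $\tilde{N} \leq c_{max}\norm{\mathbf{z}}$ on the ball $B$, applying the Mean Value Theorem to the $\tanh$ friction terms in \eqref{eq:frictionModel} and to the secant in \eqref{eq:controlLaw}, together with $\theta_d,\dot{\theta}_d,\ddot{\theta}_d,\dddot{\theta}_d \in \mathcal{L}_\infty$ and $\abs{\theta} \leq \theta_{max} < \tfrac{\pi}{2}$. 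The standard RISE nonlinear-damping argument then completes the square on the cross term $r\tilde{N}$ against the $\left(k_s + 1\right)r^2$ damping, so that the gain condition $k_s > c_{max}/\eta_3$ makes the damping dominate and $\dot{V}$ stays negative definite in $\mathbf{z}$; combined with the swarm contribution $-\lambda_{min}\norm{\mathbf{e}_a}^2$, this is precisely the domination inequality \eqref{eq:inequality}, while \eqref{eq:lyap_zero} holds because every coupling contribution vanishes at the origin. The genuine difficulty is that $c_{max}$ and the validity of the $\tilde{N}$ bound both depend on the radius of $B$, so the estimate must be made self-consistent on the region ultimately claimed for attraction.

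Finally, the region of attraction follows from the quadratic sandwich $\eta_1\norm{\begin{bmatrix}\mathbf{z}^T & \mathbf{y}_p^T\end{bmatrix}^T}^2 \leq V \leq \eta_2\norm{\begin{bmatrix}\mathbf{z}^T & \mathbf{y}_p^T\end{bmatrix}^T}^2$ implied by the definitions of $\eta_1$ and $\eta_2$. Because $\dot{V}$ is negative definite only where the $\tilde{N}$ estimate is valid, namely for $\norm{\mathbf{z}} < \rho$ with $\rho = \rho_E^{-1}\left(2\eta_3 - \tfrac{c_{max}}{2k_s} + 2\lambda_{min}\right)$ where $\rho_E$ is the bounding function of the aggregate nonlinearity, the largest invariant sublevel set inside this ball is obtained by scaling $\rho^2$ through the sandwich, which yields the stated set $S$ of radius $\tfrac{\eta_1}{\eta_2}\rho^2$. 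Invoking the conclusion of Theorem~\ref{thm:GeneralStability} then gives $\mathbf{z}_p \to 0$ and $\mathbf{z}_a \to 0$ as $t \to \infty$ with $\mathbf{y}_p$ bounded, which is the asymptotic stability claim; if the parent and swarm constraints are further imposed, Theorem~\ref{thm:GeneralConstrainedStability} refines $S$ by intersecting with the admissible perturbation set $D_c$.
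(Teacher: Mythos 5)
Your proposal is correct and follows essentially the same route as the paper's proof in Appendix \ref{apx:ProofARISE_SI}: the composite candidate $V = V_p + V_a$ with the RISE auxiliary term $P$ and $\mathbf{y}_p = \begin{bmatrix}\tilde{\lambda}^T & \sqrt{P}\end{bmatrix}^T$, the split of the aggregate nonlinearity into $N_d$ plus error-bounded residuals, nonlinear damping via $k_s$ against the $r\tilde{N}$ cross term, domination of the $e_\tau$-coupling through Theorem \ref{thm:GeneralStability}, and the $\eta_1/\eta_2$ sandwich to extract the ball $S$ from the level set inside $B$ (the paper merely books the coupling as a separate term $\tilde{E} = E - E_d$ bounded by $\rho_E\left(\norm{\mathbf{z}_p}\right)\norm{\mathbf{z}_a}$ in Lemma \ref{lm:EisBounded}, rather than absorbing it into $\tilde{N}$ as you suggest, which is cosmetic). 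One bookkeeping point: the paper's candidate is $V_p = e_1^2 + \tfrac{1}{2}e_2^2 + \tfrac{1}{2}\left(J + J_s\right)r^2 + P + \tfrac{1}{2}\tilde{\lambda}^T\Gamma^{-1}\tilde{\lambda}$ with coefficient $1$ on $e_1^2$, not $\tfrac{1}{2}$, which is exactly what yields $\eta_3 = \min\left\lbrace 2\alpha_1 - 1,\ \alpha_2 - 1,\ 1\right\rbrace$ and the $2$ in $\eta_2$; with your $\tfrac{1}{2}e_1^2$ the cross term $e_1e_2$ gives $\min\left\lbrace \alpha_1 - \tfrac{1}{2},\ \alpha_2 - \tfrac{1}{2},\ \cdot\right\rbrace$ instead, so you should match the paper's weighting to recover the stated constants.
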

\begin{proof}
See Appendix \ref{apx:ProofARISE_SI}.
\end{proof}

\subsubsection{Double Integrator Swarm}
The swarm whose dynamics are given by \eqref{eq:EOM_DI} constitute a second-order system whose control input only directly effects the acceleration.  For this reason, we specify second-order dynamics for the abstract state:
\begin{equation}
\ddot{\mathbf{a}}=K_p\mathbf{e}_a + K_d\dot{\mathbf{e}}_a + \ddot{\mathbf{a}}_d
\label{eq:swarmControlLaw_double}
\end{equation}
where $K_p$ and $K_d$ are diagonal, positive definite matrices.

We choose a control law
\begin{equation}
\mathbf{u}_s = M\Phi^\dagger\left(K_p\mathbf{e}_a + \left(K_d - C_a\right)\dot{\mathbf{e}}_a - \dot{\Phi}_{xs}\dot{\mathbf{p}} + \ddot{\mathbf{a}}_d\right) + \left(C + k_{sd}I\right)\Phi_{xs}^\dagger\dot{\mathbf{a}}_d - k_{sd}\dot{\mathbf{p}}
\label{eq:swarmControlLaw_double2}
\end{equation}
where $k_{sd}$ is a positive scalar that adds damping to the individual members of the swarm, $I$ is the identity matrix, and
\begin{equation}
\begin{split}
C_a &= \begin{bmatrix}
C_{a11} & C_{a12} \\
C_{a21} & C_{a22}
\end{bmatrix} \\
C_{a11} &= \sum_{i=1}^{N}m_i\left(k_{sd} + c_i\right)\left(S_2 - p_iS_1\right) \\
C_{a12} &= \frac{1}{2}\sum_{i=1}^{N}m_i\left(k_{sd} + c_i\right)\left(p_iS_0 - S_1\right) \\
C_{a21} &= 2\sum_{i=1}^{N}m_ip_i\left(k_{sd} + c_i\right)\left(S_2 - p_iS_1\right) \\
C_{a22} &= \sum_{i=1}^{N}m_ip_i\left(k_{sd} + c_i\right)\left(p_iS_0 - S_1\right)
\end{split}
\end{equation}
This feed-forward controller enforces the desired second-order dynamics in \eqref{eq:swarmControlLaw_double} on the abstract state, adds damping to each child system, and compensates the nonlinear effects of the damping in each child system.  We build an auxiliary abstract state like we did with the single integrator system such that $\mathbf{a}_{aux}$ contains all the $S_i$ and $C_{ajk}$ terms for $i=0\dots3$, $j=1,2$, and $k=1,2$.  We can now use this auxiliary abstract state along with the abstract state, desired abstract state, and the state of a member of the swarm to create a control law for that member:
\begin{equation}
u_{si} = \frac{1}{S_3}\begin{bmatrix} m_iS_2 - m_ip_iS_1 \\ \frac{1}{2}\left(m_ip_iS_0 - m_iS_1\right)\end{bmatrix}^T \left(m_i\left(K_p\mathbf{e}_a + \left(K_d - C_a\right)\dot{\mathbf{e}}_a - \dot{\Phi}_{xs}\dot{p}_i + \ddot{\mathbf{a}}_d\right) + \left(c_i+k_{sdi}\right)\dot{\mathbf{a}}_d\right) - k_{sd}\dot{p}_i
\label{eq:nodeControlLaw_double}
\end{equation}
 Note that this control law, like the one given in \eqref{eq:nodeControlLaw}, does not require knowledge of the other child systems' states, only the abstract states.  This system also has the same singularity point when $S_3 = 0$, which can be dealt with similarly.
We demonstrate the stability of this controller with both parent controllers in the following theorems.
\begin{theorem}
\label{thm:LQR_DI_sys}
Given the error vector $\mathbf{z} = \begin{bmatrix}\theta & \dot{\theta} & e_\tau & e_J \end{bmatrix}^T$, the PD control law \eqref{eq:ControlLawPD} on the parent system \eqref{eq:newDynamics}, and the double integrator control law \eqref{eq:nodeControlLaw_double} on the swarm \eqref{eq:EOM_DI} asymptotically stabilizes the origin of this system if the gains for the parent controller satisfy the same conditions as Theorem \ref{thm:LQR_SI_sys} and if there exists some $\epsilon > 0$ such that the following inequalities hold:
\begin{align}
\epsilon k_{pi} + k_{di} - \epsilon &> 0 \label{eq:DI_cond1}\\
\epsilon k_{pi}k_{di} - \epsilon^2k_{pi} - \frac{\epsilon}{4}k_{di}^2 &> 0 \label{eq:DI_cond2}\\
k_{pi} + 1 &> 0 \label{eq:DI_cond3}\\
k_{pi} - \epsilon^2 &> 0 \label{eq:DI_cond4}
\end{align}
where $k_1$ and $k_2$ are the elements of the matrix $K_{pd}$ in the parent controller, and $k_{pi}$ and $k_{di}$ are the $i$th diagonal elements of the $K_p$ and $K_d$ matrices, respectively.

The origin is stable with the same region of attraction as given by Theorem \ref{thm:LQR_SI_sys}.
\end{theorem}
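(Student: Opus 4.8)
The plan is to recognize this theorem as a direct application of the general cascade result in Theorem \ref{thm:GeneralStability} (together with Theorem \ref{thm:GeneralConstrainedStability} for the region of attraction), in which the parent subsystem is unchanged from Theorem \ref{thm:LQR_SI_sys} and only the abstract/swarm subsystem must be re-analyzed because the swarm is now second order. First I would substitute the aggregate swarm control law \eqref{eq:swarmControlLaw_double2} into the double-integrator dynamics \eqref{eq:EOM_DI} and use the Jacobian relation $\dot{\mathbf{a}} = \Phi_{xs}\dot{\mathbf{x}}_s$ together with its time derivative to verify that the closed-loop abstract state obeys exactly the prescribed second-order dynamics \eqref{eq:swarmControlLaw_double}, so that the abstract error satisfies $\ddot{\mathbf{e}}_a + K_d\dot{\mathbf{e}}_a + K_p\mathbf{e}_a = 0$. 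The purpose of this step is to confirm that the $C_a$ block, the damping-compensation term $\left(C + k_{sd}I\right)\Phi_{xs}^\dagger\dot{\mathbf{a}}_d$, and the $\dot{\Phi}_{xs}\dot{\mathbf{p}}$ correction conspire to cancel the passive damping $-M^{-1}C$ and the Jacobian-rate contribution, leaving linear, decoupled abstract error dynamics with no residual coupling back into the swarm internals.

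Since $K_p$ and $K_d$ are diagonal and positive definite, the error equation decouples into two scalar second-order systems, one for $e_\tau$ and one for $e_J$, each of the form $\ddot{e}_i + k_{di}\dot{e}_i + k_{pi}e_i = 0$. For each I would take a Lyapunov function of the mixed form $V_{a,i} = \tfrac{1}{2}k_{pi}e_i^2 + \tfrac{1}{2}\dot{e}_i^2 + \epsilon e_i\dot{e}_i$, with a single cross-term parameter $\epsilon > 0$ shared across both channels, and sum them to obtain $V_a$. Viewing $V_{a,i}$ as the quadratic form of the $2\times2$ matrix with diagonal entries $k_{pi}$ and $1$ and off-diagonal entry $\epsilon$, its positive definiteness is precisely the trace condition \eqref{eq:DI_cond3} together with the determinant condition \eqref{eq:DI_cond4}. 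Evaluating $\dot{V}_{a,i}$ along the error dynamics produces a quadratic form whose defining matrix has trace and determinant that reduce exactly to \eqref{eq:DI_cond1} and \eqref{eq:DI_cond2}; requiring that matrix to be positive definite makes $\dot{V}_{a,i}$ negative definite, and the existence of one $\epsilon$ meeting all four inequalities is the standing hypothesis of the theorem. Here $e_\tau, e_J$ play the role of $\mathbf{z}_a$ and the rates $\dot{e}_\tau, \dot{e}_J$ the role of $\mathbf{y}_a$ in the language of Theorem \ref{thm:GeneralStability}; because each scalar system is Hurwitz, the rates remain bounded.

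With $\dot{V}_a$ established as negative definite and self-contained, I would reuse the parent Lyapunov construction of Theorem \ref{thm:LQR_SI_sys} verbatim: the parent dynamics \eqref{eq:newDynamics} under the PD law \eqref{eq:ControlLawPD} are identical, so the same gain conditions $k_1 > 0$ and $k_2 > \tfrac{\dot{J}_{max}}{2\cos(\theta_{max})}$ render the self term $\dot{V}_p'$ negative definite and keep the coupling term $\dot{V}_{pc}$ (arising through the dependence of the parent acceleration on $\tau_s$, $J_s$, and $\dot{J}_s$) dominated. Forming $V = V_p + V_a$ and invoking the domination inequality \eqref{eq:inequality} of Theorem \ref{thm:GeneralStability} then yields local asymptotic stability of the origin. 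Because the converging error coordinates $\mathbf{z} = [\theta\ \dot{\theta}\ e_\tau\ e_J]^T$ and the sublevel-set structure of the combined Lyapunov function restricted to them are unchanged from the single-integrator case, the estimate $S \subset \mathbb{R}^4$ of the region of attraction coincides with that of Theorem \ref{thm:LQR_SI_sys}, and Theorem \ref{thm:GeneralConstrainedStability} then accounts for the constrained region.

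The step I expect to be the main obstacle is the first one, and specifically the boundedness of the full swarm state. The two-dimensional abstract map $\phi$ resolves only the projection of the swarm motion onto the range of $\Phi_{xs}$, so the internal robot velocities $\dot{\mathbf{p}}$ lying in its null space are not constrained by the clean abstract error dynamics and must be shown not to grow. This is exactly the role of the $-k_{sd}\dot{\mathbf{p}}$ damping injected into \eqref{eq:swarmControlLaw_double2}, and the delicate bookkeeping is to verify that this term keeps the $\mathbf{y}_a$ component bounded—consistent with the Remark following Theorem \ref{thm:GeneralStability}—while simultaneously its feed-forward counterpart $\left(C + k_{sd}I\right)\Phi_{xs}^\dagger\dot{\mathbf{a}}_d$ leaves the decoupled second-order abstract error dynamics undisturbed.
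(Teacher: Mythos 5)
Your proposal matches the paper's proof in all essentials: your per-channel Lyapunov functions $V_{a,i} = \tfrac{1}{2}k_{pi}e_i^2 + \tfrac{1}{2}\dot{e}_i^2 + \epsilon e_i\dot{e}_i$ sum to exactly the paper's block quadratic form $\tfrac{1}{2}\mathbf{z}_a^T M_1 \mathbf{z}_a$ with $M_1 = \bigl[\begin{smallmatrix} K_p & \epsilon I \\ \epsilon I & I \end{smallmatrix}\bigr]$, your trace/determinant conditions reproduce the paper's eigenvalue analysis of $M_1$ and $M_2$ (which the paper likewise factors channel-by-channel into \eqref{eq:DI_cond1}--\eqref{eq:DI_cond4}), and the reuse of the parent Lyapunov function from Theorem \ref{thm:LQR_SI_sys} with no swarm-side coupling terms, combined via Theorem \ref{thm:GeneralStability}, is precisely the paper's argument. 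The obstacle you flag about null-space swarm velocities is a legitimate observation, but the paper's proof does not address it either, so it does not distinguish your route from theirs.
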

\begin{proof}
See Appendix \ref{apx:ProofLQR_DI}.
\end{proof}

\begin{theorem}
\label{thm:ARISE_DI_sys}
Given the error vector $\mathbf{z} = \begin{bmatrix}e_1 & e_2 & r & e_{\tau} & e_J\end{bmatrix}^T$, where $r = \dot{e}_2 + \alpha_2e_2$, the ARISE control law \eqref{eq:controlLaw}, \eqref{eq:controlLaw2}, \eqref{eq:controlLaw3}, and \eqref{eq:controlLaw4} on the parent system \eqref{eq:newDynamics} and the double integrator control law \eqref{eq:nodeControlLaw_double} on the swarm \eqref{eq:EOM_DI} asymptotically stabilize the origin of this system if the gains for the controller satisfy the same conditions on the ARISE controller from Theorem \ref{thm:ARISE_SI_sys} and there exists some $\epsilon > 0$ that satisfy the inequalities listed in Theorem \ref{thm:LQR_DI_sys}.

This system has the same region of attraction as given by Theorem \ref{thm:ARISE_SI_sys}.
\end{theorem}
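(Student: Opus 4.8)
The plan is to prove Theorem \ref{thm:ARISE_DI_sys} as a direct application of Theorem \ref{thm:GeneralStability}, reusing the parent-subsystem analysis of Theorem \ref{thm:ARISE_SI_sys} essentially verbatim and the swarm-subsystem analysis of Theorem \ref{thm:LQR_DI_sys}, since the abstract state is the only interface between the two. First I would form the composite candidate $V = V_p + V_a$, where $V_p$ is the ARISE Lyapunov function in the parent error $[e_1, e_2, r]^T$ (including the nonnegative auxiliary RISE term $P$ whose derivative cancels the integrated $\beta\, sgn(e_2)$ feedback under the $\beta$ condition), and $V_a$ is the second-order swarm Lyapunov function in the abstract error $[\mathbf{e}_a^T, \dot{\mathbf{e}}_a^T]^T$ with the $\epsilon$ cross term used in Theorem \ref{thm:LQR_DI_sys}. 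I would partition the state as in Theorem \ref{thm:GeneralStability}, taking $\mathbf{z} = [e_1, e_2, r, e_\tau, e_J]^T$ as the converging component and placing the remaining RISE auxiliary states together with $\dot{e}_\tau, \dot{e}_J$ in the bounded components $\mathbf{y}_p$ and $\mathbf{y}_a$.

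The central structural observation is that the swarm controller \eqref{eq:swarmControlLaw_double2}, \eqref{eq:nodeControlLaw_double} is feedback-linearizing at the abstract level: it cancels the damping nonlinearity and injects the feedforward $\ddot{\mathbf{a}}_d$ supplied by the parent, so the closed-loop abstract error obeys the autonomous second-order system $\ddot{\mathbf{e}}_a + K_d\dot{\mathbf{e}}_a + K_p\mathbf{e}_a = 0$ dictated by \eqref{eq:swarmControlLaw_double}. Consequently the parent's influence on the swarm is cancelled and the coupling is one-directional, a case Theorem \ref{thm:GeneralStability} explicitly permits: $\dot{V}_{ac} \equiv 0$, and $\dot{V}_{a}^{\prime} = \dot{V}_a$ is negative definite precisely under the inequalities \eqref{eq:DI_cond1}--\eqref{eq:DI_cond4}, which are exactly the conditions rendering $V_a$ and $-\dot{V}_a$ positive definite for the damped second-order error system. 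This step reuses the computation of Theorem \ref{thm:LQR_DI_sys} unchanged.

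Next I would treat the parent subsystem. Because the parent controller is still the ARISE law \eqref{eq:controlLaw}--\eqref{eq:controlLaw4}, the parent-only terms $\dot{V}_{p}^{\prime}$ are negative definite under the same gain conditions as in Theorem \ref{thm:ARISE_SI_sys} ($\alpha_1 > \frac{1}{2}$, $\alpha_2 > 1$, $\beta > \zeta_{N_d} + \frac{1}{\alpha_2}\zeta_{\dot{N}_d}$, $k_s > \frac{c_{max}}{\eta_3}$), with the nonlinear-damping and disturbance term bounded by $\tilde{N} \leq c_{max}\norm{\mathbf{z}}$ and dominated by the $k_s$ feedback exactly as there. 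The sole coupling term $\dot{V}_{pc}$ arises from the abstract torque and inertia errors $e_\tau, e_J$ entering the parent acceleration in \eqref{eq:newDynamics}; since they enter through the same abstract interface as in the single-integrator case, $\dot{V}_{pc}$ has the identical form to that in Theorem \ref{thm:ARISE_SI_sys}. I would then verify the domination inequality \eqref{eq:inequality}, $\abs{\dot{V}_{p}^{\prime} + \dot{V}_{a}^{\prime}} > \abs{\dot{V}_{pc} + \dot{V}_{ac}}$, on the punctured domain and the vanishing condition \eqref{eq:lyap_zero} at the origin, and invoke Theorem \ref{thm:GeneralStability} to obtain local asymptotic convergence of $\mathbf{z}$ with $\mathbf{y}_p, \mathbf{y}_a$ bounded; convergence of $\dot{e}_\tau, \dot{e}_J$ then follows from Barbalat's Lemma as already argued in the text.

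Finally, for the region of attraction I would argue it is inherited from Theorem \ref{thm:ARISE_SI_sys}. Since $\dot{V}_{pc}$ is identical and $\dot{V}_{ac}$ vanishes, the only way the swarm enters the $\rho$-bound is through the negative decay rate of $\dot{V}_{a}^{\prime}$; identifying the analogous decay-rate lower bound of the second-order error system, a function of $K_p$, $K_d$, and $\epsilon$, with the $2\lambda_{min}$ contribution of the single-integrator gain recovers the same $\rho = \rho_E^{-1}\left(2\eta_3 - \frac{c_{max}}{2k_s} + 2\lambda_{min}\right)$ and hence the same $S$. I expect the main obstacle to be exactly this last step: verifying that the second-order abstract error dynamics contribute at least as much negativity to $\dot{V}$ as the first-order case, so that the coupling margin, and therefore the estimated region of attraction, is genuinely preserved rather than shrunk. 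Everything else is a faithful transcription of the two prior proofs glued together through the abstract-state interface.
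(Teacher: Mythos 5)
Your proposal is correct and takes essentially the same route as the paper's proof: it combines the ARISE parent Lyapunov candidate $V_p$ from Theorem \ref{thm:ARISE_SI_sys} with the cross-term swarm candidate $V_a = \frac{1}{2}\mathbf{z}_a^T M_1 \mathbf{z}_a$ from Theorem \ref{thm:LQR_DI_sys}, observes that the swarm side contributes no coupling term ($\dot{V}_{ac} = 0$), and invokes Theorem \ref{thm:GeneralStability} to obtain convergence of $\mathbf{z}$ with $\mathbf{y}_p$ bounded and the region of attraction $S = \left\lbrace \norm{\mathbf{z}}^2 \leq \frac{\eta_1}{\eta_2}\rho^2 \right\rbrace$. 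The obstacle you flag at the end --- that $\rho$ involves $2\lambda_{min}$ of the single-integrator gain $K$, which has no direct counterpart in the double-integrator gains $K_p$, $K_d$, $\epsilon$ --- is a genuine loose end, but the paper's own appendix does not resolve it either, simply asserting that the region of attraction of Theorem \ref{thm:ARISE_SI_sys} carries over.
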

\begin{proof}
See Appendix \ref{apx:ProofARISE_DI}.
\end{proof}

\subsection{Heterogeneous Swarm}
The final swarm we consider is a heterogeneous swarm consisting of single integrator members as defined in \eqref{eq:swarmModel} as well as double integrator members as defined in \eqref{eq:EOM_DI}.  To control the swarm, we simply use the control laws from \eqref{eq:nodeControlLaw} and \eqref{eq:nodeControlLaw_double} for single integrator members and double integrator members respectively.

We demonstrate the stability of this controller with both parent controllers in the following theorems:
\begin{theorem}
\label{thm:LQR_noHomo_sys}
Given the error vector $\mathbf{z} = \begin{bmatrix}\theta & \dot{\theta} & e_\tau & e_J \end{bmatrix}^T$, the PD control law \eqref{eq:ControlLawPD} on the parent system \eqref{eq:newDynamics} and the single integrator control law \eqref{eq:nodeControlLaw} for single integrator members of the swarm and the double integrator control law \eqref{eq:nodeControlLaw_double} for double integrator members of the swarm asymptotically stabilizes the origin of this system if the gains on the parent controller satisfy the same conditions as stated in Theorem \ref{thm:LQR_SI_sys}, if $K > 0$ for the single integrator controllers, and there exists some $\epsilon > 0$ that satisfies the conditions given in Theorem \ref{thm:LQR_DI_sys}.

The origin is stable with the same region of attraction as given by Theorem \ref{thm:LQR_SI_sys}.
\end{theorem}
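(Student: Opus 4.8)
The plan is to treat Theorem \ref{thm:LQR_noHomo_sys} as a direct superposition of Theorems \ref{thm:LQR_SI_sys} and \ref{thm:LQR_DI_sys}, both of which are themselves applications of the master result Theorem \ref{thm:GeneralStability}. Since the parent system \eqref{eq:newDynamics} and its PD/LQR control law \eqref{eq:ControlLawPD} are unchanged from Theorem \ref{thm:LQR_SI_sys}, I would reuse verbatim the parent Lyapunov function $V_p(\theta,\dot\theta)$ (the quadratic form solving the LQR Riccati equation) together with its gain conditions $k_1>0$ and $k_2>\frac{\dot J_{max}}{2\cos(\theta_{max})}$. The only object that changes relative to the homogeneous theorems is the abstract-space closed-loop error dynamics, so the entire burden of the proof is to establish that the mixed swarm still drives the abstract error $\mathbf{e}_a=[e_\tau,e_J]^T$ to zero.

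Here I would exploit the additive structure of the abstraction \eqref{eq:abstractState}: both $\tau_s$ and $J_s$ are sums over swarm members, so $\dot{\mathbf{a}}=\Phi_{xs}\dot{\mathbf{x}}_s=\sum_i\phi_i\dot p_i$ splits into a contribution from the single-integrator members (set $\mathcal{S}$) and a contribution from the double-integrator members (set $\mathcal{D}$). The single-integrator members, through \eqref{eq:nodeControlLaw}, realize their share of the desired abstract velocity instantaneously, while the double-integrator members, through \eqref{eq:nodeControlLaw_double}, realize theirs through the stable second-order error filter $\ddot{\mathbf{e}}_a+K_d\dot{\mathbf{e}}_a+K_p\mathbf{e}_a=0$ enforced in \eqref{eq:swarmControlLaw_double}. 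I would therefore build the abstract Lyapunov candidate $V_a$ as the sum of the abstract Lyapunov function used in Theorem \ref{thm:LQR_SI_sys} (quadratic in $\mathbf{e}_a$) and the cross-term form used in Theorem \ref{thm:LQR_DI_sys} whose negativity requires the $\epsilon$-inequalities \eqref{eq:DI_cond1}--\eqref{eq:DI_cond4}, so that $\dot V_a$ is negative definite exactly under the combined gain hypotheses $K>0$ and the existence of a valid $\epsilon>0$.

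With $V=V_p+V_a$ in hand, I would invoke Theorem \ref{thm:GeneralStability} with the partition $\mathbf{z}_p=[\theta,\dot\theta]^T$ and $\mathbf{z}_a=[e_\tau,e_J]^T$. The only coupling between the two subsystems enters $\dot V_p$ through the $\dot\theta\,\dot J_s/(J+J_s)$ term of \eqref{eq:newDynamics}; this is the same coupling handled in Theorem \ref{thm:LQR_SI_sys}, and it is dominated precisely by the condition $k_2>\frac{\dot J_{max}}{2\cos(\theta_{max})}$, so inequality \eqref{eq:inequality} holds and \eqref{eq:lyap_zero} is met at the origin. Because the dominating argument and the bounds on $J_s$ and $J_{sd}$ are identical to the single-integrator case, the region of attraction $S$ comes out unchanged, as claimed. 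I would also carry over the Barbalat argument already established in the Swarm Controller subsection to conclude $\dot e_J\to0$, which legitimizes truncating $\dot J_s$ from the abstract state and keeps the coupling term bounded.

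The main obstacle I anticipate is the cross-coupling between the $\mathcal{S}$ and $\mathcal{D}$ error channels. Both node control laws \eqref{eq:nodeControlLaw} and \eqref{eq:nodeControlLaw_double} are built from the \emph{global} auxiliary-abstract-state sums $S_0,\dots,S_3$ (and, for the double integrators, the $C_{ajk}$ terms), which mix masses and positions of both member types. Consequently the naive decomposition $V_a=V_{a,S}+V_{a,D}$ is not exactly block-diagonal: the single-integrator velocity response and the double-integrator second-order transient act on a shared abstract state, so $\dot V_a$ will carry cross terms. The crux of the proof is to show that these cross terms either cancel by the structure of the shared pseudo-inverse $\Phi_{xs}^\dagger$ (whose rows are exactly the per-member gains appearing in both laws) or are dominated by the negative-definite diagonal blocks, so that $\dot V_a$ remains negative definite and the combined gain conditions of Theorems \ref{thm:LQR_SI_sys} and \ref{thm:LQR_DI_sys} suffice without further strengthening.
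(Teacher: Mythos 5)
Your proposal follows essentially the same route as the paper's own proof: the paper likewise splits the heterogeneous swarm into its single- and double-integrator sub-swarms with abstract errors $\mathbf{z}_{as}$ and $\mathbf{z}_{ad}$, takes the superposed candidate $V_a = \frac{1}{2}\mathbf{z}_{as}^T\mathbf{z}_{as} + \frac{1}{2}\mathbf{z}_{ad}^T M_1 \mathbf{z}_{ad}$, reuses $V_p$ and the parent gain conditions from Theorem \ref{thm:LQR_SI_sys}, and invokes Theorem \ref{thm:GeneralStability} to obtain the unchanged region of attraction. The cross-coupling between the two error channels that you flag as the crux is precisely the step the paper dispatches by simply asserting that neither sub-candidate has a coupling term (i.e.\ $\dot{V}_{ac} = 0$) without examining the shared sums $S_0,\dots,S_3$ and $C_{ajk}$, so your closing caution is, if anything, more careful than the published argument.
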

\begin{proof}
See Appendix \ref{apx:ProofLQR_noHomo}.
\end{proof}

\begin{theorem}
\label{thm:ARISE_noHomo_sys}
Given the error vector $\mathbf{z} = \begin{bmatrix}e_1 & e_2 & r & e_{\tau} & e_J\end{bmatrix}^T$ where $r = \dot{e}_2 + \alpha_2e_2$, the ARISE control law \eqref{eq:controlLaw} \eqref{eq:controlLaw2} \eqref{eq:controlLaw3}, and \eqref{eq:controlLaw4} on the parent system \eqref{eq:newDynamics} and the single integrator control law \eqref{eq:nodeControlLaw} for single integrator members of the swarm and the double integrator control law \eqref{eq:nodeControlLaw_double} for double integrator members of the swarm asymptotically stabilizes the origin of this system if the gains on the parent controller satisfy the same conditions as stated in Theorem \ref{thm:ARISE_SI_sys} and the controllers for the single and double integrator portions of the swarm satisfy the conditions given in Theorems \ref{thm:ARISE_SI_sys} and \ref{thm:ARISE_DI_sys} respectively.

This system has the same region of attraction as given by Theorem \ref{thm:ARISE_SI_sys}.
\end{theorem}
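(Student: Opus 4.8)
The plan is to treat this theorem, like its homogeneous counterparts, as a direct application of Theorems \ref{thm:GeneralStability} and \ref{thm:GeneralConstrainedStability}, reusing the parent analysis of Theorem \ref{thm:ARISE_SI_sys} essentially verbatim and splicing in a composite abstract-state Lyapunov argument that accommodates both member types at once. First I would form the total Lyapunov candidate $V = V_p + V_a$, where $V_p$ is exactly the ARISE parent function from the proof of Theorem \ref{thm:ARISE_SI_sys} --- unchanged because neither the parent dynamics \eqref{eq:newDynamics} nor the parent control law \eqref{eq:controlLaw}--\eqref{eq:controlLaw4} depend on the swarm composition --- and $V_a$ is an abstract-error function in $\mathbf{e}_a = [e_\tau\ e_J]^T$, with $\dot{\mathbf{e}}_a$ carried as the bounded $\mathbf{y}_a$ component. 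The parent portion contributes a negative-definite $\dot{V}_p^\prime$ in $\mathbf{z}_p = [e_1\ e_2\ r]^T$ together with the same coupling term $\dot{V}_{pc}$ through which $e_\tau$ and $e_J$ enter the parent dynamics; since these are identical to Theorem \ref{thm:ARISE_SI_sys}, the gain conditions on $\alpha_1$, $\alpha_2$, $\beta$, $k_s$, and $\Gamma$ carry over with no modification.

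The substantive work is in $\dot{V}_a$. The abstract derivative \eqref{eq:mapping_d} is a sum of per-member contributions, and the heterogeneous closed loop partitions this sum: the single-integrator members apply \eqref{eq:nodeControlLaw}, enforcing their share of the first-order target \eqref{eq:swarmControlLaw}, while the double-integrator members apply \eqref{eq:nodeControlLaw_double}, enforcing their share of the second-order target \eqref{eq:swarmControlLaw_double}. Because every member builds its command from the same full-swarm pseudo-inverse parameters $S_0,\dots,S_3$, the two families combine through projector-like matrices $P_{SI} = \sum_{i\in SI}\Phi_i\Phi_i^\dagger$ and $P_{DI} = \sum_{i\in DI}\Phi_i\Phi_i^\dagger$ that satisfy $P_{SI} + P_{DI} = \Phi_{xs}\Phi_{xs}^\dagger = I$. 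I would therefore write the closed-loop abstract error dynamics as a blend of a first-order term acting through $P_{SI}$ and a second-order term acting through $P_{DI}$, and choose $V_a$ as the weighted sum of the quadratic forms already used in Theorems \ref{thm:ARISE_SI_sys} and \ref{thm:ARISE_DI_sys}. Under the single-integrator condition $K>0$ and the double-integrator conditions \eqref{eq:DI_cond1}--\eqref{eq:DI_cond4}, each block of $\dot{V}_a$ is negative definite, and the uniform-continuity/Barbalat argument already established for $e_J$ and $\dot{e}_J$ closes the claim that $\dot{\mathbf{e}}_a$ stays bounded and vanishes.

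The main obstacle I anticipate is precisely this mixing: the two member types regulate one shared abstract state but with incompatible dynamic orders, so I cannot simply add two independent stability proofs. I must show that the partition $P_{SI} + P_{DI} = I$ does not let the first-order and second-order contributions interfere destructively --- in particular that the cross terms arising when $\dot{V}_a$ is differentiated are dominated by the diagonal negative-definite terms guaranteed by the per-type gain conditions. Once $\dot{V}_a^\prime$ is established as negative definite in $\mathbf{z}_a = [e_\tau\ e_J]^T$, the remaining steps are mechanical: verify the domination inequality \eqref{eq:inequality} of Theorem \ref{thm:GeneralStability} for the composite $V$, which reduces to checking that the parent--abstract coupling $\dot{V}_{pc} + \dot{V}_{ac}$ is dominated exactly as in Theorem \ref{thm:ARISE_SI_sys} and vanishes at the origin, and then invoke Theorem \ref{thm:GeneralConstrainedStability} to map the constraints through the abstraction. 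Because the abstract error converges under the same gains and the parent analysis is untouched, I expect the region of attraction to coincide with that of Theorem \ref{thm:ARISE_SI_sys}, exactly as claimed, mirroring the parallel conclusion already drawn for the PD heterogeneous case in Theorem \ref{thm:LQR_noHomo_sys}.
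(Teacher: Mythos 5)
Your parent-side setup is correct and matches the paper: $V_p$ from Theorem \ref{thm:ARISE_SI_sys} is reused unchanged since neither \eqref{eq:newDynamics} nor \eqref{eq:controlLaw}--\eqref{eq:controlLaw4} depends on swarm composition. The gap is on the abstract side, at exactly the step you flag and then defer. You keep a single shared abstract error $\mathbf{e}_a$ and write the heterogeneous closed loop as a blend through $P_{SI} + P_{DI} = \Phi_{xs}\Phi_{xs}^\dagger = I$, then assert that with $K>0$ and \eqref{eq:DI_cond1}--\eqref{eq:DI_cond4} ``each block of $\dot{V}_a$ is negative definite'' and that the cross terms are dominated by the diagonal terms. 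That domination is never established, and nothing in the theorem's hypotheses can enforce it: $P_{SI}$ and $P_{DI}$ are state-dependent and time-varying (they are built from the auxiliary parameters $S_0,\dots,S_3$, hence from all positions), they are not orthogonal projectors or contractions individually, and under your formulation the closed-loop abstract dynamics realizes neither \eqref{eq:swarmControlLaw} nor \eqref{eq:swarmControlLaw_double} but a state-dependent interpolation of the two. In particular, differentiating the single-integrator members' velocity contribution to obtain $\ddot{\mathbf{a}}$ generates terms in $\dot{S}_i$ and $\dot{\mathbf{e}}_a$ whose coefficients are not bounded by the per-type gain conditions alone, so a weighted sum of the SI and DI quadratic forms has no guaranteed sign-definite derivative. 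Since the theorem statement supplies no gains controlling this mixing, your route cannot be completed as proposed without additional assumptions; the proof as written announces its key inequality rather than proving it.

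The paper avoids the mixing entirely rather than dominating it. In the proof of Theorem \ref{thm:LQR_noHomo_sys}, which Theorem \ref{thm:ARISE_noHomo_sys} reuses, the swarm is partitioned into a single-integrator part with its own abstract error state $\mathbf{z}_{as}$ and a double-integrator part with $\mathbf{z}_{ad}$, and the candidate is the sum $V_a = \frac{1}{2}\mathbf{z}_{as}^T\mathbf{z}_{as} + \frac{1}{2}\mathbf{z}_{ad}^T M_1 \mathbf{z}_{ad}$ of the homogeneous candidates; each summand's derivative is negative definite by the proofs of the homogeneous theorems, and because neither contributes a coupling term, $\dot{V}_{ac} = 0$ identically, so Theorem \ref{thm:GeneralStability} applies with the unchanged ARISE parent function and yields the region of attraction of Theorem \ref{thm:ARISE_SI_sys}. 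To repair your version you would either have to reformulate so each sub-swarm tracks its own assigned share of $\mathbf{a}_d$ with its own error coordinates (which is in substance the paper's decomposition), or derive explicit bounds on the blend matrices and $\dot{S}_i$ and fold the resulting cross terms into strengthened conditions on $K$, $K_p$, and $K_d$ --- conditions absent from the statement you were asked to prove.
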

\begin{proof}
See Appendix \ref{apx:ProofARISE_noHomo}.
\end{proof}

Note that this control law is independent of the size and composition of the swarm.

\section{Simulation Results}
\label{sec:simExp}
\begin{table}
	\centering
	\caption{Table of physical parameters of simulated system}
	\begin{tabular}{| c | l | c | l |}
		\hline
		Parameter 	& Value 			& Parameter	& Value				\\
		\hline
		$m_{max}$	& $0.75kg$			& $\gamma_1$	& $0.01N$		\\
		$m_{min}$	& $0.25kg$			& $\gamma_2$	& $1000s^2$		\\
		$c_{max}$	& $1.5N \cdot s/m$	& $\gamma_3$	& $700s^2$		\\
		$c_{min}$	& $0.5N \cdot s/m$	& $\gamma_4$	& $0.02N$		\\
		$J$			& $0.5kg \cdot m^2$	& $\gamma_5$	& $1000s^2$		\\
		$L$			& $1m$				& $\gamma_6$	& $1N \dot s$	\\
		\hline
	\end{tabular}
	\label{tab:PhysicalParameters}
\end{table}

\begin{figure}
\centering
\includegraphics[width=6in]{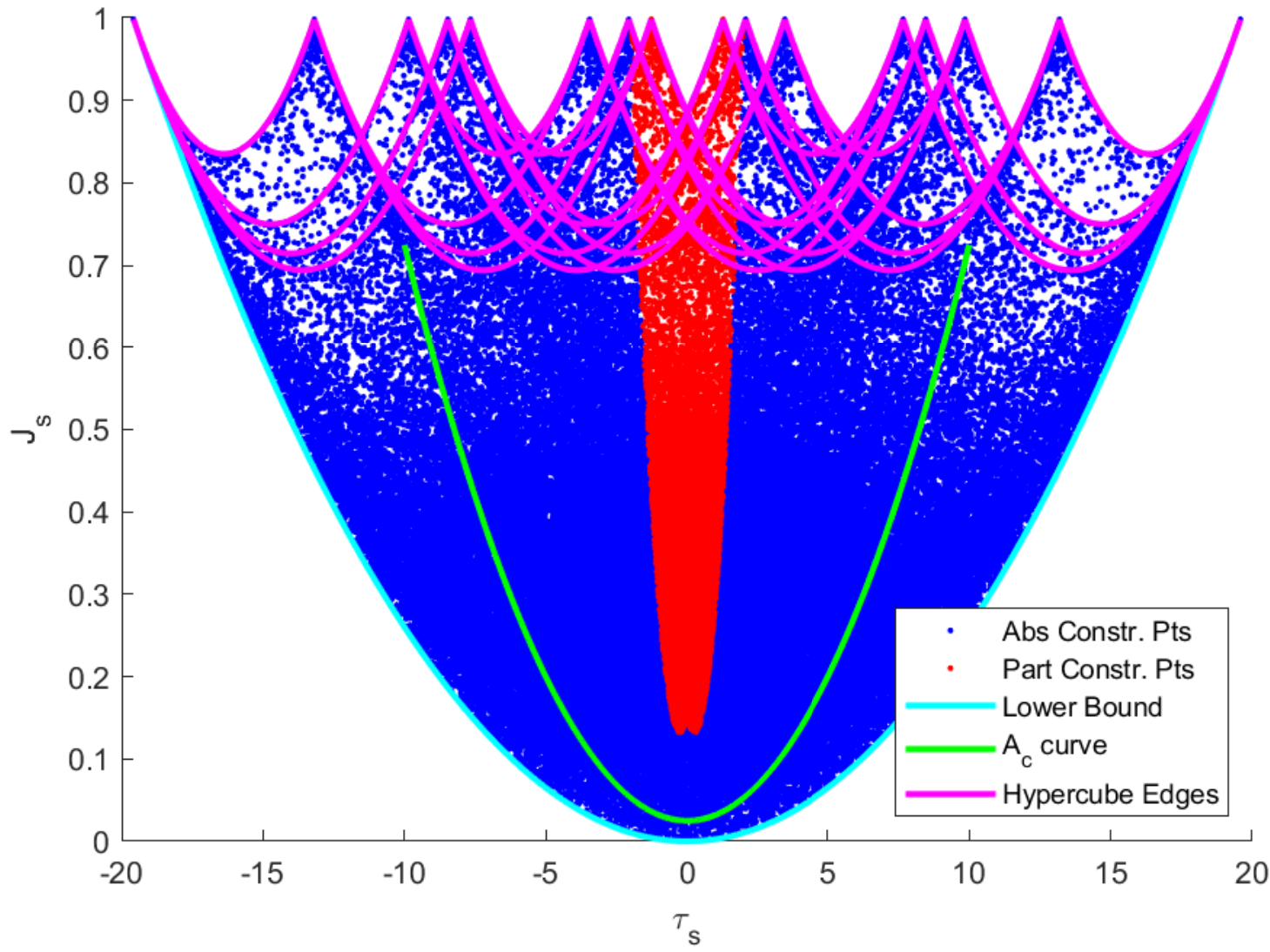}
\caption{Results of mapping the vertices and edges of the 4-dimensional hypercube into the abstract space. The magenta lines are the mappings of the edges of this hypercube.  The cyan line is the lower bound of the abstract domain.  The  dots (absolutely constrained states) and red dots (partially constrained states) are the Monte Carlo mapping used to confirm the absolutely constrained region.  The green curve is the surface $A_c$ on which the desired abstract state is constrained by \eqref{eq:JLimit} and \eqref{eq:tauLimit}.}
\label{fig:posMapping}
\end{figure}

We demonstrate the example controllers developed in the previous section in simulation using multiple different sized swarms. The physical parameters of the system are given in Table \ref{tab:PhysicalParameters}.  The masses of the child systems are uniformly distributed between $m_{min}$ and $m_{max}$.  The linear damping constants of the double integrator child systems are uniformly distributed between $c_{min}$ and $c_{max}$. To compare results, we predetermine the masses and damping coefficients for the 4 child systems used in the following simulations.  These values are $m_1 = 0.3552kg$, $m_2 = 0.3532kg$, $m_3 = 0.6762kg$, $m_4 = 0.4596kg$, $c_1 = 0.7290Ns/m$, $c_2 = 1.4133Ns/m$,  $c_3 = 0.6524Ns/m$, and  $c_4 = 1.3258Ns/m$.  These swarms have a starting position of $\mathbf{p}_0=\begin{bmatrix} 0.125m & -0.125m & 0.125m & -0.125m \end{bmatrix}^T$ with an initial velocity of $0$.

Figure \ref{fig:posMapping} shows the abstract space divided into its absolutely constrained, partially constrained, and unconstrained regions.  It also shows the manifold on which our desired abstract state moves as well as the edges of the hypercube that bounds the swarm state.  The chosen manifold, which lies in the interior of the absolutely constrained space, is given by
\begin{align}
J_{sd} &= 0.0125\tau_{sd}^2 + 0.025 \label{eq:JLimit} \\
\abs{\tau_{sd}} &\leq \tau_{max} \label{eq:tauLimit}
\end{align}

We now apply the various controllers designed in section \ref{sec:exCase} to this system in MATLAB simulation and present the results.

\subsection{PD Controller}
\begin{figure*}
\centering
\includegraphics[width=6.5in,height=3in]{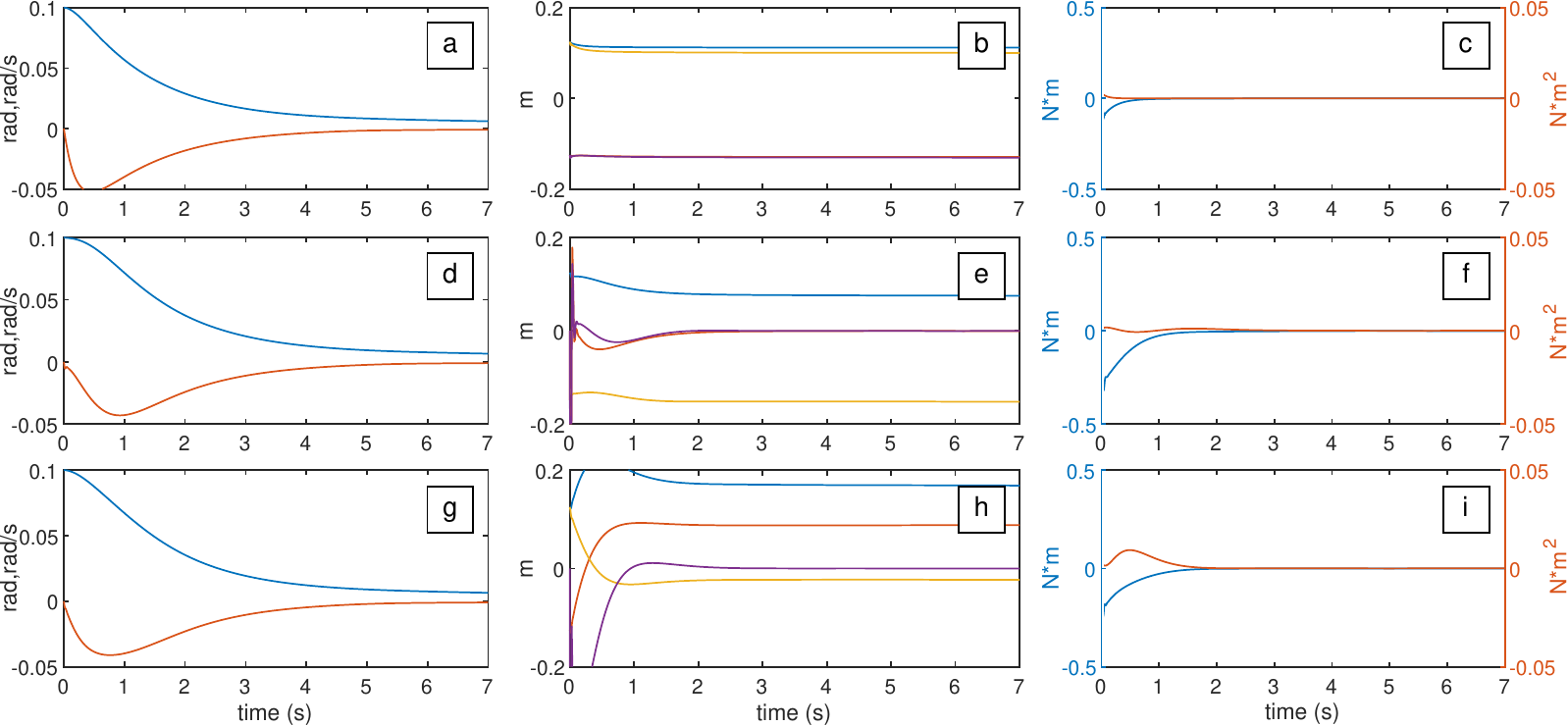}
\caption{Results from the simulations of PD controller.  In the left column is the parent state, consisting of $\theta$ (blue) and $\omega$ (red).  The center column shows the positions of the swarm members, and the right column shows the error in the abstract space consisting of the error in $\tau$ (blue) and the error in $J_s$ (red).  The top row is the simulation of the single integrator swarm, the center row is the simulation of the double integrator swarm, and the bottom row is the simulation of the heterogeneous swarm.}
\label{fig:PD_res}
\end{figure*}

We implement the PD parent controller on the three different swarms using $K=10I$, $K_p=10I$, $K_d=5I$, $k_{sd}=1$, and $\tau_{max}=5$.  These are LQR gains calculated using $Q = diag\left\lbrace 10,\ 1\right\rbrace$ and $R=1$.  Each swarm consists of 4 robots with the mass, damping coefficients, and initial condition listed previously.  The initial angle of the plane is $0.1rad$ with $0$ velocity.  The controller gain for the parent system is calculated to be $K_{pd}=\begin{bmatrix} 3.1623 & 3.2859\end{bmatrix}$. The stability of the single integrator swarm can be analyzed using Theorem \ref{thm:LQR_SI_sys}.  Gain $k_1$ is positive, and $K$ is a positive definite, symmetric matrix.  With $\theta_{max} = 0.2$, we can calculate the region of attraction to be where $\norm{\mathbf{z}}^2 < 0.0326$.  We then consider Figure \ref{fig:posMapping} and determine acceptable constraints on the system to be $\abs{e_J} \leq 0.03$ and $\abs{e_{\tau}} < 2$.  We find that the constrained region of attraction is $\norm{\mathbf{z}}^2 \leq 0.0326$, and so the initial errors $\mathbf{z}_0 = \begin{bmatrix} 0.1 & 0 & 0.0273 & 0.0288 \end{bmatrix}$ fall within this domain.  We can calculate $\dot{J}_{max}=1.9059$ and show that $k_2 > \frac{\dot{J}_{max}}{2\cos\left(\theta_{max}\right)}$, and thus all the conditions for stability are satisfied.  Theorem \ref{thm:LQR_DI_sys} can be used to show that the double integrator swarms has the same region of attraction, and thus the heterogeneous swarm does too.

Figure \ref{fig:PD_res} shows the results of the simulations of the PD parent controller with all three swarm types.  The left column of plots show the parent state, the center column shows the positions of all the child systems in the swarm, and the right column shows the errors in the abstract state.  All simulations have the same initial conditions. The top row of plots is the results from the single integrator system, the middle row is the double integrator system, and the bottom row is the heterogeneous system.  The second-order system has a clearly second-order response with some overshoot in the swarm positions due to the poles of the linearization of the double integrator system being complex. In the heterogeneous system, there is some oscillatory behavior in the swarm positions as well but not as much as in the swarm consisting of only double integrators.  However, the responses of the parent system for all three swarm types are very similar.

\subsection{ARISE Controller}
\begin{figure*}
\centering
\includegraphics[width=6.5in,height=3in]{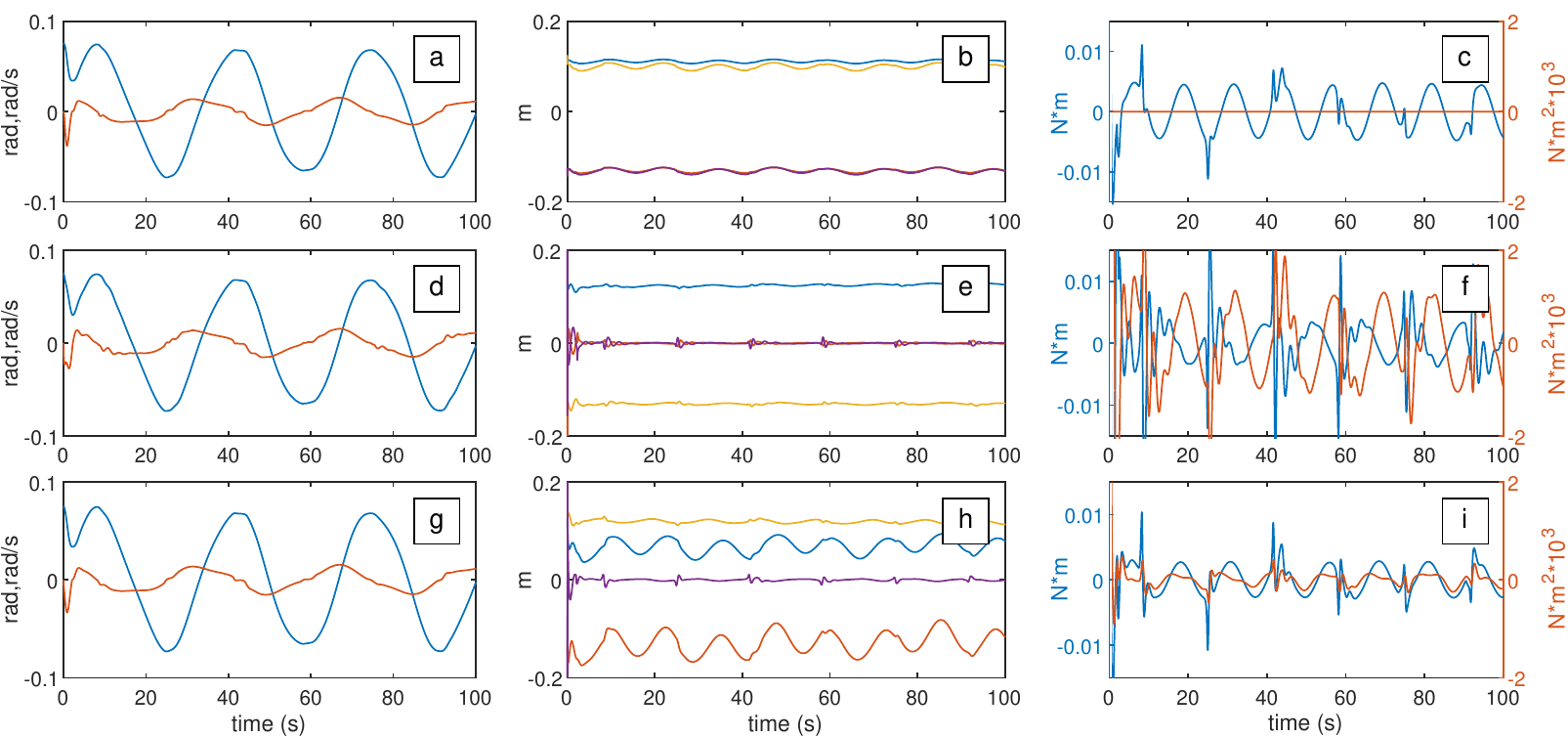}
\caption{Results from the simulations of ARISE controller.  These plots are arranged in the same way as Figure \ref{fig:PD_res}.}
\label{fig:ARISE_res}
\end{figure*}

Next, we apply the ARISE controller to parent system using the three different swarms.  The ARISE controller is implemented with $k_s=1$, $\alpha_1=1$, $\alpha_2=2$, $\beta=0.5$, and $\Gamma=diag\left(10, 1, 1, 10\right)$, while the swarm system controllers use the same gains as the PD parent controller case.  For all of these experiments, the ARISE controller attempts to track a reference trajectory $\theta_d\left(t\right) = 0.7\sin\left(0.015\pi t\right)$ in the presence of a sinusoidal disturbance $\tau_d\left(t\right) = 0.1\sin\left(0.5t\right)$.  The initial state of the plane is $0.075rad$ with 0 velocity. For the single integrator system, Theorem \ref{thm:ARISE_SI_sys} can be used to determine that the constrained region of attraction is where $\norm{\mathbf{z}} \leq 0.0394$.  The initial states are $\mathbf{z}_{p0} = \begin{bmatrix} -0.075 & -0.0618 & -0.1104\end{bmatrix}^T$, $\mathbf{y}_{p0} = \begin{bmatrix} 0.0434 & 0.001 & 0.01 & 0.1 \end{bmatrix}^T$, and $\mathbf{z}_{a0} = \begin{bmatrix} 0.0273 & 0.0288 \end{bmatrix}^T$. These initial conditions fall within this region of attraction; thus, the error system converges to $0$ so long as $c_{max}$ remains less than $k_s$.  Theorem \ref{thm:ARISE_DI_sys} can be used to show that the double integrator swarm has the same region of attraction, and thus, so does the heterogeneous swarm.

Figure \ref{fig:ARISE_res} shows the result for the ARISE controller on all three swarms.  This figure is organized in the same manner as Figure \ref{fig:PD_res}.  The double integrator swarm, like with the PD controller, has more oscillation than the single integrator.  However, unlike the PD controller, the heterogeneous system has greater oscillations.  There is also more error present in the abstract state due to the sinusoidal disturbance.

\begin{figure*}
\centering
\includegraphics[width=6.5in,height=1in]{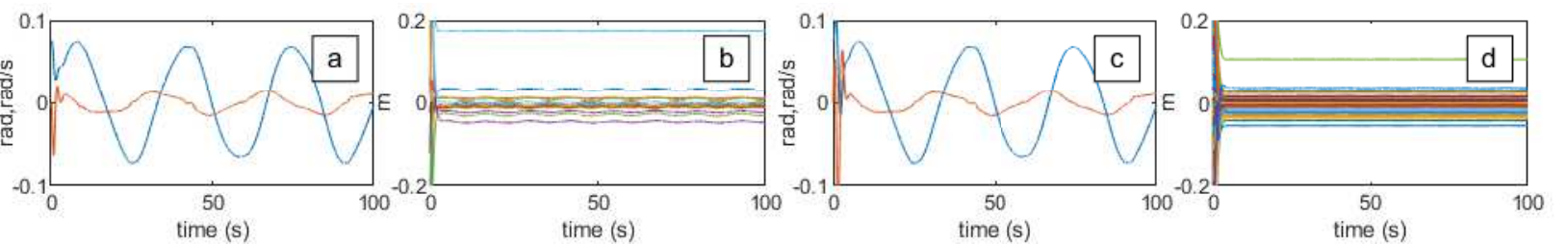}
\caption{Results from the simulations of ARISE controller with larger, heterogeneous swarms.  Plots a and b show, respectively, the parent state and the swarm positions for a swarm of 10 single integrators and 10 double integrators.  Plots c and d show the same for a swarm of 100 single integrators and 100 double integrators.}
\label{fig:ARISE_lg_swarm}
\end{figure*}

In addition to allowing different controllers to be used for both the parent and swarm systems, the abstraction is also independent of the size of the swarm.  Figure \ref{fig:ARISE_lg_swarm} shows the results for the ARISE controller on heterogeneous swarms of 20 and 200 robots.  The parent system response in these plots is very similar to the responses in the other example of the ARISE controller, the only difference being the way the swarm moves and some transients that get amplified at the beginning.  Note that the larger the swarm is, the tighter its final formation due to the controller driving the swarm to the same desired moment of inertia with more total mass.  Also due to the larger mass, less movement is needed within the swarm to effect the same gravity torque, resulting in less swarm motion.

\section{Conclusion}
\label{sec:conclusion}

We propose a new approach for designing controllers for a class of systems where a swarm of subsystems acts upon a larger parent system with its own dynamics.  We condense the interactions between the parent system into an abstract state of the swarm, which allows controllers for the parent system to be designed independently of the swarm.  We also consider how constraints on the swarm states map to constraints in the abstract space.  We then develop stability proofs with and without state constraints based on the abstract state rather than the full state of the swarm.

To validate these results, we present an example case of a passive, dynamic parent system being manipulated by a large number of child subsystems.  We demonstrate the modularity of this approach by using several different controllers for the parent and swarm systems in combination.  We also demonstrate with these simulations that the size and composition of the swarm do not significantly impact the performance of the controller on the parent system, and that the controllers for the child systems can used independent of the size of the swarm.  This architecture can be further extended to a series or hierarchy of cascaded systems with abstractions modeling the coupling between layers.

\appendices
%\section{Proof of the First Zonklar Equation}
%Appendix one text goes here.

% you can choose not to have a title for an appendix
% if you want by leaving the argument blank
%\section{}
%Appendix two text goes here.
\section{Proof of Theorem \ref{thm:LQR_SI_sys}}
\label{apx:ProofLQR_SI}
\begin{proof}
Let the desired abstract state lie on a continuous manifold $J_{sd}\left(\tau_{d}\right)$.  Let $\abs{\tau_d} \leq \tau_{max}$, and let $J_{sd,min} \leq J_{sd}(\tau_d) \leq J_{sd,max}$, where $J_{sd}\left(0\right) = J_{sd,min}$.

The equation of motion \eqref{eq:newDynamics} is linearized about the origin, including neglecting the nonlinear friction terms.  This results in the following linear system:
\begin{equation}
\dot{\mathbf{x}_p} = \begin{bmatrix} \dot{\theta} \\ \ddot{\theta}\end{bmatrix} = \begin{bmatrix} 0 & 1 \\ 0 & \frac{\gamma_6}{J + J_{s,min}}\end{bmatrix}\begin{bmatrix}\theta \\ \dot{\theta}\end{bmatrix} + \begin{bmatrix}0 \\ \frac{-1}{J + J_{s,min}}\end{bmatrix}\tau_{sd} = A\mathbf{x}_p + B\mathbf{u} 
\end{equation}
where $\mathbf{x}_p = \begin{bmatrix}\theta & \dot{\theta}\end{bmatrix}^T$ and $\mathbf{u} = \tau_s$.  We can find the controllability matrix of this system:
\begin{equation}
\mathcal{C} = \begin{bmatrix} B & AB \end{bmatrix} = \begin{bmatrix} 0 & \frac{-1}{J + J_{s,min}} \\ \frac{-\gamma_6}{\left(J + J_{s,min}\right)^2} & 0 \end{bmatrix}
\end{equation}
The controllability matrix $\mathcal{C}$ is full rank; thus, this system is controllable.  We can then compute the control law \eqref{eq:ControlLawPD} using an LQR method or pole placement, for example.

Let $V_p\left(\mathbf{z}_p,t\right)$ be a Lyapunov candidate for this system, where $\mathbf{z}_p = \begin{bmatrix}\theta & \dot{\theta}\end{bmatrix}^T$:
\begin{equation}
V_p\left(\mathbf{z}_p,t\right) = \frac{1}{2}\theta^2 + \frac{J + J_s\left(t\right)}{2}\dot{\theta}^2
\end{equation}
This function is positive definite with $\eta_1\norm{\mathbf{z}_p}^2 \leq V_p\left(\mathbf{z}_p,t\right) \leq \eta_2\norm{\mathbf{z}_p}^2$, where $\eta_1 = \min\left\lbrace\frac{1}{2}, \frac{J+J_{s,min}}{2}\right\rbrace$, $\eta_2 = \max\left\lbrace\frac{1}{2}, \frac{J + J_{s, max}}{2}\right\rbrace$. Thus $V_p$ is positive definite per Theorem 4.8 in \cite{Khalil2002}.

Taking the time derivative of $V_p$ and substituting \eqref{eq:newDynamics} and \eqref{eq:ControlLawPD} yields
\begin{equation}
\begin{split}
\begin{aligned}
\dot{V_p} &= \theta\dot{\theta} + \dot{\theta}\left(J + J_s\right)\ddot{\theta} + \frac{\dot{J}_s}{2}\dot{\theta}^2 \\
&=\theta\dot{\theta} + \dot{\theta}\left(-cos\left(\theta\right)\left(k_1\theta + k_2\dot{\theta}\right) - \dot{J}_s\dot{\theta} - f_f\left(\dot{\theta}\right)\right) + \frac{\dot{J}_s}{2}\dot{\theta}^2 \\
&=-\left(\cos\left(\theta\right)k_1 - 1\right)\theta\dot{\theta} - \left(\cos\left(\theta\right)k_2 + \frac{\dot{J}_s}{2}\right)\dot{\theta}^2 - \dot{\theta}f_f\left(\dot{\theta}\right) \\
&= -\mathbf{z}_p^T\begin{bmatrix}
0 & -1 \\ \cos\left(\theta\right)k_1 & \cos\left(\theta\right)k_2 + \frac{\dot{J}_s}{2}
\end{bmatrix}
\mathbf{z}_p - \dot{\theta}f_f\left(\dot{\theta}\right)
\end{aligned}
\end{split}
\end{equation}

We assume that $\abs{\theta} \leq \theta_{max} < \frac{\pi}{2}$. By Lemma \ref{lm:J_dot_is_bounded} in Appendix \ref{apx:lemmas}, $\abs{\dot{J}_s}$ is bounded above; thus, define domain $D_{Jmax}$ on which $\abs{\dot{J}_s} < \dot{J}_{max}$.  Therefore,
\begin{equation}
\dot{V_p} \leq -\mathbf{z}_p^T\begin{bmatrix}0 & -1 \\ \cos\left(\theta_{max}\right) k_1 & \cos\left(\theta_{max}\right)k_2 - \frac{\dot{J}_{max}}{2}\end{bmatrix}\mathbf{z}_p \\
- \dot{\theta}f_f\left(\dot{\theta}\right) \equiv  -\mathbf{z}_p^T M \mathbf{z}_p - \dot{\theta}f_f\left(\dot{\theta}\right)
\end{equation}
The term $\dot{\theta}f_f\left(\dot{\theta}\right)$ is positive definite since if $\dot{\theta} < 0$, then $f_f\left(\dot{\theta}\right) < 0$ and if $\dot{\theta} > 0$, then $f_f\left(\dot{\theta}\right) > 0$.

The $\mathbf{z}_p^T M \mathbf{z}_p$ term can be bounded by
\begin{equation}
\lambda_{min}\norm{\mathbf{z}_p}^2 \leq \mathbf{z}_p^T M \mathbf{z}_p \leq \lambda_{max}\norm{\mathbf{z}_p}^2
\end{equation}
where $\lambda_{min}$ and $\lambda_{max}$ are the minimum and maximum eigenvalues of $M$, respectively.  If these eigenvalues are both positive, then this term is positive definite.  We find the characteristic equation of the matrix M:
\begin{equation}
\lambda^2 - \left(\cos\left(\theta_{max}\right)k_2 - \frac{\dot{J}_{max}}{2}\right)\lambda + \cos\left(\theta_{max}\right)k_1 = 0
\end{equation}
using Routh-Hurwitz Stability Criterion \cite{Franklin2010}, the following need to be true for the eigenvalues to be positive:
\begin{align}
\cos\left(\theta_{max}\right)k_2 - \frac{\dot{J}_{max}}{2} &> 0 \\
\cos\left(\theta_{max}\right)k_1 &> 0
\end{align}
Therefore, if $k_1 > 0$ and $k_2 > \frac{\dot{J}_{max}}{2\cos\left(\theta_{max}\right)}$, then $\dot{V}_p$ is negative definite on the domain
\begin{equation}
D = \left\lbrace\mathbf{z} \in D_{Jmax}\ \vert\ \abs{\theta} \leq \theta_{max}\right\rbrace
\end{equation}

We now consider the swarm system.  We propose a quadratic Lyapunov function
\begin{equation}
V_a = \frac{1}{2}\mathbf{z}_a^T\mathbf{z}_a
\end{equation}
where
\begin{equation}
\mathbf{z}_a = \begin{bmatrix} e_\tau & e_J \end{bmatrix}^T
\end{equation}
This function is clearly positive definite.  We then take its time derivative to arrive at
\begin{equation}
\begin{split}
\dot{V}_a &= \dot{\mathbf{z}}_a^T\mathbf{z}_a \\
&= \left(\dot{\mathbf{a}}_d - \left(K\mathbf{z}_a + \dot{\mathbf{a}}_d\right)\right)^T\mathbf{z}_a \\
&= -\mathbf{z}_a^T K^T \mathbf{z}_a \\
&= \dot{V}_a^\prime
\end{split}
\end{equation}
Therefore, if $K$ is positive definite, then $\dot{V}_a^\prime$ is negative definite on $D_a = \mathbb{R}^2$.

Define $V = V_p + V_a$.  By Theorem \ref{thm:GeneralStability}, the domain on which $\dot{V}$ is negative definite is
\begin{equation}
D = \left\lbrace\mathbf{z} \in D_{Jmax} \ \vert\ \abs{\theta} \leq \theta_{max} \right\rbrace
\end{equation}
The largest semilevel set that fits in this domain is
\begin{equation}
S_l = \left\lbrace\mathbf{z} \in \mathbb{R}^4\ \vert\ V\left(\mathbf{z}\right) \leq \frac{\theta_{max}^2}{2}\right\rbrace
\end{equation}
For simplicity, we can find the largest ball that fits into this semilevel set:
\begin{equation}
S = \left\lbrace\mathbf{z} \in \mathbb{R}^4\ \vert\ \norm{\mathbf{z}} \leq \frac{\theta_{max}^2}{2\eta}\right\rbrace
\end{equation}
where $\eta = max\left\lbrace \frac{1}{2},  \frac{J + J_{s, max}}{2} \right\rbrace$.  $S$ is a subset of the region of attraction of this system.  We can use this bound on $\norm{\mathbf{z}}$ to find $\dot{J}_{max}$ and thus the range of $k_2$ required to stabilize the origin of this system.
\end{proof}

\section{Proof of Theorem \ref{thm:ARISE_SI_sys}}
\label{apx:ProofARISE_SI}
\begin{proof}
We define a new error term $r$:
\begin{equation}
r = \dot{e}_2 + \alpha_2e_2
\end{equation}
My multiplying $r$ by the inertia term, we arrive at
\begin{equation}
\begin{split}
\left(J + J_s\right)r &= \left(J + J_s\right)\left(\ddot{e}_1 + \alpha_1\dot{e}_1 + \alpha_2 e_2\right) \\
&= \left(J + J_s\right)\left(\ddot{\theta}_d + \alpha_1\dot{e}_1 + \alpha_2 e_2\right) - \left(J + J_s\right)\ddot{\theta} \\
&= Y_d\lambda + W + cos\left(\theta\right)\tau_s + \tau_d \label{eq:Jr}
\end{split}
\end{equation}
where
\begin{equation}
W = \left(J + J_s\right)\left(\ddot{\theta}_d + \alpha_1\dot{e}_1 + \alpha_2 e_2\right) + \dot{J}_s\dot{\theta} + f_f\left(\dot{\theta}\right) - Y_d\lambda
\label{eqn:aux_W}
\end{equation}
We now rewrite the control laws \eqref{eq:controlLaw} and \eqref{eq:controlLaw3} in terms of $r$:
\begin{equation}
\tau_{sd} = - \sec\left(\theta\right)\left(Y_d\hat{\lambda} + \mu\right)
\label{eq:TauR}
\end{equation}
\begin{equation}
\dot{\hat{\lambda}} = \Gamma\dot{Y}_d^Tr
\label{eq:Lambda}
\end{equation}
where
\begin{equation}
\mu\left(t_0\right) = 0
\end{equation}
\begin{equation}
\hat{\lambda}\left(t_0\right) = \hat{\lambda}_0
\end{equation}
\begin{equation}
\dot{\mu} = \left(ks + 1\right)r + \beta sgn\left(e_2\right)
\label{eq:proof2eq1_mu_dot}
\end{equation}
and substitute \eqref{eq:proof2eq1_mu_dot}, \eqref{eq:abstractErrorDef}, and \eqref{eq:TauR} into \eqref{eq:Jr}:
\begin{equation}
\begin{split}
(J + J_s)r &= Y_d\lambda + W - Y_d\hat{\lambda} - \mu - \cos\left(\theta\right)e_\tau + \tau_d \\
&= Y_d\tilde{\lambda} + W - \mu - \cos\left(\theta\right)e_\tau + \dot{\tau}_d \label{eq:Jr2}
\end{split}
\end{equation}
where
\begin{equation}
\tilde{\lambda} = \lambda - \hat{\lambda}
\end{equation}

We next take the time derivative of \eqref{eq:Jr2}:
\begin{equation}
\left(J + J_s\right)\dot{r} = -\dot{J}_sr + \dot{Y}_d\tilde{\lambda} - Y_d\dot{\hat{\lambda}} + \dot{W} - \dot{\mu} + \dot{\theta}\sin\left(\theta\right)e_\tau - \cos\left(\theta\right)\dot{e}_\tau + \dot{\tau}_d
\end{equation}
and substitute \eqref{eq:TauR} and \eqref{eq:Lambda}:

\begin{equation}
\begin{split}
\left(J + J_s\right)\dot{r} &= -\dot{J}_sr + \dot{Y}_d\tilde{\lambda} - Y_d\Gamma\dot{Y}_d^Tr + \dot{W} - \left(k_s + 1\right)r - \beta sgn\left(e_2\right) + \dot{\theta}\sin\left(\theta\right) - \cos\left(\theta\right)\dot{e}_\tau + \dot{\tau}_d \\
&= -\frac{1}{2}\dot{J}_sr + \dot{Y}_d\tilde{\lambda} + N + E - \left(k_s + 1\right)r - \beta sgn\left(e_2\right) - e_2
\end{split}
\end{equation}
where
\begin{equation}
N = -Y_d\Gamma\dot{Y}_d^Tr + \dot{W} - \frac{1}{2}\dot{J}_sr + e_2 + \tau_d
\end{equation}
\begin{equation}
E = \dot{\theta}\sin\left(\theta\right)e_\tau - \cos\left(\theta\right)\dot{e}_\tau
\end{equation} 

Define
\begin{equation}
\tilde{N} = N - N_d
\end{equation}
\begin{equation}
\tilde{E} = E - E_d
\end{equation}
Then, by Lemmas \ref{lm:NisBounded} and \ref{lm:EisBounded},
\begin{equation}
\abs{\tilde{N}} \leq c_{max}\norm{\mathbf{z_p}}
\end{equation}
\begin{equation}
\abs{\tilde{E}} \leq \rho_{E} \left(\norm{\mathbf{z}_p}\right)\norm{\mathbf{z}_a}
\end{equation}
where
\begin{equation}
\mathbf{z}_p = \begin{bmatrix}e_1 & e_2 & r\end{bmatrix}^T
\end{equation}
\begin{equation}
\mathbf{z}_a = \begin{bmatrix} e_\tau & e_J \end{bmatrix}^T
\end{equation}

The last auxiliary functions we need are
\begin{equation}
P\left(t\right) = \beta\abs{e_2\left(t_0\right)} - e_2\left(t_0\right)\left(N_d\left(t_0\right) + E_d\left(t_0\right)\right) - \displaystyle\int_{t_0}^tL\left(\sigma\right)d\sigma
\end{equation}
where
\begin{equation}
L\left(t\right) = r\left(\left(N_d\left(t\right) + E_d\left(t\right)\right) - \beta sgn\left(e_2\left(t\right)\right)\right)
\end{equation}

We now define a new vector $\mathbf{y_p}$ that contains terms that are bounded but do not necessarily converge to zero, such as error in the adaptive terms and integral terms:
\begin{equation}
\mathbf{y_p} = \begin{bmatrix}\tilde{\lambda}^T & \sqrt{P}\end{bmatrix}^T
\end{equation}
and a Lyapunov candidate
\begin{equation}
V_p = e_1^2 + \frac{1}{2}e_2^2 + \frac{1}{2}\left(J + J_s\right)r^2 + P + \frac{1}{2}\tilde{\lambda}^T\Gamma^{-1}\tilde{\lambda}
\label{eq:V_PAppB}
\end{equation}
By Lemma \ref{lm:PIsPosDef}, $\int_{t_0}^{t}L\left(\sigma\right)d\sigma \leq \beta\abs{e_2\left(t_0\right)} - e_d\left(t_0\right)\left(N_d\left(t_0\right) + E_d\left(t_0\right)\right)$.  Thus we can show that $P \geq 0$, and so $\sqrt{P} \in \mathbb{R}$ and
\begin{equation}
\eta_1\norm{\begin{bmatrix}\mathbf{z}_p^T & \mathbf{y}_p^T\end{bmatrix}^T} \leq V_p \leq\eta_2\norm{\begin{bmatrix}\mathbf{z}_p^T & \mathbf{y}_p^T\end{bmatrix}^T}
\end{equation}
where
\begin{equation}
\eta_1 = \frac{1}{2}\min\left\lbrace 1, J + J_{s,min}, \lambda_{min}\left(\Gamma^{-1}\right)\right\rbrace
\end{equation}
\begin{equation}
\eta_2 = \frac{1}{2}\max\left\lbrace 2, J + J_{s,max}, \lambda_{max}\left(\Gamma^{-1}\right)\right\rbrace
\end{equation}
We can prove this by writing $V_p = \begin{bmatrix} \mathbf{z}_p^T & \mathbf{y}_p^T \end{bmatrix}Q\begin{bmatrix} \mathbf{z}_p^T & \mathbf{y}_p^T \end{bmatrix}^T$, where $Q$ is a block diagonal matrix in which each of the terms in \eqref{eq:V_PAppB} contributes a block.  All of the blocks are 1-dimensional except for $\Gamma$.  $\eta_1$ and $\eta_2$ are the minimum and maximum eigenvalues of $Q$, which are positive if $\Gamma$ is positive definite; thus our Lyapunov candidate is positive definite.  Note also that Lemma \ref{lm:PIsPosDef} gives the lower bound on $\beta$.

We then take the time derivative and substitute the control laws \eqref{eq:controlLaw} and \eqref{eq:controlLaw3} to show that $\dot{V}_p$ is negative semi-definite:
\begin{equation}
\begin{split}
\dot{V_p} &= 2e_1\dot{e}_1 + e_2\dot{e}_2 + \left(J + J_s\right)\dot{r}r + \frac{1}{2}\dot{J}_sr^2 + \dot{P} - \tilde{\lambda}\Gamma^{-1}\dot{\hat{\lambda}} \\
&= 2e_1\left(e_2 - \alpha_1e_1\right) + e_2\left(r - \alpha_2e_2\right) + -\frac{1}{2}\dot{J}_sr^2 + \dot{Y}_d\tilde{\lambda}r + \left(N + E\right)r - \left(k_s + 1\right)r^2 \\
&\phantom{=} - \beta sgn\left(e_2\right)r - e_2r + \frac{1}{2}\dot{J}_sr^2 - L - \tilde{\lambda}^T\Gamma^{-1}\Gamma\dot{Y}_d^Tr \\
&= - 2\alpha_1e_1^2 - \alpha_2e_2^2 - \left(k_s + 1\right)r^2 + 2e_1e_2 + \left(\tilde{N} + \tilde{E}\right)r + \left(N_d + E_d\right)r - \beta sgn\left(e_2\right)r \\
&\phantom{=} - r\left(N_d + E_d - \beta sgn\left(e_2\right)\right)
\\
&= - 2\alpha_1e_1^2 - \alpha_2e_2^2 - \left(k_s + 1\right)r^2 + 2e_1e_2 + \left(\tilde{N} + \tilde{E}\right)r \\
&\leq -\left(2\alpha_1 - 1\right)e_1^2 - \left(\alpha_2 - 1\right)e_2^2 - \left(k_s + 1\right)r^2 + \left(\tilde{N} + \tilde{E}\right)r \\
&\leq -\eta_3\norm{\mathbf{z}_p}^2 - \left(k_sr^2 - c_{max}\norm{\mathbf{z}_p}\abs{r}\right) + \rho_{E}\left(\norm{\mathbf{z}_p}\right)\norm{\mathbf{z}_a}\norm{\mathbf{z}_p}
\label{eq:proof1}
\end{split}
\end{equation}
where $\eta_3 = min\left\lbrace 2\alpha_1 - 1, \alpha_2 - 1, 1 \right\rbrace$. Thus $\alpha_1 > \frac{1}{2}$ and $\alpha_2 > 1$ make $\eta_3$ positive and thus make $\dot{V}_p^\prime$ negative definite.  We break this into its uncoupled and coupled terms per Theorem \ref{thm:GeneralStability}:
\begin{align}
{V}_p^\prime &= -\eta_3\norm{\mathbf{z}_p}^2 - \left(k_sr^2 - c_{max}\norm{\mathbf{z}_p}\abs{r}\right) \\
\dot{V}_{cp} &= \rho_{E}\left(\norm{\mathbf{z}_p}\right)\norm{\mathbf{z}_a}\norm{\mathbf{z}_p}
\end{align}

We complete the square on \eqref{eq:proof1} to get
\begin{equation}
\begin{split}
\begin{aligned}
\dot{V}_p^\prime &\leq -\eta_3\norm{\mathbf{z}_p}^2 - \left(\left(\sqrt{k_s}\abs{r} - \frac{c_{max}\norm{\mathbf{z}_p}}{2\sqrt{k_s}}\right)^2 - \frac{c_{max}^2\norm{\mathbf{z}_p}^2}{4k_s}\right) \\
&\leq -\eta_3\norm{\mathbf{z}_p}^2 + \frac{c_{max}^2\norm{\mathbf{z}_p}^2}{4k_s} - \left(\sqrt{k_s}\abs{r} - \frac{c_{max}\norm{\mathbf{z}_p}}{2\sqrt{k_s}}\right)^2 \\
&\leq -\left(\eta_3 - \frac{c_{max}}{4k_s}\right)\norm{\mathbf{z}_p}^2
\end{aligned}
\end{split}
\end{equation}
which is negative definite if $k_s > \frac{c_{max}}{4\eta_3}$ on $D_p = \mathbb{R}^3$.

We use the same Lyapunov function for the swarm as Theorem \ref{thm:LQR_SI_sys}:
\begin{equation}
V_a = \frac{1}{2}\mathbf{z}_a^T\mathbf{z}_a
\end{equation}

With these two Lyapunov functions, $V_p$ and $V_a$, we can say that by Theorem \ref{thm:GeneralStability}, $\mathbf{z}_p$ and $\mathbf{z}_a$ asymptotically converge to the origin and $\mathbf{y}_p$ is bounded, so long as
\begin{equation}
\left(\eta_3 - \frac{c_{max}}{4k_s}\right)\norm{\mathbf{z}_p}^2 + \mathbf{z}_a^T K^T \mathbf{z}_a > \rho_{E}\left(\norm{\mathbf{z}_p}\right)\norm{\mathbf{z}_p}\norm{\mathbf{z}_a} \label{eq:condition1}
\end{equation}

Let $\mathbf{z}^{\prime} = \begin{bmatrix} \mathbf{z}_p^T & \mathbf{z}_a^T \end{bmatrix}^T$.  We specify a domain $D$ on which \eqref{eq:condition1} is satisfied:
\begin{equation}
D = \left\lbrace \mathbf{z} \in D_p \times D_a \mid  \left(\eta_3 - \frac{c_{max}}{4k_s}\right)\norm{\mathbf{z}_p}^2 + \mathbf{z}_a^T K^T \mathbf{z}_a > \rho_{E}\left(\norm{\mathbf{z}_p}\right)\norm{\mathbf{z}_p}\norm{\mathbf{z}_a} \right\rbrace
\end{equation}
By Theorem \ref{thm:GeneralStability}, we can find a region of attraction $S$ in which the initial state must be for the system to converge.

We can see that
\begin{equation}
\left(\eta_3 - \frac{c_{max}}{4k_s}\right)\norm{\mathbf{z}_p}^2 + \mathbf{z}_a^T K^T \mathbf{z}_a \geq \left(\eta_3 - \frac{c_{max}}{4k_s}\right)\norm{\mathbf{z}}^2 + \lambda_{min}\norm{\mathbf{z}}
\end{equation}
and
\begin{equation}
\rho_{E}\left(\norm{\mathbf{z}_p}\right)\norm{\mathbf{z}_p}\norm{\mathbf{z}_a} \leq \frac{1}{2}\rho_E\left(\norm{\mathbf{z}_p}\right)\norm{\mathbf{z}}^2
\end{equation}
where $\lambda_{min}$ in the minimum eigenvalue of $K$. Thus we can find the largest ball $B$ within the domain $D$:
\begin{equation}
B = \left\lbrace \mathbf{z} \in D\ \vert\ \norm{\mathbf{z}} \leq \rho_E^{-1}\left(2\eta_3 - \frac{c_{max}}{2k_s} + 2\lambda_{min}\right) \right\rbrace
\end{equation}
The boundary of this ball is much easier to compute than the boundary of $D$; thus, we can use its boundary to find a semilevel set $S$ that is contained within the region of attraction of this system:
\begin{equation}
S_l = \left\lbrace\mathbf{z} \in B,\ \mathbf{y}_p \in \mathbb{R}^2\ \vert\  V_p\left(\mathbf{z}, \mathbf{y}_p\right) + V_a\left(\mathbf{z}\right) \leq \eta_1\rho^2\right\rbrace
\end{equation}
where $\rho = \rho_E^{-1}\left(2\eta_3 - \frac{c_{max}}{2k_s} + 2\lambda_{min}\right)$.

Like in the proof for Theorem \ref{thm:LQR_SI_sys}, we can find the largest ball within this set to simplify our calculations later:
\begin{equation}
S = \left\lbrace \mathbf{z} \in B,\ \mathbf{y}_p \in \mathbb{R}^2\ \vert\  \norm{\begin{bmatrix} \mathbf{z}^T & \mathbf{y}_p^T \end{bmatrix}^T} \leq \frac{\eta_1}{\eta_2}\rho^2 \right\rbrace
\end{equation}
Thus, $S$ is a subset of the region of attraction of this system.
\end{proof}

\section{Proof of Theorem \ref{thm:LQR_DI_sys}}
\label{apx:ProofLQR_DI}
\begin{proof}
First we propose a Lyapunov candidate for the swarm system:
\begin{equation}
V_a = \frac{1}{2}\mathbf{z}_a^T \begin{bmatrix} K_p & \epsilon I \\ \epsilon I & I \end{bmatrix} \mathbf{z}_a \equiv \frac{1}{2}\mathbf{z}_a^T M_1 \mathbf{z}_a
\end{equation}
where $\mathbf{z}_a = \begin{bmatrix}\mathbf{e}_a^T & \dot{\mathbf{e}}_a^T\end{bmatrix}^T$.

We can say that $\lambda_{min}\norm{\mathbf{z}_a}^2 \leq V_a \leq \lambda_{max}\norm{\mathbf{z}_a}^2$ where $\lambda_{min}$ and $\lambda_{max}$ are the minimum and maximum eigenvalues of $M_1$, respectively.  Therefore, this function is positive definite if all the eigenvalues of $M_1$ are positive.  The eigenvalues of $M_1$ can be found from
\begin{equation}
\begin{split}
\det\left(M_1 - \lambda I\right) &= \det\left(\left(K_p - \lambda I\right)\left(1 - \lambda\right)I - \epsilon^2I\right) \\
&= \det\left(\lambda^2I - \left(K_p + I\right)\lambda + K_p - \epsilon^2I\right) \\
&\begin{aligned}
 = &\left(\lambda^2 - \left(k_{p1} + 1\right)\lambda + k_{p1} - \epsilon^2\right) \left(\lambda^2 - \left(k_{p2} + 1\right)\lambda + k_{p2} - \epsilon^2\right)
\end{aligned}
\end{split}
\end{equation}
where $k_{pi}$ is the $i$th diagonal element of $K_p$.  From here we can see that to have positive eigenvalues, the following must be true:
\begin{equation}
k_{pi} - \epsilon^2 > 0
\end{equation}
for $i = \left\lbrace 1,\ 2 \right\rbrace$.

Next we consider the time derivative of the Lyapunov candidate:
\begin{equation}
\begin{split}
\dot{V}_a &= \mathbf{z}_a^TM_1\dot{\mathbf{z}}_a \\
&= \mathbf{e}_a^TK_p\dot{\mathbf{e}}_a + \epsilon\mathbf{e}_a^T\ddot{\mathbf{e}}_a + \epsilon\dot{\mathbf{e}}_a^T\dot{\mathbf{e}}_a + \dot{\mathbf{e}}_a^T\ddot{\mathbf{e}}_a
\end{split}
\end{equation}
We then substitute the swarm equation of motion \eqref{eq:EOM_DI} and control law \eqref{eq:swarmControlLaw_double}:
\begin{equation}
\begin{split}
\dot{V}_a &= \mathbf{e}_a^TK_p\dot{\mathbf{e}}_a + \epsilon\dot{\mathbf{e}}_a^T\dot{\mathbf{e}}_a + \left(\epsilon\mathbf{e}_a^T + \dot{\mathbf{e}}_a^T\right)\left( \ddot{\mathbf{a}}_d - \dot{\Phi}\dot{\mathbf{p}} \right. \\
&\phantom{=} - \Phi M\left(M^{-1}\Phi^{\dagger}\left(K_p\mathbf{e}_a + \left(K_d - C_a\right)\dot{\mathbf{e}}_a - \dot{\Phi}\dot{\mathbf{p}} + \ddot{\mathbf{a}}_d\right) \right. \\
&\phantom{=} \left.\left.\left(k_{sd}I + C\right)\Phi^{\dagger}\dot{\mathbf{a}}_d - \left(k_{sd}I + C\right)\dot{\mathbf{p}}\right)\right)
\end{split}
\end{equation}
This expression can then be simplified to
\begin{equation}
\begin{split}
\dot{V}_a &= -\epsilon\mathbf{e}_a^TK_p\mathbf{e}_a - \epsilon\mathbf{e}_a^TK_d\dot{\mathbf{e}}_a - \dot{\mathbf{e}}_a^T\left(K_d - \epsilon I\right)\dot{\mathbf{e}}_a \\
&= \mathbf{z}_a^T\begin{bmatrix} -\epsilon K_p & -\frac{\epsilon}{2}K_d \\ -\frac{\epsilon}{2}K_d & -K_d + \epsilon I\end{bmatrix} \mathbf{z}_a \\
&= \mathbf{z}_a^TM_2\mathbf{z}_a
\end{split}
\end{equation}
Therefore, for $\dot{V_a}$ to be negative, the eigenvalues of $M_2$ must be negative.  We can find the eigenvalues of $M_2$ by finding $\det\left(M_2 - \lambda I\right) = 0$:
\begin{equation}
\begin{split}
0 &= \det\left(\left(\epsilon K_p + \lambda I\right)\left(K_d  + \left(\lambda - \epsilon\right)I\right) - \frac{\epsilon^2}{4}K_d^2\right) \\
&\begin{aligned}=&\det\left(\lambda^2I + \left(\epsilon K_p + K_d - \epsilon I\right)\lambda + \epsilon K_pK_d - \epsilon^2K_p - \frac{\epsilon^2}{4}K_d^2\right) \end{aligned} \\
&\begin{aligned}=&\left(\lambda^2 + \left(\epsilon k_{p1} + k_{d1} - \epsilon\right)\lambda + \epsilon k_{p1}k_{d1} - \epsilon^2k_{p1}^2 - \frac{\epsilon^2}{4}k_{d1}^2\right) \\
& \left(\lambda^2 + \left(\epsilon k_{p2} + k_{d2} - \epsilon\right)\lambda + \epsilon k_{p2}k_{d2} - \epsilon^2k_{p2}^2 - \frac{\epsilon^2}{4}k_{d2}^2\right) \end{aligned}
\end{split}
\end{equation}
where $k_{di}$ is the $i$th diagonal element of $K_d$.  From here, we can see that to get negative real parts for the eigenvalues, the following conditions must be met:
\begin{align}
\epsilon k_{pi} + k_{di} - \epsilon &> 0 \\
\epsilon k_{pi}k_{di} - \epsilon^2k_{pi}^2 - \frac{\epsilon^2}{4}k_{di}^2 &> 0
\end{align}
for $i = \left\lbrace 1,\ 2 \right\rbrace$.

For this Lyapunov candidate, $\dot{V}_a \leq \dot{V}_a^{\prime} = \lambda_{max}(M_2)\norm{\mathbf{z}_a}$, and there are no coupling terms.

We can use the Lyapunov candidate $V_p$ from the proof of Theorem \ref{thm:LQR_SI_sys}.  If $k_1 > 0$, $k_2 > \frac{\dot{J}_{max}}{\cos\left(\theta_{max}\right)}$, and there exists some $\epsilon$ that satisfies \eqref{eq:DI_cond1}, \eqref{eq:DI_cond2}, \eqref{eq:DI_cond3}, and \eqref{eq:DI_cond4}, then by Theorem \ref{thm:GeneralStability}, there exists a Lyapunov candidate for the combined system $V = V_p + V_a$ that proves the origin of the combined system is locally asymptotically stable on a region of attraction
\begin{equation}
S = \left\lbrace\mathbf{z} \in \mathbb{R}^6\ \vert\ V\left(\mathbf{z}\right) \leq \frac{\theta_{max}^2}{2}\right\rbrace
\end{equation}
where $\mathbf{z} = \begin{bmatrix} \mathbf{z_p}^T & \mathbf{z}_a^T \end{bmatrix}^T$.
\end{proof}

\section{Proof of Theorem \ref{thm:ARISE_DI_sys}}
\label{apx:ProofARISE_DI}
\begin{proof}
We can use the same Lyapunove candidate for the swarm system from the proof for Theorem \ref{thm:LQR_DI_sys}:
\begin{equation}
V_a = \frac{1}{2}\mathbf{z}_a^T \begin{bmatrix} K_p & \epsilon I \\ \epsilon I & I \end{bmatrix} \mathbf{z}_a = \mathbf{z}_a^T M_1 \mathbf{z}_a
\end{equation}
We can show that this is positive definite and its derivative is negative definite if
\begin{align}
\epsilon k_{pi} + k_{di} - \epsilon &> 0 \\
\epsilon k_{pi}k_{di} - \epsilon^2k_{pi} - \frac{\epsilon}{4}k_{di}^2 &> 0 \\
k_{pi} + 1 &> 0 \\
k_{pi} - \epsilon^2 &> 0
\end{align}

We can use the Lyapunov candidate $V_p$ from the proof to Theorem \ref{thm:ARISE_SI_sys} for the parent system.  If  $\alpha_1 > \frac{1}{2}$, $\alpha_2 > 1$, $\beta > \zeta_{N_d} + \frac{1}{\alpha_2}\zeta_{\\dot{N}_d}$, $k_s > \frac{c_{max}}{4\eta_3}$, $\Gamma > 0$, $\Gamma^T = \Gamma$, and there exists some $\epsilon$ that satisfies \eqref{eq:DI_cond1}, \eqref{eq:DI_cond2}, \eqref{eq:DI_cond3}, and \eqref{eq:DI_cond4}, then by Theorem \ref{thm:GeneralStability}, there exists a Lyapunov candidate for the combined system $V = V_p + V_a$ that proves the origin of the combined system is locally stable where $\mathbf{z}_p, \mathbf{z}_a \to 0$ while $\mathbf{y}_p$ remains bounded on a region of attraction
\begin{equation}
S = \left\lbrace\mathbf{z} \in \mathbb{R}^9\ \vert\ \norm{\mathbf{z}}^2 \leq \frac{\eta_1}{\eta_2}\rho^2 \right\rbrace
\end{equation}
where $\mathbf{z} = \begin{bmatrix} \mathbf{z}_p^T & \mathbf{y}_p^T & \mathbf{z}_a^T \end{bmatrix}^T$ and $\eta_1$, $\eta_2$, and $\rho$ are defined in Theorem \ref{thm:ARISE_SI_sys}.
\end{proof}

\section{Proof of Theorem \ref{thm:LQR_noHomo_sys}}
\label{apx:ProofLQR_noHomo}
\begin{proof}
We break the swarm into two parts, the single integrator swarm with a state $z_{as}$, and the double integrator swarm $z_{ad}$.  We choose the Lyapunov candidate $V_a = V_{as} + V_{ad}$, where $V_{as} = \frac{1}{2}\mathbf{z}_{as}^T\mathbf{z}_{as}$ from the proof of Theorem \ref{thm:LQR_SI_sys}.  $V_{ad} = \frac{1}{2}\mathbf{z}_{ad}^TM_1\mathbf{z}_{ad}$ from the proof of Theorem \ref{thm:LQR_DI_sys}.  We can use the same proofs of these theorems to show that $V_a$ is positive definite under the given conditions.

We can show that $\dot{V}_a$ is negative definite because $\dot{V}_a = \dot{V}_{as} + \dot{V}_{sd}$, and we can show the these two terms are negative definite as in the proofs of Theorems \ref{thm:LQR_SI_sys} and \ref{thm:LQR_DI_sys}.  Also note that neither function has a coupling term, so $\dot{V}_{ac} = 0$ for the heterogeneous swarm as well.

We can use the Lyapunov candidate $V_p$ from the proof of Theorem \ref{thm:LQR_SI_sys}.  If $k_1 > 0$, $k_2 > \frac{\dot{J}_{max}}{\cos\left(\theta_{max}\right)}$, and there exists some $\epsilon$ that satisfies \eqref{eq:DI_cond1}, \eqref{eq:DI_cond2}, \eqref{eq:DI_cond3}, and \eqref{eq:DI_cond4}, then by Theorem \ref{thm:GeneralStability}, there exists a Lyapunov candidate for the combined system $V = V_p + V_a$ that proves the origin of the combined system is locally asymptotically stable on a region of attraction
\begin{equation}
S = \left\lbrace\mathbf{z} \in \mathbb{R}^6\ \vert\ V\left(\mathbf{z}\right) \leq \frac{\theta_{max}^2}{2}\right\rbrace
\end{equation}
where $\mathbf{z} = \begin{bmatrix} \mathbf{z_p}^T & \mathbf{z}_a^T \end{bmatrix}^T$.
\end{proof}

\section{Proof of Theorem \ref{thm:ARISE_noHomo_sys}}
\label{apx:ProofARISE_noHomo}
\begin{proof}
We can use the same Lyapunov candidate $V_a$ from the proof of Theorem \ref{thm:LQR_noHomo_sys} and the Lyapunov candidate $V_p$ from the proof of Theorem \ref{thm:ARISE_SI_sys}.  If  $\alpha_1 > \frac{1}{2}$, $\alpha_2 > 1$, $\beta > \zeta_{N_d} + \frac{1}{\alpha_2}\zeta_{\dot{N}_d}$, $k_s > \frac{c_{max}}{4\eta_3}$, $\Gamma > 0$, $\Gamma^T = \Gamma$, and there exists some $\epsilon$ that satisfies \eqref{eq:DI_cond1}, \eqref{eq:DI_cond2}, \eqref{eq:DI_cond3}, and \eqref{eq:DI_cond4}, then by Theorem \ref{thm:GeneralStability}, there exists a Lyapunov candidate for the combined system $V = V_p + V_a$ that proves the origin of the combined system is locally stable, where $\mathbf{z}_p, \mathbf{z}_a \to 0$ while $\mathbf{y}_p$ remains bounded on a region of attraction
\begin{equation}
S = \left\lbrace\mathbf{z} \in \mathbb{R}^9\ \vert\ \norm{\mathbf{z}}^2 \leq \frac{\eta_1}{\eta_2}\rho^2 \right\rbrace
\end{equation}
where $\mathbf{z} = \begin{bmatrix} \mathbf{z}_p^T & \mathbf{y}_p^T & \mathbf{z}_a^T \end{bmatrix}^T$ and $\eta_1$, $\eta_2$, and $\rho$ are defined in the proof of Theorem \ref{thm:ARISE_SI_sys}.
\end{proof}

\section{Supporting Lemmas}
\label{apx:lemmas}
\begin{lemma}
\label{lm:J_dot_is_bounded}
Given some $\dot{J}_{max} > 0$, the constraints on $J_{sd}$ given in theorem \ref{thm:LQR_SI_sys}, the PD control law given in \eqref{eq:ControlLawPD}, and the swarm control law given in \eqref{eq:swarmControlLaw}, we can find a bounded domain $D_{Jmax} \subset D$ where $\abs{\dot{J}_s} \leq \dot{J}_{max}$ for all $\mathbf{z} \in D_{Jmax}$ if $\pder{J_{sd}}{\tau_{sd}}$ is finite on $D$.
\end{lemma}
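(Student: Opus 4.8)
The plan is to derive an explicit expression for $\dot{J}_s$ in terms of the error state $\mathbf{z} = \begin{bmatrix}\theta & \dot{\theta} & e_\tau & e_J\end{bmatrix}^T$ and then bound it on a suitably small compact set. Since the abstract state tracks the dynamics imposed by the swarm control law \eqref{eq:swarmControlLaw}, its second component obeys $\dot{J}_s = k_J e_J + \dot{J}_{sd}$, where $k_J$ is the corresponding diagonal entry of $K$. The first term is already linear in a component of $\mathbf{z}$, so the remaining work is to bound $\dot{J}_{sd}$.

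First I would differentiate the manifold constraint $J_{sd} = J_{sd}\left(\tau_{sd}\right)$ by the chain rule to obtain $\dot{J}_{sd} = \pder{J_{sd}}{\tau_{sd}}\dot{\tau}_{sd}$, and then use the PD law \eqref{eq:ControlLawPD}, $\tau_{sd} = k_1\theta + k_2\dot{\theta}$, to write $\dot{\tau}_{sd} = k_1\dot{\theta} + k_2\ddot{\theta}$. Substituting the parent dynamics \eqref{eq:newDynamics} for $\ddot{\theta}$ reintroduces $\dot{J}_s$ on the right-hand side, producing an algebraic loop. Collecting the $\dot{J}_s$ terms yields a linear equation that I would solve for $\dot{J}_s$, giving
\begin{equation}
\dot{J}_s = \frac{k_J e_J + \pder{J_{sd}}{\tau_{sd}}\left(k_1\dot{\theta} + k_2\frac{-\cos\left(\theta\right)\tau_s - f_f\left(\dot{\theta}\right)}{J + J_s}\right)}{1 + \pder{J_{sd}}{\tau_{sd}}\frac{k_2\dot{\theta}}{J + J_s}}
\end{equation}
where $\tau_s = k_1\theta + k_2\dot{\theta} - e_\tau$ and $J_s = J_{sd} - e_J$ are both continuous and bounded as functions of $\mathbf{z}$ on any bounded set.

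The remaining argument is a continuity and compactness estimate. On a bounded domain, $\cos\theta$ is bounded by unity, $f_f$ is a bounded combination of $\tanh$ and linear terms, $J + J_s \geq J + J_{s,min} > 0$, and $\pder{J_{sd}}{\tau_{sd}}$ is finite by hypothesis, so the numerator is bounded. Evaluating the expression at the origin gives $\dot{J}_s = 0$, so by continuity of the right-hand side I can shrink the domain until $\abs{\dot{J}_s} \leq \dot{J}_{max}$, defining $D_{Jmax}$ as the resulting bounded set.

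The main obstacle is the algebraic loop itself: because $\ddot{\theta}$ depends on $\dot{J}_s$, the expression above is only well defined where the denominator $1 + \pder{J_{sd}}{\tau_{sd}}\frac{k_2\dot{\theta}}{J + J_s}$ does not vanish. I would handle this by restricting $D_{Jmax}$ to states with $\abs{\dot{\theta}}$ small enough that this denominator remains bounded away from zero, which is automatic near the origin where $\dot{\theta} = 0$ makes it equal to $1$. Since this restriction is compatible with simultaneously enforcing the numerator bound, a nonempty bounded $D_{Jmax}$ containing the origin exists, completing the argument.
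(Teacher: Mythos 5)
Your proof is correct, and up through the key expression it follows the paper's own route exactly: both arguments start from the closed-loop abstract dynamics \eqref{eq:swarmControlLaw}, differentiate the manifold relation by the chain rule, substitute the PD law \eqref{eq:ControlLawPD} and the parent dynamics \eqref{eq:newDynamics}, and resolve the algebraic loop by solving the resulting linear equation for $\dot{J}_s$ (your displayed formula is the paper's after multiplying through by $J+J_s$). Where you diverge is the concluding step. The paper bounds each term with explicit constants --- $\abs{f_f(\dot\theta)} \leq \gamma_1 + \gamma_6\abs{\dot\theta}$, $\abs{\pder{J_{sd}}{\tau_{sd}}} \leq \delta J_{max}$, $\abs{\tau_{sd}} \leq \tau_{max}$ --- to obtain the quantitative estimate $\abs{\dot{J}_s} \leq \left(\alpha_1\norm{\mathbf{z}}^2 + \alpha_2\norm{\mathbf{z}} + \alpha_3\right)/\left(\beta_1\norm{\mathbf{z}} + \beta_2\right)$ and defines $D_{Jmax}$ as the corresponding sublevel set; this computable form is what later lets Theorem \ref{thm:LQR_SI_sys} and the simulations extract a numerical $\dot{J}_{max}$ from the region-of-attraction radius. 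Your soft continuity-and-compactness ending forfeits that computability, but it buys two things the paper's version lacks. First, because you observe that the right-hand side vanishes at $\mathbf{z} = 0$, your $D_{Jmax}$ is guaranteed nonempty (it contains a neighborhood of the origin) for \emph{every} $\dot{J}_{max} > 0$, whereas the paper's constant $\alpha_3 = \delta J_{max}k_2\left(\tau_{max} + \gamma_1\right) > 0$ means its sublevel set is empty whenever $\dot{J}_{max} < \alpha_3/\beta_2$, and the paper never checks nonemptiness. Second, you explicitly keep the denominator $1 + \pder{J_{sd}}{\tau_{sd}}k_2\dot\theta/\left(J + J_s\right)$ bounded away from zero; the paper silently uses $\beta_1\norm{\mathbf{z}} + \beta_2$ as a lower bound on its denominator, which is invalid when $\pder{J_{sd}}{\tau_{sd}}k_2\dot\theta < 0$ --- a case that actually occurs for the quadratic manifold \eqref{eq:JLimit}, whose slope changes sign. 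One small caveat on your side: the continuity argument requires continuity, not mere finiteness, of $\pder{J_{sd}}{\tau_{sd}}$, so you use marginally more than the stated hypothesis; this is harmless here since the manifold is assumed continuous and is smooth in the example, but it is worth stating.
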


\begin{proof}
We start with the definition of $e_J$ from \eqref{eq:abstractErrorDef}, substitute \eqref{eq:JLimit}, and solve for $J_s$:
\begin{equation}
J_s = J_{sd} - e_J
\end{equation}
We then take the time derivative to get
\begin{equation}
\dot{J}_s = \dot{J}_{sd} - \dot{e}_J
\end{equation}
We substitute in the time derivative of \eqref{eq:ControlLawPD} and the swarm control law \eqref{eq:swarmControlLaw}, giving
\begin{equation}
\dot{J}_s = \pder{J_{sd}}{\tau_{sd}}\left(k_1\dot{\theta} + k_2\ddot{\theta}\right) - K_{s2}e_J
\end{equation}
where $k_1$ and $k_2$ are the elements of $K_{pd}$, and $K_{s2}$ is the second diagonal element of the $K$ gain matrix of the swarm controller.  Let $\abs{\pder{J_{sd}}{\tau_{sd}}} \leq \delta J_{max}$.

We now substitute the equations of motion for the parent system \eqref{eq:newDynamics}:
\begin{dmath}
\dot{J}_s = \pder{J_{sd}}{\tau_{sd}}\left(k_1\dot{\theta} + \frac{k_2}{J + J_{sd} - e_J}\left(-\cos\left(\theta\right)\tau_{sd} - \dot{J}_s\dot{\theta} - f_f\left(\dot{\theta}\right)\right)\right) - K_{s2}e_J
\end{dmath}
Now, we solve for $\dot{J}_s$ and substitute the parent system control law \eqref{eq:ControlLawPD}:
\begin{equation}
\dot{J}_s = \frac{\pder{J_{sd}}{\tau_{sd}}k_1\left(J + J_{Sd} - e_J\right)\dot{\theta} - \pder{J_{sd}}{\tau_{sd}}k_2\left(\cos\left(\theta\right)\tau_{Sd} + f_f\left(\dot{\theta}\right)\right) - K_{s2}e_J}{J + J_{Sd} - e_J + \pder{J_{sd}}{\tau_{sd}}k_2\dot{\theta}}
\end{equation}

The friction term can be bounded as follows:
\begin{equation}
\abs{f_f\left(\dot{\theta}\right)} \leq \gamma_1 + \gamma_6\abs{\dot{\theta}}
\end{equation}

We can now find $\abs{\dot{J}_s}$ and bound it with $\norm{\mathbf{z}}$:
\begin{equation}
\abs{\dot{J}_s} \leq \frac{\alpha_1\norm{\mathbf{z}}^2 + \alpha_2\norm{\mathbf{z}} + \alpha_3}{\beta_1\norm{\mathbf{z}} + \beta_2} \leq \dot{J}_{max}
\end{equation}
where
\begin{align}
\alpha_1 &= \frac{\delta J_{max}}{2}k_1 \\
\alpha_2 &= \delta J_{max}\left(k_1\left(J + J_{s,max}\right) + k_2\gamma_6\right) - K_{s2}\\
\alpha_3 &= \delta J_{max}k_2\left(\tau_{max} + \gamma_1\right) \\
\beta_1 &= \delta J_{max}k_2 \\
\beta_2 &= J + J_{s,min}
\end{align}

Thus we can conclude that there is a domain
\begin{equation}
D_{Jmax} = \left\lbrace\mathbf{z} \in D\ |\  \frac{\alpha_1\norm{\mathbf{z}}^2 + \alpha_2\norm{\mathbf{z}} + \alpha_3}{\beta_1\norm{\mathbf{z}} + \beta_2} \leq \dot{J}_{max}\right\rbrace
\end{equation}
\end{proof}

\begin{lemma}
\label{lm:NisBounded}
Given the following:
\begin{equation}
N = -Y_d\Gamma\dot{Y}_d^Tr + \dot{W} - \frac{1}{2}\dot{J}_sr + e_2 + \tau_d
\end{equation}
\begin{equation}
N_d = \dot{J}_s\ddot{\theta}_d + \left(J + J_s\right)\dddot{\theta}_d + \ddot{J}_s\dot{\theta}_d + \dot{J_s}\ddot{\theta}_d + \dot{f}_f\left(\dot{\theta}_d\right) - \dot{Y}_d\lambda + \dot{\tau}_d
\end{equation}
we can show that
\begin{equation}
\abs{\tilde{N}} \leq c_{max}\norm{\mathbf{z}}
\end{equation}
where
\begin{equation}
\mathbf{z} = \begin{bmatrix} e_1 & e_2 & r & e_\tau & e_J \end{bmatrix}
\end{equation}
\begin{equation}
\tilde{N} = N - N_d
\end{equation}
and $c_{max} > 0$.
\end{lemma}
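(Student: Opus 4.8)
The plan is to treat this as the standard auxiliary-term bound used in RISE-type designs (cf.\ Xian et al.\ \cite{Xian2004} and Patre et al.\ \cite{Patre2006}\cite{Patre2008}): $N_d$ is precisely $N$ evaluated along the desired trajectory, i.e.\ with every error component set to zero, so that $\tilde{N}=N-N_d$ vanishes at $\mathbf{z}=0$. The governing idea is then to apply the Mean Value Theorem to $\tilde{N}$, regarded as a continuously differentiable function of the composite configuration, and to bound the resulting Jacobian uniformly on the domain of interest so that its norm collapses into the single constant $c_{max}$.

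First I would substitute the derivative of $\dot{W}$ obtained from \eqref{eqn:aux_W} into the definition of $N$ and expand $\tilde{N}=N-N_d$ term by term. Several contributions are disposed of immediately: the term $-Y_d\Gamma\dot{Y}_d^Tr$ and the term $e_2$ are already proportional to entries of $\mathbf{z}$ with coefficients built only from the bounded desired signals, the term $-\tfrac{1}{2}\dot{J}_sr$ is proportional to $r$ with a coefficient controlled by $\dot{J}_s$, and the disturbance contribution cancels against the matching term in $N_d$. What remains are the smooth nonlinearities in $\dot{W}$ (the inertia factors and the friction $f_f$) evaluated at the actual arguments $\theta,\dot{\theta},\ddot{\theta},J_s,\dot{J}_s,\ddot{J}_s$ versus their desired-trajectory counterparts collected in $N_d$. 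Since each actual argument differs from its desired value by a linear combination of the entries of $\mathbf{z}$ (for instance $\theta-\theta_d=-e_1$, $\dot{\theta}-\dot{\theta}_d=\alpha_1e_1-e_2$, and $J_s-J_{sd}=-e_J$), the Mean Value Theorem yields $\tilde{N}=\left.\pder{\tilde{N}}{\chi}\right|_{\bar{\chi}}\left(\chi-\chi_d\right)$ for an intermediate point $\bar{\chi}$ on the segment joining the actual and desired configurations, whence $\abs{\tilde{N}}\le\norm{\left.\pder{\tilde{N}}{\chi}\right|_{\bar{\chi}}}\norm{\mathbf{z}}$.

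It then remains to show the Jacobian is bounded on the (compact, convex) domain. The desired signals are bounded by the hypothesis $\theta_d,\dot{\theta}_d,\ddot{\theta}_d,\dddot{\theta}_d\in\mathcal{L}_\infty$ of Theorem \ref{thm:ARISE_SI_sys}; the friction model is assembled from $\tanh$, whose derivatives have the form $\gamma_i^2\sech^2(\cdot)\le\gamma_i^2$ and are therefore globally bounded; the adaptive error $\tilde{\lambda}$ is bounded by the Lyapunov construction in Appendix \ref{apx:ProofARISE_SI}; and the inertia-related quantities $J+J_s$, $\dot{J}_s$, and $\ddot{J}_s$ are bounded on the constrained region by Lemma \ref{lm:J_dot_is_bounded} together with the uniform continuity and differentiability of $J_s$ and its derivatives established for the swarm controllers. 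Collecting these bounds produces the constant $c_{max}>0$, and convexity of the domain (the region of attraction is taken to be a ball) guarantees the connecting segment on which the Mean Value Theorem is applied stays inside it.

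I expect the main obstacle to be the bookkeeping of $\dot{W}$ and the verification that every coefficient appearing in the Jacobian genuinely remains bounded—in particular the terms carrying $\dot{J}_s$ and $\ddot{J}_s$, which couple $\tilde{N}$ back to the swarm dynamics and thus force me to invoke Lemma \ref{lm:J_dot_is_bounded} and the differentiability argument rather than appeal to any global Lipschitz property. Everything else is the routine algebra of differentiating \eqref{eqn:aux_W} and matching terms against $N_d$.
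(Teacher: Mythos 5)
Your proposal is correct in substance but reaches the bound by a different mechanism than the paper. You invoke the Mean Value Theorem on $\tilde{N}$ as a function of the composite configuration and bound the Jacobian uniformly on a bounded domain, which is the classical route of the RISE literature \cite{Xian2004} (where it generically yields a state-dependent gain $\rho\left(\norm{\mathbf{z}}\right)$, becoming a constant only after restriction to a compact set, as you correctly note). The paper instead performs the entire linearization by hand: it substitutes $\dot{W}$ from \eqref{eqn:aux_W}, rewrites every occurrence of $\dot{e}_1$, $\dot{e}_2$, $\ddot{e}_1$ via $\dot{e}_1 = e_2 - \alpha_1 e_1$ and $\dot{e}_2 = r - \alpha_2 e_2$ so that $\tilde{N}$ becomes an exact linear combination of $e_1$, $e_2$, $r$, bounds the friction difference using $0 \leq \sech\left(x\right) \leq 1$ and worst-case parameters $\hat{\gamma}_i$, and bounds $\dot{J}_s$, $\ddot{J}_s$ through the manifold \eqref{eq:JLimit} and the limits on $\tau_{sd}$ and its derivatives. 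This buys explicit constants $c_1$, $c_2$, $c_3$ with $c_{max} = \max\left\lbrace c_1, c_2, c_3\right\rbrace$, which matter downstream because the gain condition $k_s > \frac{c_{max}}{\eta_3}$ in Theorem \ref{thm:ARISE_SI_sys} must be checked quantitatively; your argument establishes existence of $c_{max}$ more compactly but leaves it non-constructive. Two small cautions on your version: first, $\tilde{N}$ contains the true (constant) parameter vector $\lambda$ but not $\hat{\lambda}$, so your appeal to boundedness of $\tilde{\lambda}$ via the Lyapunov construction of Appendix \ref{apx:ProofARISE_SI} is unnecessary---and fortunately so, since that construction uses this lemma and invoking it here would be circular; second, the boundedness of $\dot{J}_s$ and $\ddot{J}_s$ in the paper's proof of this lemma comes from the manifold relation and the $\tau_{sd}$ limits rather than from Lemma \ref{lm:J_dot_is_bounded} (which is stated for the PD controller), so your citation of that lemma should be replaced by the corresponding $\dot{J}_{max}$, $\ddot{J}_{max}$ bounds derived from \eqref{eq:JLimit}. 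With those repairs your argument is sound and delivers the same conclusion.
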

\begin{proof}
We start by expanding $N$ and substituting in the auxilliary equation \eqref{eqn:aux_W}:
\begin{equation}
\begin{split}
N &= -Y_d\Gamma\dot{Y}_dr + \dot{J}_s\ddot{\theta}_d + \dot{J}_s\left(\alpha_1\dot{e_1} + \alpha_2 e_2\right) + \left(J + J_s\right)\dddot{\theta}_d \\
&\phantom{=} + \left(J + J_s\right)\left(\alpha_1\ddot{e}_1 + \alpha_2\dot{e}_2\right) + \ddot{J}_s\dot{\theta}_d - \ddot{J}_s\dot{e}_1 \\
&\phantom{=} + \dot{J}\ddot{\theta}_d - \dot{J}_s\ddot{e}_1 + \dot{f}_f\left(\dot{\theta}\right) - \dot{Y}_d\lambda - \frac{1}{2}\dot{J}_sr + e_2 + \dot{\tau}_d
\end{split}
\end{equation}

We then use the definitions of $e_1$, $e_2$, and $r$ to put $\tilde{N}$ in terms of these variables:
\begin{equation}
\begin{split}
\tilde{N} &= N - N_d \\
&\begin{aligned}
=&-Y_d\Gamma\dot{Y}_dr + \dot{J}_s\left(\alpha_1\dot{e_1} + \alpha_2 e_2\right) + \left(J + J_s\right)\left(\alpha_1\ddot{e}_1 - \alpha_2\dot{e}_2\right) - \ddot{J}_s\dot{e}_1 - \dot{J}_s\ddot{e}_1 \\
&+ \dot{f}_f\left(\dot{\theta}\right) - \dot{f}_f\left(\dot{\theta}_d\right) - \frac{1}{2}\dot{J}_sr + e_2
\end{aligned} \\
&\begin{aligned} 
= &\left(-\frac{1}{2}\dot{J}_s - Y_d\Gamma\dot{Y}_d\right)r - \alpha_2\left(1 + \dot{J}_s\alpha_2\right)e_2 + \left(J + J_s\right)\dot{e}_2 + \left(\dot{J}_s\alpha_1 - \ddot{J}_s\right)\dot{e}_1 \\
&+ \left(\left(J + J_s\right)\alpha_1 - \dot{J}_s\right)\ddot{e}_1 + \dot{f}_f\left(\dot{\theta}\right) - \dot{f}_f\left(\dot{\theta}_d\right)
\end{aligned} \\
&\begin{aligned}
= &\left(-\frac{1}{2}\dot{J}_s - Y_d\Gamma\dot{Y}_d\right)r + \left(1 + \dot{J}_s\alpha_2\right)e_2 - \alpha_2\left(J + J_s\right)\left(r - \alpha_2e_2\right) + \left(\dot{J}_s\alpha_1 - \ddot{J}_s\right)\left(e_2 - \alpha_1e_1\right) \\
&+ \left(\left(J + J_s\right)\alpha_1 - \dot{J}_s\right)\left(r - \alpha_1\left(e_2 - \alpha_1e_1\right) - \alpha_2e_2\right) + \dot{f}_f\left(\dot{\theta}\right) - \dot{f}_f\left(\dot{\theta}_d\right)
\end{aligned} \\
&\begin{aligned}
= &\left(-\frac{1}{2}\dot{J}_s - Y_d\Gamma\dot{Y}_d + \left(J + J_s\right)\left(\alpha_1 - \alpha_2\right) - \dot{J}_s\right)r + \left(1 + \left(2\dot{J}_s + \left(J + J_s\right)\left(\alpha_2 - \alpha_1\right)\right)\alpha_2 + J_s\alpha_1 \right.\\
&- \left.\ddot{J}_s - \left(J + J_s\right)\alpha_1^2 + \dot{J}_s\alpha_1\right)e_2 + \left(\ddot{J}_s\alpha_1 - 2\dot{J}_s\alpha_1^2 + \left(J + J_s\right)\alpha_1^3\right)e_1 + \dot{f}_f\left(\dot{\theta}\right) - \dot{f}_f\left(\dot{\theta}_d\right)
\end{aligned}
\end{split}
\end{equation}

We can find limits on $J_s$ and its derivatives based on our chosen limits on $\tau_{sd}$ and its derivatives based on the relation given in \eqref{eq:JLimit}:
\begin{align}
\abs{\dot{J}_s} \leq \dot{J}_{max} &= \abs{\pder{J_{sd}}{\tau_d}\dot{\tau}_{max}} \\
\abs{\ddot{J}_s} \leq \ddot{J}_{max} &= \abs{\pder{J_{sd}}{\tau_d}\ddot{\tau}_{max} + \pdder{J_{sd}}{\tau_d}\dot{\tau}_{max}^2}
\end{align} 
where $\tau_{max}$, $\dot{\tau}_{max}$, and $\ddot{\tau}_{max}$ are the limits on $\tau_{sd}$ and its first and second derivative, respectively.  With these limits, we can say that
\begin{equation}
\begin{split}
\abs{\tilde{N}} &\leq \left(\frac{1}{2}\dot{J}_{max} + \abs{Y_d\Gamma\dot{Y}_d} + \left(J + J_{max}\right)\abs{\alpha_1 - \alpha_2} + \dot{J}_{max}\right)\abs{r} \\
&\phantom{=} + \abs{1 + \left(2\dot{J}_{max} + \left(J + J_{max}\right)\left(\alpha_2 - \alpha_1\right)\right)\alpha_2 + J_{max}\alpha_1}\abs{e_2} \\
&\phantom{=} + \abs{\ddot{J}_{max} + J\alpha_1^2 + \dot{J}_{max}\alpha_1}\abs{e_2} +
\left(\ddot{J}_{max}\alpha_1 + 2\dot{J}_{max}\alpha_1^2 + \left(J + \dot{J}_{max}\right)\alpha_1^3\right)\abs{e_1} \\
&\phantom{=} + \abs{\dot{f}_f\left(\dot{\theta}\right) - \dot{f}_f\left(\dot{\theta}_d\right)}
\end{split}
\end{equation}

We now consider specifically the term $\dot{f}_f\left(\dot{\theta}\right) - \dot{f}_f\left(\dot{\theta}_d\right)$.  For the parameters that are not adaptive, we use $\bar{\gamma}_i$ to represent the best \textit{a priori} estimate of the value of the parameter; then \eqref{eq:frictionModel} gives
\begin{equation}
\begin{split}
\bar{f}_f\left(\dot{\theta}\right) - f_f\left(\dot{\theta}_d\right) &= \gamma_1\left(\tanh\left(\gamma_2\dot{\theta}\right) - \tanh\left(\gamma_3\dot{\theta}\right)\right) + \gamma_4\tanh\left(\gamma_5\dot{\theta}\right) + \gamma_6\dot{\theta} \\
&\phantom{=} - \gamma_1\left(\tanh\left(\bar{\gamma}_2\dot{\theta}_d\right) - \tanh\left(\bar{\gamma}_3\dot{\theta}_d\right)\right) - \gamma_4\tanh\left(\bar{\gamma}_5\dot{\theta}_d\right) - \gamma_6\dot{\theta}_d
\end{split}
\end{equation}
Now we take the time derivative and get
\begin{equation}
\begin{split}
\dot{\bar{f}}_f\left(\dot{\theta}\right) - \dot{f}_f\left(\dot{\theta}_d\right) &= \ddot{\theta}\left(\gamma_1\left(\gamma_2\sech^2\left(\gamma_2\dot{\theta}\right) - \gamma_3\sech^2\left(\gamma_3\dot{\theta}\right)\right) +  \gamma_4\gamma_5\sech^2\left(\gamma_5\dot{\theta}\right) + \gamma_6\right) \\
&\phantom{=} - \ddot{\theta}_d\left(\gamma_1\left(\bar{\gamma}_2\sech^2\left(\bar{\gamma}_2\dot{\theta}_d\right) - \bar{\gamma}_3\sech^2\left(\bar{\gamma}_3\dot{\theta}_d\right)\right) + \gamma_4\bar{\gamma}_5\sech^2\left(\bar{\gamma}_5\dot{\theta}_d\right) +\gamma_6\right)
\end{split}
\end{equation}
We note that
\begin{equation}
0 \leq \sech\left(x\right) \leq 1
\end{equation}
and that $\gamma_2 > \gamma_3$ for this to be a positive function and thus work as a friction model.  Based on this knowledge, we can say
\begin{equation}
\begin{split}
\abs{\dot{\bar{f}}_f\left(\dot{\theta}\right) - \dot{f}_f\left(\dot{\theta}_d\right)} &\leq \ddot{\theta} \left(\gamma_1\gamma_2 - \gamma_1\gamma_3 + \gamma_4\gamma_5 + \gamma_6\right) - \ddot{\theta}_d\left(\gamma_1\bar{\gamma}_2 - \gamma_1\bar{\gamma}_3 + \gamma_4\bar{\gamma}_5 + \gamma_6\right) \\
&\leq \left(\gamma_1\hat{\gamma}_2 - \gamma_1\hat{\gamma}_3 +  \gamma_4\hat{\gamma}_5 + \gamma_6\right) \left(r - \alpha_1\left(e_2 - \alpha_1e_1\right) - \alpha_2e_2\right) \\
&\leq c \left(r - \alpha_1\left(e_2 - \alpha_1e_1\right) - \alpha_2e_2\right)
\end{split}
\end{equation}
where $\hat{\gamma}_2 = max\left\lbrace\gamma_2, \bar{\gamma}_2\right\rbrace$, $\hat{\gamma}_3 = min\left\lbrace\gamma_3, \bar{\gamma}_3\right\rbrace$, and $\hat{\gamma}_5 = max\left\lbrace\gamma_5, \bar{\gamma}_5\right\rbrace$.

Therefore, we can now say that
\begin{equation}
\abs{\tilde{N}} \leq c_1r + c_2e_2 + c_3e_1 \leq c_{max}\norm{\mathbf{z}}
\end{equation}
where 
\begin{equation}
c_1 = \left(\frac{1}{2}\dot{J}_{max} + \abs{Y_d\Gamma\dot{Y}_d} + \left(J + J_{max}\right)\abs{\alpha_1 - \alpha_2} + \dot{J}_{max} + c\right)
\end{equation}
\begin{equation}
\begin{split}
c_2 &= \abs{1 + \left(2\dot{J}_{max} + \left(J + J_{max}\right)\left(\alpha_2 - \alpha_1\right)\right)\alpha_2} \\
&\phantom{=} + \abs{J_{max}\alpha_1 + \ddot{J}_{max} + J\alpha_1^2 + \dot{J}_{max}\alpha_1 - \left(\alpha_1 + \alpha_2\right)c}
\end{split}
\end{equation}
\begin{equation}
c_3 = \left(\ddot{J}_{max}\alpha_1 + 2\dot{J}_{max}\alpha_1^2 + \left(J + \dot{J}_{max}\right)\alpha_1^3 + \alpha_1^2c\right)
\end{equation}
and $c_{max} = max\left\lbrace c_1, c_2, c_3 \right\rbrace$.  Note that the $Y_d\Gamma\dot{Y}_d$ term is bounded as it is a bounded function of the desired trajectory $\theta_d$ and its first three derivatives, which are bounded.
\end{proof}

\begin{lemma}
\label{lm:EisBounded}
Given the following:
\begin{equation}
E = \dot{\theta}\sin\left(\theta\right)e_\tau - \cos\left(\theta\right)\dot{e}_\tau
\end{equation}
\begin{equation}
E_d = \dot{\theta}_d e_\tau
\end{equation}
the following condition holds:
\begin{equation}
\abs{\tilde{E}} \leq \rho_E\left(\norm{\mathbf{z}}\right)\norm{\mathbf{z}}
\end{equation}
where
\begin{equation}
\mathbf{z} = \begin{bmatrix} e_1 & e_2 & r & e_\tau & e_J \end{bmatrix}
\end{equation}
\begin{equation}
\tilde{E} = E - E_d
\end{equation}
\end{lemma}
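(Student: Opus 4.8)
The plan is to substitute $E$ and $E_d$ directly into $\tilde{E} = E - E_d$, collect the two terms that share the factor $e_\tau$, and then bound each resulting factor by a quantity proportional to $\norm{\mathbf{z}}$. Writing
\begin{equation}
\tilde{E} = \left(\dot{\theta}\sin\left(\theta\right) - \dot{\theta}_d\right)e_\tau - \cos\left(\theta\right)\dot{e}_\tau,
\end{equation}
the key structural observation is that every term of $\tilde{E}$ carries at least one factor that belongs to $\mathbf{z}$ and vanishes at the origin, namely $e_\tau$ in the first term and $\dot{e}_\tau$ in the second. This is exactly what permits a bound of the stated multiplicative form $\rho_E\left(\norm{\mathbf{z}}\right)\norm{\mathbf{z}}$ even though the coefficient $\dot{\theta}\sin\theta - \dot{\theta}_d$ need \emph{not} vanish at the origin (along the desired trajectory it tends to $\dot{\theta}_d\left(\sin\theta_d - 1\right)$). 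Consequently $\rho_E$ is allowed to contain constant terms coming from the bounded desired trajectory, and need only be a nondecreasing (in fact invertible) function, consistent with the appearance of $\rho_E^{-1}$ in Theorem \ref{thm:ARISE_SI_sys}.

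Next I would express the error-dependent coefficient in terms of the components of $\mathbf{z}$. Using $e_2 = \dot{e}_1 + \alpha_1 e_1$ with $e_1 = \theta_d - \theta$, I write $\dot{\theta} = \dot{\theta}_d - e_2 + \alpha_1 e_1$, so that $\abs{\dot{\theta}} \leq \zeta_{\dot{\theta}_d} + \left(1 + \alpha_1\right)\norm{\mathbf{z}_p}$, where $\zeta_{\dot{\theta}_d}$ is the $\mathcal{L}_\infty$ bound on $\dot{\theta}_d$ assumed in Theorem \ref{thm:ARISE_SI_sys} and $\mathbf{z}_p = \begin{bmatrix} e_1 & e_2 & r \end{bmatrix}^T$. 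Combined with $\abs{\sin\theta} \leq 1$ and $\abs{\cos\theta} \leq 1$, this gives $\abs{\dot{\theta}\sin\theta - \dot{\theta}_d} \leq 2\zeta_{\dot{\theta}_d} + \left(1 + \alpha_1\right)\norm{\mathbf{z}_p}$, which is a bounded (affine in $\norm{\mathbf{z}_p}$) coefficient.

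The one nonroutine step, and the main obstacle, is handling $\dot{e}_\tau$, which is not itself a component of $\mathbf{z}$. Here I would invoke the closed-loop abstract dynamics of the single-integrator swarm: differentiating $\mathbf{e}_a = \mathbf{a}_d - \mathbf{a}$ and substituting the enforced dynamics \eqref{eq:swarmControlLaw} yields $\dot{\mathbf{e}}_a = -K\mathbf{e}_a$, hence $\dot{e}_\tau = -K_{s1}e_\tau$ with $K_{s1}>0$ the first diagonal entry of $K$. It is precisely this relation that ties the otherwise-free quantity $\dot{e}_\tau$ to the state, giving $\abs{\dot{e}_\tau} \leq K_{s1}\norm{\mathbf{z}_a}$ and $\abs{e_\tau} \leq \norm{\mathbf{z}_a}$, where $\mathbf{z}_a = \begin{bmatrix} e_\tau & e_J \end{bmatrix}^T$; without the swarm controller having already driven the abstract error to this first-order form, the multiplicative structure would fail. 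Collecting the pieces,
\begin{equation}
\abs{\tilde{E}} \leq \left[\left(2\zeta_{\dot{\theta}_d} + K_{s1}\right) + \left(1 + \alpha_1\right)\norm{\mathbf{z}_p}\right]\norm{\mathbf{z}_a},
\end{equation}
so defining $\rho_E\left(s\right) = \left(2\zeta_{\dot{\theta}_d} + K_{s1}\right) + \left(1 + \alpha_1\right)s$ — strictly increasing and hence invertible — establishes the sharper bound $\abs{\tilde{E}} \leq \rho_E\left(\norm{\mathbf{z}_p}\right)\norm{\mathbf{z}_a}$ actually used in the proof of Theorem \ref{thm:ARISE_SI_sys}. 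The lemma's coarser statement then follows immediately from $\norm{\mathbf{z}_p}, \norm{\mathbf{z}_a} \leq \norm{\mathbf{z}}$ and monotonicity of $\rho_E$.
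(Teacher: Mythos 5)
Your proposal is correct and follows the same skeleton as the paper's own proof: substitute the definitions of $E$ and $E_d$, eliminate $\dot{e}_\tau$ via the closed-loop abstract dynamics $\dot{\mathbf{e}}_a = -K\mathbf{e}_a$ implied by \eqref{eq:swarmControlLaw} (the paper writes this as $\dot{e}_\tau = \dot{\tau}_{sd} - k_1 e_\tau - \dot{\tau}_{sd} = -k_1 e_\tau$ with $k_1$ the first diagonal entry of $K$, your $K_{s1}$), rewrite $\dot{\theta} = \dot{\theta}_d - \dot{e}_1$ with $\dot{e}_1 = e_2 - \alpha_1 e_1$, and bound the trigonometric factors by $1$. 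The one substantive divergence is your treatment of the coefficient $\dot{\theta}\sin\theta - \dot{\theta}_d$: the paper's chain of inequalities replaces $\dot{\theta}\sin(\theta)e_\tau$ by $\dot{\theta}e_\tau$ and $\cos(\theta)k_1 e_\tau$ by $k_1 e_\tau$ without absolute values, which silently assumes favorable signs but buys an exact cancellation of the $\dot{\theta}_d$ terms, yielding $\rho_E(s) = k_1 + (1+\alpha_1)s$ with no trajectory-dependent constant. You correctly observe that with absolute values this cancellation fails --- $\dot{\theta}\sin\theta - \dot{\theta}_d = \dot{\theta}_d(\sin\theta - 1) - \dot{e}_1\sin\theta$ does not vanish at the origin --- and your rigorous version picks up the extra constant $2\zeta_{\dot{\theta}_d}$ in $\rho_E$. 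This is strictly more careful than the paper's argument; the cost is only a more conservative region-of-attraction estimate wherever $\rho_E^{-1}$ enters Theorem \ref{thm:ARISE_SI_sys}. You also derive the sharper mixed form $\abs{\tilde{E}} \leq \rho_E\left(\norm{\mathbf{z}_p}\right)\norm{\mathbf{z}_a}$, which is in fact the form invoked in Appendix \ref{apx:ProofARISE_SI}, and correctly recover the lemma's coarser statement from monotonicity of $\rho_E$.
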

\begin{proof}
We begin by substituting $E$ and $E_d$ into the definition of $\tilde{E}$ and applying the control law \eqref{eq:swarmControlLaw}:
\begin{equation}
\begin{split}
\tilde{E} &= E - E_d \\ 
&= \dot{\theta}\sin\left(\theta\right)e_\tau - \cos\left(\theta\right)\dot{e}_\tau - \dot{\theta}_de_\tau \\
&= \dot{\theta}\sin\left(\theta\right)e_\tau - \cos\left(\theta\right)\left(\dot{\tau}_{sd} - k_1e_\tau - \dot{\tau}_sd\right) - \dot{\theta}_de_\tau \\
&=\dot{\theta}\sin\left(\theta\right)e_\tau + \cos\left(\theta\right)k_1e_\tau - \dot{\theta}_de_\tau
\end{split}
\end{equation}
We can then show that
\begin{equation}
\begin{split}
\tilde{E} &\leq \dot{\theta}e_\tau + k_1e_\tau - \dot{\theta}_de_\tau \\
&\leq \dot{\theta}_de_\tau - \dot{e}_1e_\tau + k_1e_\tau - \dot{\theta}_de_\tau \\
&\leq \left(k_1 -e_2 + \alpha_1e_1\right)e_\tau
\end{split}
\end{equation}
\begin{equation}
\begin{split}
\abs{\tilde{E}} &\leq \left(k_1 + \left(1 + \alpha_1\right)\norm{\mathbf{z}}\right)\norm{\mathbf{z}} \\
&\leq \rho_E\left(\norm{\mathbf{z}}\right)\norm{\mathbf{z}}
\end{split}
\end{equation}
where $k_1$ is the first term on the diagonal of $K$.
\end{proof}

\begin{lemma}
\label{lm:PIsPosDef}
Given the function
\begin{equation}
L\left(t\right) = r\left(N_d\left(t\right) - \beta sgn\left(e_2\left(t\right)\right)\right)
\end{equation}
If $\beta > \zeta_{N_d} + \frac{1}{\alpha_2}\zeta_{\dot{N}_d}$, then
\begin{equation}
\displaystyle\int_{t_0}^tL\left(\tau\right)d\tau \leq \beta\norm{e_2\left(t_0\right)} - e_2\left(t_0\right)N_d\left(t_0\right)
\label{eq:Lemma1Inequality}
\end{equation}
\end{lemma}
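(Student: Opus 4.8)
The plan is to follow the standard RISE-type argument: eliminate $r$ in favor of $e_2$ and exploit the exact differentials hidden in the sign terms. First I would substitute $r = \dot{e}_2 + \alpha_2 e_2$ into $L$ and split the integral into four pieces,
\begin{equation}
\int_{t_0}^t L\,d\tau = \int_{t_0}^t \dot{e}_2 N_d\,d\tau + \alpha_2\int_{t_0}^t e_2 N_d\,d\tau - \beta\int_{t_0}^t \dot{e}_2\,\mathrm{sgn}\left(e_2\right)d\tau - \alpha_2\beta\int_{t_0}^t e_2\,\mathrm{sgn}\left(e_2\right)d\tau .
\end{equation}
The two sign integrals collapse immediately using $\dot{e}_2\,\mathrm{sgn}\left(e_2\right) = \frac{d}{d\tau}\abs{e_2}$ and $e_2\,\mathrm{sgn}\left(e_2\right) = \abs{e_2}$, producing $-\beta\left(\abs{e_2\left(t\right)} - \abs{e_2\left(t_0\right)}\right)$ and $-\alpha_2\beta\int_{t_0}^t\abs{e_2}\,d\tau$ respectively.

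Next I would integrate the first piece by parts, $\int_{t_0}^t \dot{e}_2 N_d\,d\tau = e_2\left(t\right)N_d\left(t\right) - e_2\left(t_0\right)N_d\left(t_0\right) - \int_{t_0}^t e_2 \dot{N}_d\,d\tau$, which furnishes the target boundary term $-e_2\left(t_0\right)N_d\left(t_0\right)$ alongside a terminal term $e_2\left(t\right)N_d\left(t\right)$ and a new integral in $\dot{N}_d$. Collecting all the remaining integrals over $[t_0,t]$ leaves a single term $\int_{t_0}^t\left(\alpha_2 e_2 N_d - e_2\dot{N}_d - \alpha_2\beta\abs{e_2}\right)d\tau$, which I would bound above using $\abs{N_d}\le\zeta_{N_d}$ and $\abs{\dot{N}_d}\le\zeta_{\dot{N}_d}$ by $\alpha_2\left(\zeta_{N_d} + \frac{1}{\alpha_2}\zeta_{\dot{N}_d} - \beta\right)\int_{t_0}^t\abs{e_2}\,d\tau$. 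The gain condition $\beta > \zeta_{N_d} + \frac{1}{\alpha_2}\zeta_{\dot{N}_d}$ renders this coefficient negative, so the entire integral contribution is nonpositive.

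Finally I would dispatch the terminal boundary pair $e_2\left(t\right)N_d\left(t\right) - \beta\abs{e_2\left(t\right)}$, bounding $e_2\left(t\right)N_d\left(t\right)\le\abs{e_2\left(t\right)}\zeta_{N_d}$ so that this pair is at most $\abs{e_2\left(t\right)}\left(\zeta_{N_d} - \beta\right)\le 0$, since $\beta>\zeta_{N_d}$ is already implied by the hypothesis. What survives is exactly $\beta\abs{e_2\left(t_0\right)} - e_2\left(t_0\right)N_d\left(t_0\right)$, which is the claimed bound \eqref{eq:Lemma1Inequality}. I expect the main obstacle to be bookkeeping rather than anything deep: the delicate points are justifying $\frac{d}{d\tau}\abs{e_2} = \dot{e}_2\,\mathrm{sgn}\left(e_2\right)$ across instants where $e_2$ changes sign (handled by absolute continuity of $e_2$, with the identity holding almost everywhere), and recognizing that the single integration by parts is precisely the step that converts the nondifferentiable $\dot{N}_d$ coupling into the $\frac{1}{\alpha_2}\zeta_{\dot{N}_d}$ slack appearing in the threshold on $\beta$.
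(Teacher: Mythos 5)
Your proposal is correct and follows essentially the same route as the paper's proof (itself adapted from Xian et al.\ and Patre et al.): substitute $r = \dot{e}_2 + \alpha_2 e_2$, collapse the sign terms via $\dot{e}_2\,\mathrm{sgn}(e_2) = \tfrac{d}{d\tau}\abs{e_2}$, integrate $\int \dot{e}_2 N_d\,d\tau$ by parts to trade $N_d$ for $\dot{N}_d$, and bound everything using $\zeta_{N_d}$ and $\zeta_{\dot{N}_d}$ under the stated condition on $\beta$. If anything, your version is slightly more careful than the paper's, which carries a sign slip on the $\dot{N}_d$ term in an intermediate display (harmless, since absolute values absorb it) and does not remark on the almost-everywhere justification of the $\tfrac{d}{d\tau}\abs{e_2}$ identity that you note.
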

\begin{proof}
\begin{equation}
\begin{split}
\displaystyle\int_{t_0}^tL\left(\tau\right)d\tau &= \displaystyle\int_{t_0}^tr\left(N_d\left(\tau\right) - \beta sgn\left(e_2\left(\tau\right)\right)\right)d\tau \\
&= \displaystyle\int_{t_0}^t\dot{e}_2N_d + \alpha_2e_2N_d - \dot{e}_2\beta sgn\left(e_2\right) - \alpha_2e_2\beta sgn\left(e_2\right)d\tau \\
&= \displaystyle\int_{t_0}^t\alpha_2e_2\left(N_d + \beta sgn\left(e_2\right)\right)d\tau + \displaystyle\int_{t_0}^t\dot{e}_2N_dd\tau - \displaystyle\int_{t_0}^t\dot{e}_2\beta sgn\left(e_2\right)d\tau
\end{split}
\end{equation}
Using integration by parts,
\begin{equation}
\begin{split}
\displaystyle\int_{t_0}^tL\left(\tau\right)d\tau &= \displaystyle\int_{t_0}^t\alpha_2e_2\left(N_d + \beta sgn\left(e_2\right)\right)d\tau + e_2N_d\rvert_{t_0}^t - \displaystyle\int_{t_0}^te_2\dot{N}_dd\tau - \displaystyle\int_{t_0}^t\dot{e}_2\beta sgn\left(e_2\right)d\tau \\
&= \displaystyle\int_{t_0}^t\alpha_2e_2\left(N_d + \frac{1}{\alpha_2}\dot{N}_d - \beta sgn\left(e_2\right)\right)d\tau + e_2\left(t\right)N_d\left(t\right) - e_2\left(t_0\right)N_d\left(t_0\right) - \beta\abs{e_2\left(t\right)} + \beta\abs{e_2\left(t_0\right)} \\
&\leq \displaystyle\int_{t_0}^t\alpha_2\abs{e_2}\left(\abs{N_d} + \frac{1}{\alpha_2}\abs{\dot{N}_d} - \beta\right)d\tau + \left(\abs{N_d\left(t\right)} - \beta\right)\abs{e_2\left(t\right)} + \beta\abs{e_2\left(t_0\right)} - e_2\left(t_0\right)N_d\left(t_0\right)
\end{split}
\end{equation}

From here we can see that if $\beta$ satisfies the given conditions, \eqref{eq:Lemma1Inequality} holds.  This Lemma and proof are adopted from \cite{Xian2004} and \cite{Patre2006}.
\end{proof}

% use section* for acknowledgment
%\section*{Acknowledgment}
%The authors would like to thank...

% Can use something like this to put references on a page
% by themselves when using endfloat and the captionsoff option.
\ifCLASSOPTIONcaptionsoff
  \newpage
\fi

\bibliographystyle{IEEEtran}
\bibliography{ControllingAParentSystem_citations}

% biography section
% 
% If you have an EPS/PDF photo (graphicx package needed) extra braces are
% needed around the contents of the optional argument to biography to prevent
% the LaTeX parser from getting confused when it sees the complicated
% \includegraphics command within an optional argument. (You could create
% your own custom macro containing the \includegraphics command to make things
% simpler here.)
%\begin{IEEEbiography}[{\includegraphics[width=1in,height=1.25in,clip,keepaspectratio]{mshell}}]{Michael Shell}
% or if you just want to reserve a space for a photo:

\begin{IEEEbiography}
[{\includegraphics[width=1in,height=1.25in, clip,keepaspectratio]{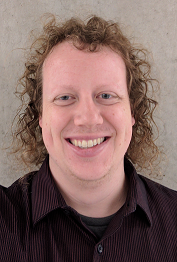}}]{Kyle L. Crandall}
(M\textquotesingle 14) received the B.S. and M.S. degrees in mechanical engineering, with a focus on robotics, in 2015 from the University of Utah, Salt Lake City.  He is currently a Ph.D. candidate at The George Washington University, Washington, DC.  His research interests are in control of multi-agent systems.  He is a student member of the IEEE.
\end{IEEEbiography}

\begin{IEEEbiography}[{\includegraphics[width=1in,height=1.25in,clip,keepaspectratio]{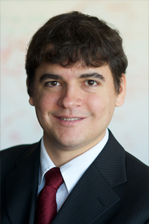}}]{Adam M. Wickenheiser}
(M\textquotesingle 08) received the B.S. degree in mechanical engineering (with a minor in applied mathematics) in 2002 and the M.S. and Ph.D. degrees in aerospace engineering from Cornell University in 2006 and 2008, respectively.

Since 2018, he has been on the Faculty of the Department of Mechanical Engineering at the University of Delaware, where he is currently an Associate Professor.  From 2010-2018, he was an Assistant Professor in the Department of Mechanical \& Aerospace Engineering at the George Washington University. From 2008-2009, he was a postdoctoral associate with the Sibley School of Mechanical \& Aerospace Engineering at Cornell University. His current research interests include bio-inspired flight, multi-functional materials and systems, and energy harvesting for autonomous systems.

Prof. Wickenheiser has served as the Chair of the Energy Harvesting Technical Committee of the American Society of Mechanical Engineers (ASME) from 2014-2016, and is presently a member of the International Organizing Committee for the International Conference on Adaptive Structures and Technologies (ICAST). He was the recipient of the 2011 Intelligence Community Young Investigator Award.
\end{IEEEbiography}

% insert where needed to balance the two columns on the last page with
% biographies
%\newpage

% You can push biographies down or up by placing
% a \vfill before or after them. The appropriate
% use of \vfill depends on what kind of text is
% on the last page and whether or not the columns
% are being equalized.
%\vfill

% Can be used to pull up biographies so that the bottom of the last one
% is flush with the other column.
\enlargethispage{-5in}

\end{document}